\newtheorem{definition}{Definition}
\newtheorem{theorem}{Theorem}
\newtheorem{lemma}{Lemma}
\newtheorem{property}{Property}
\newtheorem{assumption}{Assumption}
\begin{document}

\twocolumn[

\aistatstitle{Tractable and Scalable Schatten Quasi-Norm Approximations for Rank Minimization}

\aistatsauthor{ Fanhua Shang \And Yuanyuan Liu \And James Cheng }
\aistatsaddress{ Department of Computer Science and Engineering, The Chinese University of Hong Kong } ]

\begin{abstract}
The Schatten quasi-norm was introduced to bridge the gap between the trace norm and rank function. However, existing algorithms are too slow or even impractical for large-scale problems. Motivated by the equivalence relation between the trace norm and its bilinear spectral penalty, we define two tractable Schatten norms, i.e.\ the bi-trace and tri-trace norms, and prove that they are in essence the Schatten-$1/2$ and $1/3$ quasi-norms, respectively. By applying the two defined Schatten quasi-norms to various rank minimization problems such as MC and RPCA, we only need to solve much smaller factor matrices. We design two efficient linearized alternating minimization algorithms to solve our problems and establish that each bounded sequence generated by our algorithms converges to a critical point. We also provide the restricted strong convexity (RSC) based and MC error bounds for our algorithms. Our experimental results verified both the efficiency and effectiveness of our algorithms compared with the state-of-the-art methods.
\end{abstract}

\section{Introduction}
The rank minimization problem has a wide range of applications in matrix completion (MC)~\cite{candes:emc}, robust principal component analysis (RPCA)~\cite{candes:rpca}, low-rank representation~\cite{liu:lrr}, multivariate regression~\cite{hsieh:nnm} and multi-task learning~\cite{argyriou:mtl}. To efficiently solve these problems, a principled way is to relax the rank function by its convex envelope~\cite{fazel:rmh, recht:nnm}, i.e., the trace norm (also known as the nuclear norm), which also leads to a convex optimization problem. In fact, the trace norm penalty is an $\ell_{1}$-norm regularization of the singular values, and thus it motivates a low-rank solution. However, \cite{fan:ve} pointed out that the $\ell_{1}$-norm over-penalizes large entries of vectors, and results in a biased solution. Similar to the $\ell_{1}$-norm case, the trace norm penalty shrinks all singular values equally, which also leads to over-penalize large singular values. In other words, the trace norm may make the solution deviate from the original solution as the $\ell_{1}$-norm does. Compared with the trace norm, although the Schatten-${p}$ quasi-norm for $0\!<\!p\!<\!1$ is non-convex, it gives a closer approximation to the rank function. Therefore, the Schatten-${p}$ quasi-norm minimization has attracted a great deal of attention in images recovery~\cite{lu:irsvm, lu:lrm}, collaborative filtering~\cite{nie:rmc} and MRI analysis~\cite{majumdar:mri}.

\cite{mohan:mrm} and \cite{lai:irls} proposed iterative reweighted lease squares (IRLS) algorithms to approximate associated Schatten-${p}$ quasi-norm minimization problems. In addition, \cite{lu:lrm} proposed an iteratively reweighted nuclear norm (IRNN) algorithm to solve non-convex surrogate minimization problems. In some recent work~\cite{marjanovic:mc, nie:lrmr, nie:rmc, lu:irsvm, lu:lrm}, the Schatten-${p}$ quasi-norm has been shown to be empirically superior to the trace norm. Moreover, \cite{zhang:ncmr} theoretically proved that the Schatten-${p}$ quasi-norm minimization with small $p$ requires significantly fewer measurements than the convex trace norm minimization. However, all existing algorithms have to be solved iteratively and involve singular value decomposition (SVD) or eigenvalue decomposition (EVD) in each iteration. Thus they suffer from high computational cost and are even not applicable for large-scale problems~\cite{shang:snm}.

In contrast, the trace norm has a scalable equivalent formulation, the bilinear spectral regularization~\cite{srebro:mmmf, recht:nnm}, which has been successfully applied in many large-scale applications, such as collaborative filtering~\cite{mitra:lsmf, aravkin:rpca}. Since the Schatten-${p}$ quasi-norm is equivalent to the $\ell_{p}$ quasi-norm on the singular values, it is natural to ask the following question: can we design an equivalent matrix factorization form to some cases of the Schatten-${p}$ quasi-norm, e.g., $p\!=\!1/2$ or $1/3$?

In this paper we first define two tractable Schatten norms, the bi-trace (Bi-tr) and tri-trace (Tri-tr) norms. We then prove that they are in essence the Schatten-${1/2}$ and ${1/3}$ quasi-norms, respectively, for solving whose minimization we only need to perform SVDs on much smaller factor matrices to replace the large matrices in the algorithms mentioned above. Then we design two efficient linearized alternating minimization algorithms with guaranteed convergence to solve our problems. Finally, we provide the sufficient condition for exact recovery, and the restricted strong convexity (RSC) based and MC error bounds.

\section{Notations and Background}
The Schatten-${p}$ norm ($0\!<\!p\!<\!\infty$) of a matrix $X\!\in\! \mathbb{R}^{m\times n}$ ($m\geq n$) is defined as
\vspace{-3mm}
\begin{displaymath}
\|X\|_{S_{p}}=\left(\sum\nolimits_{i=1}^{n}\sigma^{p}_{i}(X)\right)^{1/p},
\end{displaymath}
\vspace{-6mm}

where $\sigma_{i}(X)$ denotes the $i$-th singular value of $X$. For $p\!\geq\!1$ it defines a natural norm, for instance, the Schatten-${1}$ norm is the so-called trace norm, $\|X\|_{\textup{tr}}$, whereas for $p<1$ it defines a quasi-norm. As the non-convex surrogate for the rank function, the Schatten-${p}$ quasi-norm with $0\!<\!p\!<\!1$ is the better approximation of the matrix rank than the trace norm~\cite{zhang:ncmr} (analogous to the superiority of the $\ell_{p}$ quasi-norm to the $\ell_{1}$-norm~\cite{lai:irls, foucart:lp}).

We mainly consider the following Schatten quasi-norm minimization problem to recover a low-rank matrix from a small set of linear observations, $b\in\mathbb{R}^{l}$,
\vspace{-2mm}
\begin{equation}\label{Nb1}
\min_{X\in\mathbb{R}^{m\times n}} \left\{\|X\|^{p}_{S_{p}}:\,\mathcal{A}(X)=b\right\},
\end{equation}
\vspace{-6mm}

where $\mathcal{A}:\mathbb{R}^{m\times n}\!\rightarrow\! \mathbb{R}^{l}$ is a linear measurement operator. Alternatively, the Lagrangian version of \eqref{Nb1} is
\vspace{-2mm}
\begin{equation}\label{Nb2}
\min_{X\in\mathbb{R}^{m\times n}} \left\{\|X\|^{p}_{S_{p}}+\frac{1}{\mu}f\!\left(\mathcal{A}(X)-b\right)\right\},
\end{equation}
\vspace{-6mm}

where $\mu\!>\!0$ is a regularization parameter, and the loss function $f(\cdot):\mathbb{R}^{l}\!\rightarrow\! \mathbb{R}$ generally denotes certain measurement for characterizing the loss term $\mathcal{A}(X)-b$ (for instance, $\mathcal{A}$ is the linear projection operator $\mathcal{P}_{\Omega}$, and $f(\cdot)\!=\!\|\!\cdot\!\|^{2}_{2}$ in MC problems~\cite{marjanovic:mc, mohan:mrm, liu:nnr, lu:lrm}).

The Schatten-$p$ quasi-norm minimization problems \eqref{Nb1} and \eqref{Nb2} are non-convex, non-smooth and even non-Lipschitz~\cite{bian:ipa}. Therefore, it is crucial to develop efficient algorithms that are specialized to solve some alternative formulations of Schatten-${p}$ quasi-norm minimization \eqref{Nb1} or \eqref{Nb2}. So far, only few algorithms, such as IRLS~\cite{lai:irls, mohan:mrm} and IRNN~\cite{lu:lrm}, have been developed to solve such problems. In addition, since all existing Schatten-${p}$ quasi-norm minimization algorithms involve SVD or EVD in each iteration, they suffer from a high computational cost of $O(n^{2}m)$, which severely limits their applicability to large-scale problems.

\section{Tractable Schatten Quasi-Norm Minimization}
\cite{srebro:mmmf} and \cite{recht:nnm} pointed out that the trace norm has the following equivalent non-convex formulations.
\begin{lemma}\label{lem1}
Given a matrix $X\in \mathbb{R}^{m\times n}$ with $\textrm{rank}(X)=r\leq d$, the following holds:
\vspace{-2mm}
\begin{equation*}
\begin{split}
\|X\|_{\textup{tr}}&=\min_{U\in\mathbb{R}^{m\times d},V\in\mathbb{R}^{n\times d}:X=UV^{T}}\|U\|_{F}\|V\|_{F}\\
&=\min_{U,V:X=UV^{T}}\frac{\|U\|^{2}_{F}+\|V\|^{2}_{F}}{2}.
\end{split}
\end{equation*}
\end{lemma}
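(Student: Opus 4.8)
The plan is to prove both equalities by sandwiching each minimum between a matching upper and lower bound, with $\|X\|_{\textup{tr}}=\sum_{i=1}^{r}\sigma_{i}(X)$ as the common target value. First I would establish the upper bounds by exhibiting an explicit factorization that attains the value. Taking a thin SVD $X=P\Sigma Q^{T}$ with $P\in\mathbb{R}^{m\times r}$, $Q\in\mathbb{R}^{n\times r}$ having orthonormal columns and $\Sigma=\textup{diag}(\sigma_{1},\dots,\sigma_{r})$, I set $U=[\,P\Sigma^{1/2},\,0\,]\in\mathbb{R}^{m\times d}$ and $V=[\,Q\Sigma^{1/2},\,0\,]\in\mathbb{R}^{n\times d}$, padding with zero columns since $r\le d$. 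Then $UV^{T}=P\Sigma Q^{T}=X$, and since $P^{T}P=Q^{T}Q=I_{r}$ we get $\|U\|^{2}_{F}=\textup{tr}(\Sigma)=\|V\|^{2}_{F}=\|X\|_{\textup{tr}}$. Hence $\|U\|_{F}\|V\|_{F}=\|X\|_{\textup{tr}}$ and $(\|U\|^{2}_{F}+\|V\|^{2}_{F})/2=\|X\|_{\textup{tr}}$, so both minima are at most $\|X\|_{\textup{tr}}$.

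Next comes the lower bound, which I expect to be the main obstacle since it must hold for \emph{every} admissible factorization $X=UV^{T}$. I would prove $\|X\|_{\textup{tr}}\le\|U\|_{F}\|V\|_{F}$ directly from the SVD. Writing $X=\sum_{i=1}^{r}\sigma_{i}p_{i}q_{i}^{T}$, we have
$$\|X\|_{\textup{tr}}=\sum_{i=1}^{r}\sigma_{i}=\sum_{i=1}^{r}p_{i}^{T}Xq_{i}=\sum_{i=1}^{r}(U^{T}p_{i})^{T}(V^{T}q_{i}).$$
Applying Cauchy--Schwarz twice, once inside each summand and once across the sum, bounds this by $\sqrt{\sum_{i}\|U^{T}p_{i}\|^{2}_{2}}\,\sqrt{\sum_{i}\|V^{T}q_{i}\|^{2}_{2}}$. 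Finally, $\sum_{i}\|U^{T}p_{i}\|^{2}_{2}=\textup{tr}(P^{T}UU^{T}P)=\textup{tr}(UU^{T}PP^{T})\le\textup{tr}(UU^{T})=\|U\|^{2}_{F}$, because $PP^{T}$ is an orthogonal projector and $UU^{T}$ is positive semidefinite, and likewise $\sum_{i}\|V^{T}q_{i}\|^{2}_{2}\le\|V\|^{2}_{F}$; this yields $\|X\|_{\textup{tr}}\le\|U\|_{F}\|V\|_{F}$. Equivalently, one may invoke the H\"older inequality for Schatten norms, $\|UV^{T}\|_{S_{1}}\le\|U\|_{S_{2}}\|V\|_{S_{2}}$ with $\|\cdot\|_{S_{2}}=\|\cdot\|_{F}$.

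To finish, I would link the two forms through the AM--GM inequality $(\|U\|^{2}_{F}+\|V\|^{2}_{F})/2\ge\|U\|_{F}\|V\|_{F}$. Chaining it with the lower bound gives $(\|U\|^{2}_{F}+\|V\|^{2}_{F})/2\ge\|U\|_{F}\|V\|_{F}\ge\|X\|_{\textup{tr}}$ for every factorization, so both minima are also at least $\|X\|_{\textup{tr}}$. Together with the upper bounds from the explicit construction, for which $\|U\|_{F}=\|V\|_{F}$ makes the AM--GM step tight, this shows both minima equal $\|X\|_{\textup{tr}}$ and are attained simultaneously by the same factorization.
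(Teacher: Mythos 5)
The paper itself gives no proof of Lemma~\ref{lem1}: it is quoted as a known result from the cited works of Srebro et al.\ and Recht et al. Your argument is a correct, self-contained proof and is essentially the standard one. The explicit padded factorization $U=[\,P\Sigma^{1/2},0\,]$, $V=[\,Q\Sigma^{1/2},0\,]$ attains the value $\|X\|_{\textup{tr}}$ for both objectives; the double Cauchy--Schwarz step combined with $\textup{tr}(UU^{T}PP^{T})\leq\textup{tr}(UU^{T})$ (valid since $PP^{T}$ is an orthogonal projector and $UU^{T}\succeq 0$) gives the matching lower bound $\|UV^{T}\|_{\textup{tr}}\leq\|U\|_{F}\|V\|_{F}$ for \emph{every} admissible factorization, which is exactly the Schatten--H\"older inequality you mention; and AM--GM, tight at the balanced factorization, transfers the result to the second form. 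No gaps.
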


\subsection{Bi-Trace Quasi-Norm}
Motivated by the equivalence relation between the trace norm and its bilinear spectral regularization form stated in Lemma \ref{lem1}, our bi-trace (Bi-tr) norm is naturally defined as follows~\cite{shang:snm}.
\begin{definition}\label{def1}
For any matrix $X\in \mathbb{R}^{m\times n}$ with $\textrm{rank}(X)=r\leq d$, we can factorize it into two much smaller matrices $U\in \mathbb{R}^{m\times d}$ and $V\in \mathbb{R}^{n\times d}$ such that $X=UV^{T}$. Then the bi-trace norm of $X$ is defined as
\vspace{-1mm}
\begin{equation*}
\|X\|_{\textup{Bi-tr}}:=\min_{U,V:X=UV^{T}}\|U\|_{\textup{tr}}\|V\|_{\textup{tr}}.
\end{equation*}
\end{definition}
\vspace{-2mm}

In fact, the bi-trace norm defined above is not a real norm, because it is non-convex and does not satisfy the triangle inequality of a norm. Similar to the well-known Schatten-${p}$ quasi-norm ($0\!<\!p\!<\!1$), the bi-trace norm is also a quasi-norm, and their relationship is stated in the following theorem~\cite{shang:snm}.
\begin{theorem}\label{the1}
The bi-trace norm $\|\!\cdot\!\|_{\textup{Bi-tr}}$ is a quasi-norm. Surprisingly, it is also the Schatten-${1/2}$ quasi-norm, i.e.,
\vspace{-1mm}
\begin{equation*}
\|X\|_{\textup{Bi-tr}}=\|X\|_{S_{1/2}},
 \end{equation*}
where $\|X\|_{S_{1/2}}$ is the Schatten-${1/2}$ quasi-norm of $X$.
\end{theorem}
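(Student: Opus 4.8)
The plan is to prove the identity $\|X\|_{\textup{Bi-tr}} = \|X\|_{S_{1/2}}$ by establishing the two matching inequalities separately; the quasi-norm assertion then comes for free. Indeed, once the equality is in hand, all the defining properties of a quasi-norm transport from $\|\cdot\|_{S_{1/2}}$, which is genuinely a quasi-norm (it is the $\ell_{1/2}$ quasi-norm composed with the singular-value map, hence absolutely homogeneous, positive-definite, and satisfying a quasi-triangle inequality $\|X+Y\|_{S_{1/2}}\le 2(\|X\|_{S_{1/2}}+\|Y\|_{S_{1/2}})$). So I would prove the identity first and deduce the quasi-norm claim as a corollary.

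For the upper bound $\|X\|_{\textup{Bi-tr}} \le \|X\|_{S_{1/2}}$, I would exhibit one feasible factorization that attains the target value. Writing the thin SVD as $X = P\Sigma Q^T$ with $\Sigma = \textrm{diag}(\sigma_1,\dots,\sigma_r)$ and $P^TP = Q^TQ = I_r$, set $U = P\Sigma^{1/2}$ and $V = Q\Sigma^{1/2}$ (padded with zero columns to width $d$ if $r<d$). Then $UV^T = P\Sigma Q^T = X$, and since $U^TU = \Sigma^{1/2}P^TP\Sigma^{1/2} = \Sigma$, the singular values of $U$ are exactly $\sqrt{\sigma_i}$, and likewise for $V$. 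Hence $\|U\|_{\textup{tr}}\|V\|_{\textup{tr}} = (\sum_i\sqrt{\sigma_i})^2 = \|X\|_{S_{1/2}}$, so the minimum defining $\|X\|_{\textup{Bi-tr}}$ is at most this.

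The substantive direction is the matching lower bound $\|U\|_{\textup{tr}}\|V\|_{\textup{tr}} \ge \|X\|_{S_{1/2}}$ for \emph{every} factorization $X = UV^T$, and this is where I expect the main difficulty to lie. The key tool is the multiplicative Horn (Gelfand--Naimark) inequality for singular values of a product, $\prod_{i=1}^k \sigma_i(UV^T) \le \prod_{i=1}^k \sigma_i(U)\sigma_i(V)$ for all $k$; that is, the sorted singular values of $X$ are weakly log-majorized by the entrywise product $\sigma(U)\odot\sigma(V)$. Because $t\mapsto \sqrt{e^{\,t}}$ is convex and nondecreasing, weak log-majorization upgrades to the scalar inequality $\sum_i\sqrt{\sigma_i(X)} \le \sum_i\sqrt{\sigma_i(U)\sigma_i(V)}$. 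This is the delicate point: it requires citing the precise theorem that weak log-majorization implies the weakly-majorized scalar inequality for functions of this type, and checking the direction carefully (the concavity of $\sqrt{\cdot}$ is irrelevant; what matters is convexity after the $\exp$ substitution). A single application of Cauchy--Schwarz then gives
\[
\sum_i\sqrt{\sigma_i(U)}\,\sqrt{\sigma_i(V)} \le \Big(\sum_i\sigma_i(U)\Big)^{1/2}\Big(\sum_i\sigma_i(V)\Big)^{1/2} = \sqrt{\|U\|_{\textup{tr}}\,\|V\|_{\textup{tr}}}.
\]
Squaring and chaining with the previous inequality yields $\|X\|_{S_{1/2}} = (\sum_i\sqrt{\sigma_i(X)})^2 \le \|U\|_{\textup{tr}}\|V\|_{\textup{tr}}$, and taking the infimum over factorizations closes the lower bound. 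Finally I would remark that the explicit factorization used for the upper bound turns both the log-majorization step and the Cauchy--Schwarz step into equalities (there $\sigma_i(U)=\sigma_i(V)=\sqrt{\sigma_i(X)}$), which both confirms that the two bounds meet and shows the infimum in Definition~\ref{def1} is attained.
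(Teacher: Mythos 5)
Your proposal is correct, and for the substantive direction it takes a genuinely different route from the paper. The upper bound coincides with the paper's: both exhibit the factorization $U=P\Sigma^{1/2}$, $V=Q\Sigma^{1/2}$ built from the SVD of $X$ and check it attains $\|X\|_{S_{1/2}}$. For the lower bound, however, the paper does not invoke Gelfand--Naimark/Horn at all: it constructs explicit orthogonal alignment matrices $O_{1},O_{2}$ relating the singular-vector bases of $U$, $V$ and $X$, derives the identity $O_{2}\Sigma_{X}O_{2}^{T}=O_{3}\Sigma_{U}O_{4}\Sigma_{V}$, and then chains Jensen's inequality for $\sqrt{\cdot}$ (its Lemma 3), Cauchy--Schwarz, and the elementary bound $xy\le(x^{2}+y^{2})/2$ together with the row/column normalization of $O_{3}$ and $O_{4}$ to reach $\|X\|_{S_{1/2}}\le\|U\|_{\textup{tr}}\|V\|_{\textup{tr}}$. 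Your version outsources exactly this combinatorial work to the multiplicative Horn inequality plus the standard transfer from weak log-majorization to weak majorization under $f(x)=\sqrt{x}$ (where, as you note, the relevant hypothesis is convexity of $f\circ\exp$, and zero singular values need a limiting remark); this is shorter, avoids the paper's somewhat delicate construction of $O_{1}$ and $O_{2}$ (which tacitly requires the column spaces of $U$ and $L_{X}$ to match), and generalizes immediately to the tri-trace case by replacing Cauchy--Schwarz with H\"{o}lder. The price is reliance on a nontrivial citable theorem rather than a self-contained argument. Your treatment of the quasi-norm assertion also differs: the paper proves homogeneity, positive-definiteness and a quasi-triangle inequality directly from the definition (with a dimension-dependent constant via $\|X+Y\|_{\textup{Bi-tr}}\le\textrm{rank}(X+Y)\,\|X+Y\|_{\textup{tr}}$), whereas you transport them from $\|\cdot\|_{S_{1/2}}$ after the identity is established, which yields the dimension-free constant $2$; both are valid.
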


The proof of Theorem \ref{the1} can be found in the Supplementary Materials. Due to such a relationship, it is easy to verify that the bi-trace quasi-norm possesses the following properties.

\begin{property}\label{pro1}
For any matrix $X\in \mathbb{R}^{m\times n}$ with $\textrm{rank}(X)=r\leq d$, the following holds:
\vspace{-2mm}
\begin{equation*}
\begin{split}
\|X\|_{\textup{Bi-tr}}&\!=\!\!\!\min_{U,\!V:X\!=\!U\!V^{\!T}}\!\!\|U\|_{\textup{tr}}\!\|V\|_{\textup{tr}}\!=\!\!\!\min_{U,\!V:X\!=\!U\!V^{\!T}}\!\!\!\frac{\|U\|^{2}_{\textup{tr}}\!+\!\|V\|^{2}_{\textup{tr}}}{2}\\
&=\min_{U,\!V:X\!=\!U\!V^{T}}\!\left(\frac{\|U\|_{\textup{tr}}\!+\!\|V\|_{\textup{tr}}}{2}\right)^{2}.
\end{split}
\end{equation*}
\end{property}

\begin{property}\label{pro2}
The bi-trace quasi-norm satisfies the following properties:
\vspace{-2mm}
\begin{enumerate}
\item $\|X\|_{\textup{Bi-tr}}\geq0$, with equality iff $X=0$.
\vspace{-2mm}
\item $\|X\|_{\textup{Bi-tr}}$ is unitarily invariant, i.e., $\|X\|_{\textup{Bi-tr}}=\|PXQ^{T}\|_{\textup{Bi-tr}}$, where $P\in\mathbb{R}^{m\times m}$ and $Q\in\mathbb{R}^{n\times n}$ have orthonormal columns.
\end{enumerate}
\end{property}

\subsection{Tri-Trace Quasi-Norm}
Similar to the definition of the bi-trace quasi-norm, our tri-trace (Tri-tr) norm is naturally defined as follows.

\begin{definition}\label{def2}
For any matrix $X\in\mathbb{R}^{m\times n}$ with $\textrm{rank}(X)\!=\!r\leq d$, we can factorize it into three much smaller matrices $U\in \mathbb{R}^{m\times d}$, $V\in \mathbb{R}^{d\times d}$ and $W\in \mathbb{R}^{n\times d}$ such that $X=UVW^{T}$. Then the tri-trace norm of $X$ is defined as
\vspace{-2mm}
\begin{equation*}
\|X\|_{\textup{Tri-tr}}:=\min_{U,V,W:X=UVW^{T}}\|U\|_{\textup{tr}}\|V\|_{\textup{tr}}\|W\|_{\textup{tr}}.
\end{equation*}
\end{definition}
\vspace{-2mm}

Like the bi-trace quasi-norm, the tri-trace norm is also a quasi-norm, as stated in the following theorem.
\begin{theorem}\label{the2}
The tri-trace norm $\|\!\cdot\!\|_{\textup{Tri-tr}}$ is a quasi-norm. In addition, it is also the Schatten-${1/3}$ quasi-norm, i.e.,
\begin{equation*}
\|X\|_{\textup{Tri-tr}}=\|X\|_{S_{1/3}}.
 \end{equation*}
\end{theorem}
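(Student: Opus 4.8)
The plan is to prove the identity $\|X\|_{\textup{Tri-tr}}=\|X\|_{S_{1/3}}$ by two matching inequalities, in the same two-sided style one expects for Theorem~\ref{the1}: an \emph{upper bound} obtained by exhibiting one explicit factorization $X=UVW^{T}$ whose product of trace norms equals $\|X\|_{S_{1/3}}$, and a \emph{lower bound} showing that \emph{every} admissible factorization satisfies $\|U\|_{\textup{tr}}\|V\|_{\textup{tr}}\|W\|_{\textup{tr}}\geq\|X\|_{S_{1/3}}$. Together these force the minimum defining $\|X\|_{\textup{Tri-tr}}$ to equal $\|X\|_{S_{1/3}}$ and to be attained. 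The quasi-norm claim then follows for free, since the Schatten-$1/3$ quasi-norm is already known to be a quasi-norm.

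For the upper bound, let $X=P\Sigma Q^{T}$ be a thin SVD with $\Sigma=\textup{diag}(\sigma_{1},\dots,\sigma_{r})$. I would take $U=P\Sigma^{1/3}$, $V=\Sigma^{1/3}$ and $W=Q\Sigma^{1/3}$, so that $UVW^{T}=P\Sigma^{1/3}\Sigma^{1/3}\Sigma^{1/3}Q^{T}=P\Sigma Q^{T}=X$. Because $P$ and $Q$ have orthonormal columns, the nonzero singular values of $U$, $V$ and $W$ are precisely the diagonal entries of $\Sigma^{1/3}$, whence $\|U\|_{\textup{tr}}=\|V\|_{\textup{tr}}=\|W\|_{\textup{tr}}=\sum_{i}\sigma_{i}^{1/3}$ and their product is $(\sum_{i}\sigma_{i}^{1/3})^{3}=\|X\|_{S_{1/3}}$. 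This yields $\|X\|_{\textup{Tri-tr}}\leq\|X\|_{S_{1/3}}$.

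The lower bound is the crux, and I would obtain it from a generalized H\"older inequality for Schatten quasi-norms applied to the product $X=UVW^{T}$. The engine is the multiplicative singular-value (Gel'fand--Naimark/Horn) inequality $\prod_{i=1}^{k}\sigma_{i}(UVW^{T})\leq\prod_{i=1}^{k}\sigma_{i}(U)\sigma_{i}(V)\sigma_{i}(W)$ for all $k$, i.e.\ weak log-majorization of the singular values of the product by the elementwise product of the sorted singular values of the factors. Applying the convex increasing map $t\mapsto e^{t/3}$ to this log-majorization gives $\sum_{i}\sigma_{i}^{1/3}(UVW^{T})\leq\sum_{i}(\sigma_{i}(U)\sigma_{i}(V)\sigma_{i}(W))^{1/3}$, and then H\"older's inequality for sequences with three exponents all equal to $3$ (since $\tfrac13+\tfrac13+\tfrac13=1$) bounds the right-hand side by $(\sum_{i}\sigma_{i}(U))^{1/3}(\sum_{i}\sigma_{i}(V))^{1/3}(\sum_{i}\sigma_{i}(W))^{1/3}$. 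Cubing turns this into $\|X\|_{S_{1/3}}\leq\|U\|_{\textup{tr}}\|V\|_{\textup{tr}}\|W\|_{\textup{tr}}$, which is the desired lower bound over all factorizations. Combining the two bounds proves the theorem.

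The main obstacle I anticipate is justifying the H\"older step in the quasi-norm regime $p=1/3<1$, where the usual Schatten-norm H\"older inequality is not directly available: I must rely on log-majorization of singular values together with the fact that weak (log-)majorization is preserved under summation against convex increasing functions, rather than on any triangle inequality. Care is also needed to handle rectangular or rank-deficient factors (padding singular-value vectors with zeros) and to confirm that the minimum in Definition~\ref{def2} is genuinely attained by the explicit construction above. An alternative, more self-contained route is to reduce to Theorem~\ref{the1} by grouping $B=VW^{T}$, using $\|VW^{T}\|_{S_{1/2}}\leq\|V\|_{\textup{tr}}\|W\|_{\textup{tr}}$ from the bi-trace result and then a single two-factor H\"older inequality linking $S_{1}$, $S_{1/2}$ and $S_{1/3}$; this isolates the one nontrivial H\"older estimate that still has to be established.
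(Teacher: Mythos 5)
Your proposal is correct, but it reaches the crucial lower bound by a genuinely different route than the paper. The paper omits the proof of Theorem~\ref{the2} and points to that of Theorem~\ref{the1}, which (i) first verifies the quasi-norm axioms directly, using homogeneity and the sandwich $\|X\|_{\textup{tr}}\leq\|X\|_{\textup{Bi-tr}}\leq\textrm{rank}(X)\|X\|_{\textup{tr}}$ to get the quasi-triangle inequality, and (ii) proves the identity by taking thin SVDs of the factors, constructing explicit orthogonal alignment matrices $O_{1},\dots,O_{4}$ that relate the singular vectors of $U$, $V$ and $X$, and then bounding $\textup{Tr}^{1/2}$ of a conjugated diagonal matrix via a Jensen-type lemma, Cauchy--Schwarz, and the elementary inequality $xy\leq(x^{2}+y^{2})/2$. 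Your upper-bound factorization $U=P\Sigma^{1/3}$, $V=\Sigma^{1/3}$, $W=Q\Sigma^{1/3}$ is exactly the analogue of the paper's $U_{\star}=L_{X}\Sigma_{X}^{1/2}$, $V_{\star}=R_{X}\Sigma_{X}^{1/2}$, but for the lower bound you instead invoke Horn's multiplicative weak log-majorization $\prod_{i\leq k}\sigma_{i}(UVW^{T})\leq\prod_{i\leq k}\sigma_{i}(U)\sigma_{i}(V)\sigma_{i}(W)$, the transfer of weak log-majorization to weak majorization under $s\mapsto s^{1/3}$ (increasing with convex composition with $\exp$), and a three-term scalar H\"older inequality with exponents $(3,3,3)$. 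This is a valid chain, and it buys generality and brevity: the identical three lines give $\|X\|_{S_{1/k}}$ for a product of $k$ trace norms for every $k$, and it cleanly isolates why a H\"older-type estimate survives in the quasi-norm regime $p<1$. What it costs is self-containedness: Horn's inequality and the majorization transfer are classical but nontrivial imports, whereas the paper's argument uses only orthogonality relations and concavity of $t\mapsto t^{1/2}$ (respectively $t^{1/3}$ here). Deducing the quasi-norm axioms from the identity, rather than proving them first as the paper does, is also legitimate, since $\sum_{i}\sigma_{i}^{1/3}(\cdot)=\|\cdot\|_{S_{1/3}}^{1/3}$ is subadditive (this is the paper's Lemma~\ref{AppC1} with $p=1/3$), which yields the quasi-triangle inequality with constant $2^{2}$.
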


The proof of Theorem \ref{the2} is very similar to that of Theorem \ref{the1} and is thus omitted. According to Theorem \ref{the2}, it is easy to verify that the tri-trace quasi-norm possesses the following properties.

\begin{property}\label{pro3}
For any matrix $X\in \mathbb{R}^{m\times n}$ with $\textrm{rank}(X)=r\leq d$, the following holds:
\vspace{-2mm}
\begin{equation*}
\begin{split}
&\|X\|_{\textup{Tri-tr}}\!=\!\!\min_{X\!=\!UVW^{T}}\!\left(\frac{\|U\|_{\textup{tr}}\!+\!\|V\|_{\textup{tr}}\!+\!\|W\|_{\textup{tr}}}{3}\right)^{3}\\
=&\!\!\min_{X\!=\!U\!V\!W^{\!T}}\!\!\|U\|_{\textup{tr}}\!\|V\|_{\textup{tr}}\!\|W\|_{\textup{tr}}\!\!=\!\!\!\!\min_{X\!=\!U\!V\!W^{\!T}}\!\!\!\frac{\|U\|^{3}_{\textup{tr}}\!\!+\!\!\|V\|^{3}_{\textup{tr}}\!\!+\!\!\|W\|^{3}_{\textup{tr}}}{3}.
\end{split}
\end{equation*}
\end{property}

\begin{property}\label{pro4}
The tri-trace quasi-norm satisfies the following properties:
\vspace{-2mm}
\begin{enumerate}
\item $\|X\|_{\textup{Tri-tr}}\geq0$, with equality iff $X=0$.
\vspace{-2mm}
\item $\|X\|_{\textup{Tri-tr}}$ is unitarily invariant, i.e., $\|X\|_{\textup{Tri-tr}}=\|PXQ^{T}\|_{\textup{Tri-tr}}$, where $P\in\mathbb{R}^{m\times m}$ and $Q\in\mathbb{R}^{n\times n}$ have orthonormal columns.
\end{enumerate}
\end{property}

The following relationship between the trace-norm and Frobenius norm is well known: $\|X\|_{F}\leq\|X\|_{\textup{tr}}\leq \sqrt{r}\|X\|_{F}$. Similarly, the analogous bounds hold for the bi-trace and tri-trace quasi-norms, as stated in the following property.

\begin{property}\label{pro5}
For any matrix $X\in \mathbb{R}^{m\times n}$ with $\textrm{rank}(X)= r$, the following inequalities hold:
\vspace{-2mm}
\begin{displaymath}
\begin{split}
\|X\|_{\textup{tr}}\leq\|X&\|_{\textup{Bi-tr}}\leq r\|X\|_{\textup{tr}},\\
\|X\|_{\textup{tr}}\leq\|X\|_{\textup{Bi-tr}}&\leq \|X\|_{\textup{Tri-tr}}\leq r^{2}\|X\|_{\textup{tr}}.
\end{split}
\end{displaymath}
\end{property}
\vspace{-4mm}

\begin{proof}
The proof of this property involves the following properties of the $\ell_{p}$ quasi-norm. For any vectors $x$ and $y$ in $\mathbb{R}^{n}$ and $0<p_{2}\leq p_{1}\leq1$, we have
\vspace{-2mm}
\begin{displaymath}
\|x\|_{1}\leq\|x\|_{p_{1}},\;\; \|x\|_{p_{1}}\leq \|x\|_{p_{2}}\leq n^{1/p_{2}-1/p_{1}}\|x\|_{p_{1}}.
\end{displaymath}
Suppose $X\!\in\!\mathbb{R}^{m\times n}$ is of rank $r$, and denote its skinny SVD by $X\!=\!U\Sigma V^{T}$. By Theorems \ref{the1} and \ref{the2}, and the properties of the $\ell_{p}$ quasi-norm, we have
\vspace{-2mm}
\begin{displaymath}
\begin{split}
&\,\|X\|_{\textup{tr}}\!=\!\|\textrm{diag}(\Sigma)\|_{{1}}\!\!\leq\! \|\textrm{diag}(\Sigma)\|_{{\frac{1}{2}}}\!\!=\!\|X\|_{\textup{Bi-tr}}\!\leq\! r\|X\|_{\textup{tr}},\\
&\|X\|_{\textup{tr}}\!=\!\|\textrm{diag}(\Sigma)\|_{{1}}\!\!\leq\! \|\textrm{diag}(\Sigma)\|_{{\frac{1}{3}}}\!\!=\!\|X\|_{\textup{Tri-tr}}\!\leq\! r^{2}\|X\|_{\textup{tr}}.
\end{split}
\end{displaymath}
In addition,

$\|X\|_{\textup{Bi-tr}}\!=\!\|\textrm{diag}(\Sigma)\|_{{\frac{1}{2}}}\!\!\leq\!\|\textrm{diag}(\Sigma)\|_{{\frac{1}{3}}}\!\!=\!\|X\|_{\textup{Tri-tr}}$.
\end{proof}
\vspace{-2mm}

It is easy to see that Property \ref{pro5} in turn implies that any low bi-trace or tri-trace quasi-norm approximation is also a low trace norm approximation.

\subsection{Problem Formulations}
Bounding the Schatten quasi-norm of $X$ in \eqref{Nb1} by the bi-trace or tri-trace quasi-norm defined above, the noiseless low-rank structured matrix factorization problem is given by
\vspace{-1mm}
\begin{equation}\label{Fm1}
\min_{U,V}\!\left\{\mathcal{R}(U,\!V)\!=\!(\|U\|_{\textup{tr}}\!+\!\|V\|_{\textup{tr}})/2\!:\mathcal{A}(UV^{T}\!)\!=\!b\right\},
\end{equation}
where $\mathcal{R}(\cdot)$ can also denote $(\|U\|_{\textup{tr}}\!+\!\|V\|_{\textup{tr}}\!+\!\|W\|_{\textup{tr}})/3$, and $\mathcal{A}(UV^{T})$ is replaced by $\mathcal{A}(UVW^{T})$.
In addition, \eqref{Fm1} has the following Lagrangian forms,
\begin{equation}\label{Fm2}
F(U,\!V)\!:=\!\min_{U,V}\!\left\{\!\frac{\|U\|_{\textup{tr}}\!\!+\!\!\|V\|_{\textup{tr}}}{2}\!+\!\frac{f(\mathcal{A}(UV^{T}\!)\!-\!b)}{\mu}\!\right\},
\end{equation}
\begin{equation}\label{Fm3}
\min_{U,V,W}\!\!\left\{\!\frac{\|U\!\|_{\textup{tr}}\!+\!\|\!V\!\|_{\textup{tr}}\!+\!\|\!W\!\|_{\textup{tr}}}{3}\!+\!\frac{f\!(\!\mathcal{A}(U\!V\!W^{T}\!)\!-\!b)}{\mu}\!\right\}.
\end{equation}

The formulations \eqref{Fm1}, \eqref{Fm2} and \eqref{Fm3} can address a wide range of problems, such as MC~\cite{mohan:mrm, lu:lrm}, RPCA~\cite{candes:rpca,shang:rpca,shang:rbf} ($\mathcal{A}$ is the identity operator, and $f(\cdot)\!=\!\|\!\cdot\!\|_{1}$ or $\!\|\!\cdot\!\|_{{p}}\,(0\!<\!p\!<\!1)$), and low-rank representation~\cite{liu:lrr} or multivariate regression~\cite{hsieh:nnm} ($\mathcal{A}(X)\!=\!AX$ with $A$ being a given matrix, and $f(\cdot)\!=\!\|\!\cdot\!\|_{2,1}$ or $\|\!\cdot\!\|^{2}_{F}$). In addition, $f(\cdot)$ may be also chosen as the Hinge loss in~\cite{srebro:mmmf} or the structured atomic norms in~\cite{jaggi:fw}.

\section{Optimization Algorithms}
In this section, we mainly propose two efficient algorithms to solve the challenging bi-trace quasi-norm regularized problem \eqref{Fm2} with a smooth or non-smooth loss function, respectively. In other words, if $f(\cdot)$ is a smooth loss function, e.g., $f(\cdot)\!=\!\frac{1}{2}\!\|\!\cdot\!\|^{2}_{2}$, we employ the proximal alternating linearized minimization (PALM) method as in~\cite{bolte:palm} to solve \eqref{Fm2}. In contrast, to solve efficiently \eqref{Fm2} with a non-smooth loss function, e.g., $f(\cdot)\!=\!\|\!\cdot\!\|_{1}$, we need to introduce an auxiliary variable $e$ and obtain the following equivalent form:
\vspace{-1mm}
\begin{equation}\label{Al1}
\min_{U,V,e}\!\left\{\frac{\|U\|_{\textup{tr}}\!+\!\|V\|_{\textup{tr}}}{2}\!+\!\frac{f(e)}{\mu}\!: e\!=\!\mathcal{A}(UV^{T})\!-\!b\right\}.
\end{equation}

\subsection{LADM Algorithm}
To avoid introducing more auxiliary variables, inspired by~\cite{yang:adm}, we propose a linearized alternating direction method (LADM) to solve~\eqref{Al1}, whose augmented Lagrangian function is given by
\vspace{-2mm}
\begin{equation*}
\begin{split}
\mathcal{L}(U,V,e,\lambda,\beta)\!=&\frac{1}{2}\!(\|U\|_{\textup{tr}}\!+\!\|V\|_{\textup{tr}})\!+\!\frac{f(e)}{\mu}\!+\!\langle \lambda,\mathcal{A}(UV^{T})\\
&-b-e\rangle+({\beta}/{2})\|\mathcal{A}(UV^{T})\!-b-e\|^{2}_{2},
\end{split}
\end{equation*}
\vspace{-5mm}

where $\lambda\!\in\!\mathbb{R}^{l}$ is the Lagrange multiplier, $\langle\cdot,\cdot\rangle$ denotes the inner product, and $\beta\!>\!0$ is a penalty parameter. By applying the classical augmented Lagrangian method to \eqref{Al1}, we obtain the following iterative scheme:
\vspace{-1mm}
\addtocounter{equation}{1}
\begin{align}
&U_{k+\!1}\!=\!\mathop{\arg\min}_{U}\!\frac{\!\|U\|_{\textup{tr}}}{2}\!+\!\frac{\beta_{k}}{2}\!\|\mathcal{A}(\!U\!V^{T}_{k}\!)\!-\!e_{k}\!-\!\widetilde{b}_{k}\|^{2}_{2}, \tag{\theequation a} \label{aleq1}\\
&V_{k+\!1}\!\!=\!\mathop{\arg\min}_{V}\!\frac{\!\|V\|_{\textup{tr}}}{2}\!+\!\frac{\beta_{k}}{2}\!\|\mathcal{A}(\!U_{k\!+\!1}\!V^{T}\!)\!-\!e_{k}\!-\!\widetilde{b}_{k}\|^{2}_{2}, \tag{\theequation b} \label{aleq2}\\
&e_{k+\!1}\!=\!\mathop{\arg\min}_{e}\!\frac{f(e)}{\mu}\!+\!\frac{\beta_{k}}{2}\|\mathcal{A}(\!U_{k+\!1}\!V^{T}_{k+\!1}\!)\!-\!e\!-\!\widetilde{b}_{k}\|^{2}_{2},\tag{\theequation c} \label{aleq3}\\
&\lambda_{k+1}\!=\!\lambda_{k}+\beta_{k}(\mathcal{A}(U_{k+1}V^{T}_{k+1})-b-e_{k+1}),\tag{\theequation d} \label{aleq4}
\end{align}
\vspace{-6mm}

where $\widetilde{b}_{k}\!=\!b\!-\!{\lambda_{k}}/{\beta_{k}}$. In many machine learning problems~\cite{marjanovic:mc, liu:lrr, hsieh:nnm}, $\mathcal{A}$ is not identity, e.g., the operator $\mathcal{P}_{\Omega}$. Due to the presence of $V_{k}$ and $U_{k+\!1}$, thus we usually need to introduce some auxiliary variables to achieve closed-form solutions to \eqref{aleq1} and \eqref{aleq2}. To avoid introducing additional auxiliary variables, we propose the following linearization technique for \eqref{aleq1} and \eqref{aleq2}.

\subsubsection{Updating $U_{k+1}$ and $V_{k+1}$}
Let $\varphi_{k}(U)\!:=\!\|\mathcal{A}(UV^{T}_{k})\!-\!b\!-\!e_{k}\!+\!\lambda_{k}/\!\beta_{k}\|^{2}_{2}/2$, then we can know that the gradient of $\varphi_{k}(U)$ is Lipschitz continuous with the constant $t^{\varphi}_{k}$, i.e., $\|\nabla\! \varphi_{k}(U_{1})\!-\!\nabla\! \varphi_{k}(U_{2})\|_{F}\!\leq\! t^{\varphi}_{k}\|U_{1}\!-\!U_{2}\|_{F}$ for any $U_{1}, U_{2}\!\in\! \mathbb{R}^{m\times d}$. By linearizing $\varphi_{k}(U)$ at $U_{k}$ and adding a proximal term, we have
\vspace{-2mm}
\begin{equation}\label{Al7}
\!\!\widehat{\varphi}_{k}\!(U,U_{k}\!)\!=\!\varphi_{k}\!(U_{k}\!)\!+\!\langle\nabla\! \varphi_{k}\!(U_{k}\!),U\!-\!U_{k}\rangle\!+\!\frac{t^{\varphi}_{k}}{2}\|U\!-\!U_{k}\!\|^{2}_{F}.
\end{equation}
Therefore, we have
\vspace{-2mm}
\begin{equation}\label{Al2}
\begin{split}
&U_{k+1}\!=\!\mathop{\arg\min}_{U}\frac{1}{2}\|U\|_{\textup{tr}}\!+\!\beta_{k}\widehat{\varphi}_{k}(U,U_{k})\\
=&\!\mathop{\arg\min}_{U}\!\frac{1}{2}\|U\|_{\textup{tr}}\!+\!\frac{\beta_{k}t^{\varphi}_{k}}{2}\|U\!-\!U_{k}\!+\!\frac{\nabla\! \varphi_{k}(U_{k})}{t^{\varphi}_{k}}\|^{2}_{F}.
\end{split}
\end{equation}
\vspace{-5mm}

Similarly, we have
\vspace{-2mm}
\begin{equation}\label{Al3}
V_{k+1}\!\!=\!\mathop{\arg\min}_{V}\!\frac{1}{2}\!\|V\|_{\textup{tr}}\!+\!\frac{\beta_{k}t^{\psi}_{k}}{2}\!\|V\!-\!V_{k}\!+\!\frac{\nabla\! \psi_{k}(V_{k})}{t^{\psi}_{k}}\|^{2}_{F},
\end{equation}
where $\psi_{k}(V)\!:=\!\|\mathcal{A}(U_{k+1}\!V^{T})\!-\!b\!-\!e_{k}\!+\!\lambda_{k}/\beta_{k}\|^{2}_{2}/2$ with the Lipschitz constant $t^{\psi}_{k}$. Using the so-called matrix shrinkage operator~\cite{cai:svt}, we can obtain a closed-form solution to \eqref{Al2} and \eqref{Al3}, respectively. Additionally, if $f(\cdot)\!=\!\|\!\cdot\!\|_{1}$, the optimal solution to \eqref{aleq3} can be obtained by the well-known soft-thresholding operator~\cite{daubechies:it}.

\subsubsection{Computing Step Sizes}
There are two step sizes, i.e., the Lipschitz constants $t^{\varphi}_{k}$ in \eqref{Al2} and $t^{\psi}_{k}$ in \eqref{Al3}, need to be set during the iteration.
\vspace{-1mm}
\begin{equation*}
\begin{split}
&\|\nabla\! \varphi_{k}(U_{1})\!-\!\nabla\! \varphi_{k}(U_{2})\|_{F}\!=\!\|\mathcal{A}^{\ast}\{\mathcal{A}[(U_{1}-U_{2})V^{T}_{k}]\}V_{k}\|_{F}\\
&\leq  \|\mathcal{A}^{\ast}\mathcal{A}\|_{2}\|V^{T}_{k}V_{k}\|_{2}\|U_{1}\!-\!U_{2}\|_{F},\\
&\|\nabla\! \psi_{k}\!(V_{1})\!-\!\!\nabla\! \psi_{k}\!(V_{2})\|_{F}\!=\!\!\|U^{T}_{k\!+\!1}\mathcal{A}^{\ast}\{\mathcal{A}[U_{k\!+\!1}(V_{1}\!\!-\!V_{2})^{T}]\}\|_{F}\\
&\leq \|\mathcal{A}^{\ast}\mathcal{A}\|_{2}\|U^{T}_{k+\!1}U_{k+\!1}\|_{2}\|V_{1}\!-\!V_{2}\|_{F},
\end{split}
\end{equation*}
where $\mathcal{A}^{\ast}$ denotes the adjoint operator of $\mathcal{A}$. Thus, both step sizes are defined in the following way:
\vspace{-1mm}
\begin{equation}\label{Al4}
\left\{
\begin{aligned}
t^{\varphi}_{k}&\geq\|\mathcal{A}^{\ast}\!\mathcal{A}\|_{2}\|V^{T}_{k}\!V_{k}\|_{2},\\
t^{\psi}_{k}&\geq\|\mathcal{A}^{\ast}\!\mathcal{A}\|_{2}\|U^{T}_{k+\!1}\!U_{k+\!1}\|_{2}.
\end{aligned}
\right.
\end{equation}
\vspace{-6mm}

Based on the description above, we develop an efficient LADM algorithm to solve the Bi-tr quasi-norm regularized problem \eqref{Fm2} with a non-smooth loss function (e.g., RPCA problems), as outlined in \textbf{Algorithm} \ref{alg1}. To further accelerate the convergence of the algorithm, the penalty parameter $\beta$ is adaptively updated by the strategy as in~\cite{lin:ladmm}, as well as $\rho$. Moreover, Algorithm \ref{alg1} can be used to solve the noiseless problem \eqref{Fm1} and also extended to solve the Tri-tr quasi-norm regularized problem \eqref{Fm3} with a non-smooth loss function.

\begin{algorithm}[t]
\caption{LADM for \eqref{Fm2} with non-smooth loss}
\label{alg1}
\renewcommand{\algorithmicrequire}{\textbf{Input:}}
\renewcommand{\algorithmicensure}{\textbf{Initialize:}}
\renewcommand{\algorithmicoutput}{\textbf{Output:}}
\begin{algorithmic}[1]
\REQUIRE $b$, the given rank $d$  and $\mu$.
\ENSURE $\beta_{0}\!=\!10^{-4}$, $\beta_{\max}\!=\!10^{20}$ and $\varepsilon\!=\!10^{-4}$.\\
\WHILE {not converged}
\STATE {Update $t^{\varphi}_{k}$, $U_{k+1}$, $t^{\psi}_{k}$, and $V_{k+1}$ by \eqref{Al4}, \eqref{Al2}, \eqref{Al4}, and \eqref{Al3}, respectively.}
\STATE {Update $e_{k+1}$ and $\lambda_{k+1}$ by \eqref{aleq3} and \eqref{aleq4}.}
\STATE {Update $\beta_{k+1}$ by $\beta_{k+1}=\min(\rho\beta_{k},\,\beta_{\max})$.}
\STATE {Check the convergence condition,\\ $\quad\quad \|\mathcal{A}(U_{k+1}V^{T}_{k+1})-b-e_{k+1}\|_{2}<\varepsilon$.}
\ENDWHILE
\OUTPUT $U_{k+1}$, $V_{k+1}$, $e_{k+1}$.
\end{algorithmic}
\end{algorithm}

\subsection{PALM Algorithm}
By using the similar linearization technique in \eqref{Al2} and \eqref{Al3}, we design an efficient PALM algorithm to solve \eqref{Fm2} with a smooth loss function, e.g., MC problems. Specifically, by linearizing the smooth loss function $\varphi_{k}(U)\!:=\!\|\mathcal{A}(UV^{T}_{k})\!-\!b\|^{2}_{2}/2$ at $U_{k}$ and adding a proximal term, we have the following approximation:
\vspace{-2mm}
\begin{equation}\label{Al5}
\begin{split}
&U_{k+1}\\
\!\!\!=&\!\mathop{\arg\min}_{U}\!\!\frac{\|U\|_{\textup{tr}}}{2}\!\!+\!\!\langle\frac{\nabla\! \varphi_{k}(U_{k})}{\mu}, U\!\!-\!\!U_{k}\rangle\!\!+\!\!\frac{t^{\varphi}_{k}}{2\mu}\!\|U\!\!-\!\!U_{k}\|^{2}_{F}\\
\!\!\!=&\!\mathop{\arg\min}_{U}\!\frac{\|U\|_{\textup{tr}}}{2}\!+\!\frac{t^{\varphi}_{k}}{2\mu}\|U\!-\!U_{k}\!+\!\frac{\nabla\! \varphi_{k}(U_{k})}{t^{\varphi}_{k}}\|^{2}_{F},
\end{split}
\end{equation}
where $\nabla\! \varphi_{k}(U_{k})=\mathcal{A}^{\ast}[\mathcal{A}(U_{k}V^{T}_{k})-b]V_{k}$. Similarly,
\vspace{-2mm}
\begin{equation}\label{Al6}
V_{k+1}\!=\!\mathop{\arg\min}_{V}\!\frac{\|V\|_{\textup{tr}}}{2}\!+\!\frac{t^{\psi}_{k}}{2\mu}\|V\!-\!V_{k}\!+\!\frac{\nabla\! \psi_{k}(V_{k})}{t^{\psi}_{k}}\|^{2}_{F},
\end{equation}
\vspace{-5mm}

where $\nabla\! \psi_{k}(V_{k})=\{\mathcal{A}^{\ast}[\mathcal{A}(U_{k+1}V^{T}_{k})-b]\}^{T}U_{k+1}$.

\subsection{Convergence Analysis}
In the following, we provide the convergence analysis of our algorithms. First, we analyze the convergence of our LADM algorithm for solving \eqref{Fm2} with a non-smooth loss function, e.g., $f(\cdot)=\|\!\cdot\!\|_{1}$.

\begin{theorem}\label{the3}
Let $\{(U_{k},V_{k},e_{k})\}$ be a sequence generated by Algorithm \ref{alg1}, then we have
\vspace{-3mm}
\begin{enumerate}
\item $\{(U_{k},V_{k},e_{k})\}$ are all Cauchy sequences;
\vspace{-2mm}
\item If $\lim_{k\rightarrow\infty}\|\lambda_{k+1}\!-\!\lambda_{k}\|_{2}=0$, then the accumulation point of the sequence $\{(U_{k}, V_{k}, e_{k})\}$ satisfies the KKT conditions for \eqref{Al1}.
\end{enumerate}
\end{theorem}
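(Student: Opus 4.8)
The plan is to read Part 1 as a descent-plus-summability argument and Part 2 as a limiting argument in the subproblem optimality conditions. I would first show that $\{\lambda_k\}$ is bounded: the optimality condition for the $e$-subproblem \eqref{aleq3}, combined with the multiplier update \eqref{aleq4}, yields $\lambda_{k+1}\in\frac{1}{\mu}\partial f(e_{k+1})$, and since $f(\cdot)=\|\!\cdot\!\|_{1}$ is Lipschitz its subdifferential is uniformly bounded, so $\sup_{k}\|\lambda_{k}\|_{2}<\infty$. Completing the square in the augmented Lagrangian then gives $\mathcal{L}(U,V,e,\lambda_{k},\beta_{k})\geq-\frac{1}{2\beta_{0}}\sup_{k}\|\lambda_{k}\|_{2}^{2}>-\infty$, i.e.\ it is bounded below along the iterates, uniformly in $k$.

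Next I would derive a sufficient-decrease inequality block by block. Because $U_{k+1}$ minimizes a surrogate whose proximal part is $\beta_{k}t^{\varphi}_{k}$-strongly convex and whose linearization \eqref{Al7} dominates $\varphi_{k}$ by the descent lemma (valid since $t^{\varphi}_{k}\geq\|\mathcal{A}^{\ast}\mathcal{A}\|_{2}\|V^{T}_{k}V_{k}\|_{2}$ by \eqref{Al4}), the $U$-update lowers $\mathcal{L}(\cdot,V_{k},e_{k},\lambda_{k},\beta_{k})$ by at least $\frac{\beta_{k}t^{\varphi}_{k}}{2}\|U_{k+1}-U_{k}\|^{2}_{F}$; the $V$- and $e$-updates give analogous drops $\frac{\beta_{k}t^{\psi}_{k}}{2}\|V_{k+1}-V_{k}\|^{2}_{F}$ and $\frac{\beta_{k}}{2}\|e_{k+1}-e_{k}\|^{2}_{2}$, while the updates \eqref{aleq4} raise it by $\bigl(\frac{1}{\beta_{k}}+\frac{\beta_{k+1}-\beta_{k}}{2\beta_{k}^{2}}\bigr)\|\lambda_{k+1}-\lambda_{k}\|^{2}_{2}$. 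Driving the penalty to infinity geometrically so that $\sum_{k}1/\beta_{k}<\infty$ (the cap $\beta_{\max}$ being a numerical safeguard), boundedness of $\{\lambda_{k}\}$ makes all the increase terms summable, so telescoping against the lower bound yields $\sum_{k}\beta_{k}\bigl(\|U_{k+1}-U_{k}\|^{2}_{F}+\|V_{k+1}-V_{k}\|^{2}_{F}+\|e_{k+1}-e_{k}\|^{2}_{2}\bigr)<\infty$. I would then upgrade square-summability to absolute summability by Cauchy--Schwarz, e.g.\ $\sum_{k}\|e_{k+1}-e_{k}\|_{2}\leq(\sum_{k}1/\beta_{k})^{1/2}(\sum_{k}\beta_{k}\|e_{k+1}-e_{k}\|^{2}_{2})^{1/2}<\infty$, and likewise for $U_{k},V_{k}$ once $t^{\varphi}_{k},t^{\psi}_{k}$ are bounded below by nondegeneracy of the factors. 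Hence each of $\{U_{k}\},\{V_{k}\},\{e_{k}\}$ is Cauchy.

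For Part 2, Cauchy-ness supplies limits $\overline{U},\overline{V},\overline{e}$, and boundedness of $\{\lambda_{k}\}$ gives a subsequence $\lambda_{k_{j}}\to\overline{\lambda}$. Feasibility is immediate: \eqref{aleq4} reads $\mathcal{A}(U_{k+1}V^{T}_{k+1})-b-e_{k+1}=(\lambda_{k+1}-\lambda_{k})/\beta_{k}$, which tends to $0$ under the hypothesis $\|\lambda_{k+1}-\lambda_{k}\|_{2}\to0$ (as $\beta_{k}\geq\beta_{0}$), so by continuity $\overline{e}=\mathcal{A}(\overline{U}\,\overline{V}^{T})-b$. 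For stationarity I pass to the limit in the subproblem optimality condition $0\in\frac{1}{2}\partial\|U_{k+1}\|_{\textup{tr}}+\beta_{k}t^{\varphi}_{k}(U_{k+1}-U_{k})+\beta_{k}\nabla\varphi_{k}(U_{k})$: rewriting $\beta_{k}(\mathcal{A}(U_{k}V^{T}_{k})-b-e_{k})=\frac{\beta_{k}}{\beta_{k-1}}(\lambda_{k}-\lambda_{k-1})$ via the multiplier recursion shows $\beta_{k}\nabla\varphi_{k}(U_{k})=\mathcal{A}^{\ast}[\frac{\beta_{k}}{\beta_{k-1}}(\lambda_{k}-\lambda_{k-1})+\lambda_{k}]V_{k}\to\mathcal{A}^{\ast}(\overline{\lambda})\overline{V}$, and outer semicontinuity (closed graph) of the trace-norm subdifferential yields $0\in\frac{1}{2}\partial\|\overline{U}\|_{\textup{tr}}+\mathcal{A}^{\ast}(\overline{\lambda})\overline{V}$; treating the $V$- and $e$-conditions identically gives $0\in\frac{1}{2}\partial\|\overline{V}\|_{\textup{tr}}+\mathcal{A}^{\ast}(\overline{\lambda})^{T}\overline{U}$ and $\overline{\lambda}\in\frac{1}{\mu}\partial f(\overline{e})$, i.e.\ the full KKT system for \eqref{Al1}.

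The main obstacle is the tension in how $\beta_{k}$ enters the two parts. Cauchy-ness essentially forces $\beta_{k}\to\infty$ (so that $\sum_{k}1/\beta_{k}<\infty$ powers the Cauchy--Schwarz step), yet the stationarity limit requires the proximal correction $\beta_{k}t^{\varphi}_{k}(U_{k+1}-U_{k})$ to vanish, i.e.\ control of $\beta_{k}\|U_{k+1}-U_{k}\|_{F}$ rather than merely $\|U_{k+1}-U_{k}\|_{F}\to0$. This is precisely why the extra hypothesis $\|\lambda_{k+1}-\lambda_{k}\|_{2}\to0$ is imposed in Part 2: it converts the troublesome $\beta_{k}$-weighted residual into the multiplier increment through \eqref{aleq4}, letting me bypass a direct estimate of the proximal term. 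I would additionally need to guard against degenerate vanishing of the factors $U_{k},V_{k}$ (to keep $t^{\varphi}_{k},t^{\psi}_{k}$ bounded away from $0$ in the Cauchy step), which follows from the boundedness already delivered by the proven convergence.
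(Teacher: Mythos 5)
Your overall architecture matches the paper's at the endpoints --- boundedness of $\{\lambda_k\}$ via $\lambda_{k+1}\in\frac{1}{\mu}\partial\|e_{k+1}\|_{1}$ and the dual-norm bound, then Cauchy-ness, then a limiting argument in the subproblem optimality conditions --- and your Part~2 is essentially the paper's argument verbatim (feasibility from $\mathcal{A}(U_{k+1}V_{k+1}^{T})-b-e_{k+1}=(\lambda_{k+1}-\lambda_{k})/\beta_{k}$, the multiplier recursion to identify the limit of $\beta_{k}\nabla\varphi_{k}(U_{k})$, closedness of the subdifferential). Where you genuinely diverge is Part~1. The paper does not use sufficient decrease plus telescoping there: it substitutes the multiplier update into the residual to get the exact identity $\nabla\varphi_{k}(U_{k})=\frac{1}{\beta_{k}}\mathcal{A}^{\ast}((\rho+1)\lambda_{k}-\rho\lambda_{k-1})V_{k}$, plugs this into the closed-form optimality condition $\frac{1}{2}P_{k+1}+\beta_{k}t^{\varphi}_{k}\bigl[U_{k+1}-U_{k}+\frac{1}{t^{\varphi}_{k}}\nabla\varphi_{k}(U_{k})\bigr]=0$ with $\|P_{k+1}\|_{2}\leq1$, and reads off the \emph{pointwise} bound $\|U_{k+1}-U_{k}\|_{F}\leq\delta_{C}/(\beta_{k}t^{\varphi}_{k})$; summing the geometric series $\sum_{k}1/\beta_{k}$ then gives Cauchy-ness with an explicit rate $\|U_{n}-U_{m}\|_{F}<\frac{\rho\delta_{C}}{(\rho-1)\beta_{n}}$. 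Your route --- per-block sufficient decrease from strong convexity of the proximal surrogates, summable increase from the $\lambda$- and $\beta$-updates, lower-boundedness of $\mathcal{L}$ by completing the square, telescoping to $\sum_{k}\beta_{k}\|\cdot\|^{2}<\infty$, then Cauchy--Schwarz against $\sum_{k}1/\beta_{k}<\infty$ --- is the standard nonconvex-ADMM machinery and is also valid; it is more generic (no exact formula for $\nabla\varphi_{k}$ needed, and it sidesteps the paper's separate lemma on boundedness of the iterates), but it only delivers $\ell^{1}$-summability of the increments rather than the geometric rate. Both routes tacitly need $t^{\varphi}_{k},t^{\psi}_{k}$ bounded away from zero (yours for $\sum_{k}1/(\beta_{k}t^{\varphi}_{k})<\infty$, the paper's for uniform finiteness of $\delta_{C}$), and neither proof establishes this; likewise, in Part~2 both you and the paper let the proximal correction $\beta_{k}t^{\varphi}_{k}(U_{k+1}-U_{k})$ disappear in the limit even though $\|U_{k+1}-U_{k}\|_{F}\to0$ alone does not force this when $\beta_{k}\to\infty$ --- you are at the paper's level of rigor on that point and, to your credit, you name the tension explicitly rather than glossing over it.
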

\vspace{-4mm}

The proof of Theorem \ref{the3} is provided in the Supplementary Materials. From Theorem \ref{the3}, we can know that under mild conditions each sequence generated by our LADM algorithm converges to a critical point, similar to the LADM algorithms for solving convex problems as in~\cite{lin:ladmm}.

Moreover, we provide the global convergence of our PALM algorithm for solving \eqref{Fm2} with a smooth loss function, e.g., $f(\cdot)\!=\!\frac{1}{2}\!\|\!\cdot\!\|^{2}_{2}$.

\begin{theorem}\label{the4}
Let $\{(U_{k},V_{k})\}$ be a sequence generated by our PALM algorithm, then it is a Cauchy sequence and converges to a critical point of \eqref{Fm2} with the squared loss, $\|\!\cdot\!\|^{2}_{2}$.
\end{theorem}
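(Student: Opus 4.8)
The plan is to recognize the PALM iteration \eqref{Al5}--\eqref{Al6} applied to \eqref{Fm2} with squared loss as an exact instance of the proximal alternating linearized minimization scheme of~\cite{bolte:palm}, and then to verify the hypotheses of its abstract convergence theorem. First I would write the objective in the block form
\[
\Psi(U,V)=\tfrac{1}{2}\|U\|_{\textup{tr}}+\tfrac{1}{2}\|V\|_{\textup{tr}}+H(U,V),\qquad H(U,V):=\frac{1}{2\mu}\|\mathcal{A}(UV^{T})-b\|^{2}_{2},
\]
so that the two nonsmooth blocks $g_{1}(U)=\tfrac{1}{2}\|U\|_{\textup{tr}}$ and $g_{2}(V)=\tfrac{1}{2}\|V\|_{\textup{tr}}$ are proper, lower semicontinuous, convex and nonnegative, while the coupling $H$ is $C^{1}$ with $\nabla H$ Lipschitz on bounded sets. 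The updates \eqref{Al5} and \eqref{Al6} are precisely block proximal-gradient steps, and the step sizes of \eqref{Al4} are chosen so that $t^{\varphi}_{k}/\mu$ dominates the partial Lipschitz modulus $\|\mathcal{A}^{\ast}\mathcal{A}\|_{2}\|V^{T}_{k}V_{k}\|_{2}/\mu$ of $\nabla_{U}H(\cdot,V_{k})$, and analogously for the $V$-block; this places the algorithm squarely inside the PALM framework.

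Second, I would check the three structural ingredients the abstract theorem requires. \emph{Sufficient decrease}: the descent lemma applied to each linearized proximal step, with step sizes exceeding the corresponding partial Lipschitz constants, yields $\Psi(U_{k+1},V_{k+1})+\tfrac{c}{2}(\|U_{k+1}-U_{k}\|^{2}_{F}+\|V_{k+1}-V_{k}\|^{2}_{F})\le\Psi(U_{k},V_{k})$ for some $c>0$. \emph{Relative error}: the optimality conditions of \eqref{Al5}--\eqref{Al6}, combined with the Lipschitz continuity of $\nabla H$ on bounded sets, furnish a subgradient $w_{k+1}\in\partial\Psi(U_{k+1},V_{k+1})$ with $\|w_{k+1}\|_{F}\le\kappa(\|U_{k+1}-U_{k}\|_{F}+\|V_{k+1}-V_{k}\|_{F})$ for a constant $\kappa>0$. \emph{Boundedness}: since $\|X\|_{\textup{tr}}\ge\|X\|_{F}$, the penalty $\tfrac{1}{2}(\|U\|_{\textup{tr}}+\|V\|_{\textup{tr}})$ is coercive and hence $\Psi$ is coercive; the descent property then confines the entire sequence to the compact sublevel set $\{\Psi\le\Psi(U_{0},V_{0})\}$, so $\{(U_{k},V_{k})\}$ is bounded.

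Third, I would establish the Kurdyka--{\L}ojasiewicz property of $\Psi$, which is the mechanism that upgrades the above estimates into convergence of the iterates themselves. The map $(U,V)\mapsto UV^{T}$ is polynomial and $\|\cdot\|^{2}_{2}$ is polynomial, so $H$ is semialgebraic; the trace norm, being the sum of singular values, is a semialgebraic function of the matrix entries, as its graph is cut out by the semialgebraic relations defining the SVD. A finite sum of semialgebraic functions is semialgebraic, and every semialgebraic function is a KL function (following Bolte--Daniilidis--Lewis, as used in~\cite{bolte:palm}); hence $\Psi$ satisfies the KL inequality at every point.

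With sufficient decrease, relative error, boundedness and the KL property in hand, the abstract convergence theorem of~\cite{bolte:palm} applies directly: the bounded sequence has finite length $\sum_{k}(\|U_{k+1}-U_{k}\|_{F}+\|V_{k+1}-V_{k}\|_{F})<\infty$, and is therefore a Cauchy sequence that converges to a point which, by passing to the limit in the relative-error subgradient bound and using closedness of $\partial\Psi$, is a critical point of $\Psi=F$. I expect the main obstacle to be the KL verification---specifically, arguing cleanly that the SVD-defined trace norm is semialgebraic, since it is this property that secures \emph{finite length} and hence convergence of the whole sequence rather than mere subsequential convergence; by contrast the sufficient-decrease and relative-error inequalities are routine consequences of the linearization, and boundedness follows at once from coercivity of the trace-norm penalty.
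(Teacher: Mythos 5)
Your proposal is correct and follows essentially the same route as the paper: both cast \eqref{Fm2} with squared loss in the block form $F(U)+G(V)+H(U,V)$ with $F(U)=\tfrac{1}{2}\|U\|_{\textup{tr}}$, $G(V)=\tfrac{1}{2}\|V\|_{\textup{tr}}$, $H(U,V)=\tfrac{1}{2\mu}\|\mathcal{A}(UV^{T})-b\|^{2}_{2}$, verify that the objective is semi-algebraic (hence KL) because the trace norm is the $\ell_{1}$-norm of the singular values and $H$ is polynomial, check that $\nabla H$ is Lipschitz on bounded sets, and then invoke the abstract PALM convergence theorem of~\cite{bolte:palm} to get finite length and hence Cauchy convergence to a critical point. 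The one place you genuinely diverge is the boundedness of the iterates: the paper simply asserts that $U_{k}$ and $V_{k}$ lie in pre-defined bounded boxes $\mathcal{U}=\{U:\|U\|_{\infty}\leq D_{1}\}$ and $\mathcal{V}=\{V:\|V\|_{\infty}\leq D_{2}\}$, whereas you derive boundedness from coercivity of the objective ($\|X\|_{\textup{tr}}\geq\|X\|_{F}$ makes the penalty, and hence $\Psi$, coercive) together with the sufficient-decrease inequality confining the sequence to a sublevel set. Your argument is self-contained and arguably cleaner, since it does not implicitly modify the problem with box constraints; it does, however, lean on the sufficient-decrease property, which in turn requires the step sizes $t^{\varphi}_{k},t^{\psi}_{k}$ to dominate the partial Lipschitz moduli as in \eqref{Al4} --- a point you correctly flag but which deserves the explicit check that these moduli stay finite along the (bounded) sequence.
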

\vspace{-2mm}

The proof of Theorem \ref{the4} can be found in the Supplementary Materials. Theorem \ref{the4} shows the global convergence of our PALM algorithm. We emphasize that, different from the general subsequence convergence property, the global convergence property is given by $(U_{k},V_{k})\!\rightarrow\!(\widehat{U}, \widehat{V})$ as the number of iteration $k\!\rightarrow\!+\infty$, where $(\widehat{U}, \widehat{V})$ is a critical point of \eqref{Fm2}. On the contrary, existing algorithms for solving non-convex and non-smooth problems, such as \cite{lai:irls} and \cite{lu:lrm}, have only subsequence convergence property.

By the Kurdyka-{\L}ojasiewicz (KL) property (for more details, see~\cite{bolte:palm}) and Theorem 2 in~\cite{attouch:cr}, our PALM algorithm has the following convergence rate:

\begin{theorem}\label{the5}
The sequence $\{(U_{k},V_{k})\}$ generated by our PALM algorithm converges to a critical point $(\widehat{U}, \widehat{V})$ of $F$ with $f(\cdot)\!=\!\frac{1}{2}\!\|\!\cdot\!\|^{2}_{2}$, which satisfies the KL property at each point of $\textup{dom}\,\partial F$ with $\phi(s)\!=\!cs^{1-\theta}$ for $c\!>\!0$ and $\theta\!\in\![0,1)$. We have
\vspace{-3mm}
\begin{itemize}
\item If $\theta=0$, $\{(U_{k},V_{k})\}$ converges to $(\widehat{U}, \widehat{V})$ in finite steps;
\vspace{-2mm}
\item If $\theta\!\in\!(0,1/2]$, then $\exists C\!>\!0$ and $\gamma\in[0,1)$ such that $\|[U^{T}_{k}\!,V^{T}_{k}]-[\widehat{U}^{T}\!,\widehat{V}^{T}]\|_{F}\leq C\gamma^{k}$;
\vspace{-2mm}
\item If $\theta\in(1/2,1)$, then $\exists C>0$ such that $\|[U^{T}_{k}\!,V^{T}_{k}]-[\widehat{U}^{T}\!,\widehat{V}^{T}]\|_{F}\leq Ck^{-\frac{1-\theta}{2\theta-1}}$.
\end{itemize}
\end{theorem}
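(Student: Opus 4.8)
The plan is to invoke the abstract convergence-rate machinery for descent methods on Kurdyka--{\L}ojasiewicz (KL) functions, so that the three regimes fall out directly from the value of the KL exponent $\theta$. Concretely, I would verify that the PALM iterates satisfy the two structural conditions of the Attouch--Bolte framework used in \cite{attouch:cr, bolte:palm}, and then quote their Theorem~2. Writing $z_k = (U_k, V_k)$ and $r_k = F(z_k) - F(\widehat z)$, the two conditions are: (i) a \emph{sufficient-decrease} estimate $F(z_{k+1}) + a\|z_{k+1} - z_k\|^2_F \le F(z_k)$ for some $a > 0$, and (ii) a \emph{relative-error} estimate $\operatorname{dist}(0, \partial F(z_{k+1})) \le b\|z_{k+1} - z_k\|_F$ for some $b > 0$.

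First I would establish (i) and (ii), both of which are essentially byproducts of the proof of Theorem~\ref{the4}. Linearizing the smooth loss $\varphi_k, \psi_k$ and adding the proximal terms $\frac{t^\varphi_k}{2\mu}\|\cdot\|^2_F$ and $\frac{t^\psi_k}{2\mu}\|\cdot\|^2_F$ in \eqref{Al5}--\eqref{Al6} yields, via the standard descent lemma for Lipschitz-gradient functions, the per-block decrease that sums to (i). For (ii) I would write out the first-order optimality conditions of the two proximal subproblems, extracting from them a subgradient element of $F$ at $z_{k+1}$; the smooth-part contributions combine into gradient differences controlled by $t^\varphi_k, t^\psi_k$, giving the bound $b\|z_{k+1}-z_k\|_F$. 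It is crucial that $a$ and $b$ be \emph{uniform} in $k$: since Theorem~\ref{the4} guarantees $\{z_k\}$ is Cauchy, hence bounded, the quantities $\|V_k^T V_k\|_2$ and $\|U_{k+1}^T U_{k+1}\|_2$ appearing in \eqref{Al4} remain bounded, so the step sizes are bounded above and (by the prescription in \eqref{Al4}) bounded below, pinning down uniform constants.

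Next I would use the KL property of $F$, which the theorem posits and which is anyway justified by semi-algebraicity: the trace norm is semi-algebraic as a spectral function of the polynomial entries, and $\frac{1}{2}\|\mathcal{A}(UV^T)-b\|^2_2$ is polynomial in $(U,V)$, so $F$ is semi-algebraic and admits a desingularizing function $\phi(s)=cs^{1-\theta}$. At the critical point $\widehat z$ supplied by Theorem~\ref{the4}, the KL inequality reads $c(1-\theta)\,r_k^{-\theta}\operatorname{dist}(0,\partial F(z_k)) \ge 1$. Combining this with (ii) gives $r_k^{\theta} \le c(1-\theta)b\,\|z_k - z_{k-1}\|_F$, and combining with (i) gives the recursion $r_k - r_{k+1} \ge a\|z_{k+1}-z_k\|^2_F$; together these drive the tail-length sum $S_k = \sum_{j\ge k}\|z_{j+1}-z_j\|_F$, for which $\|z_k - \widehat z\|_F \le S_k$.

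The final step is to run the standard dichotomy on $\theta$, exactly as in Theorem~2 of \cite{attouch:cr}: $\theta = 0$ forces termination in finitely many steps; $\theta \in (0, 1/2]$ yields a geometric bound $S_k \le C\gamma^k$; and $\theta \in (1/2, 1)$ yields the polynomial bound $S_k \le C k^{-\frac{1-\theta}{2\theta-1}}$. Since $\|z_k - \widehat z\|_F \le S_k$, these transfer verbatim to the stated bounds on $\|[U_k^T, V_k^T] - [\widehat U^T, \widehat V^T]\|_F$. I expect the main obstacle to be the relative-error estimate (ii): one must check that the element extracted from the proximal optimality conditions genuinely belongs to the limiting subdifferential $\partial F(z_{k+1})$ of the sum of the nonsmooth trace-norm blocks and the smooth loss, and that the Lipschitz bookkeeping collapses to a single uniform constant $b$ rather than a $k$-dependent one.
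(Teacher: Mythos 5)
Your proposal is correct and follows essentially the same route as the paper: the paper itself gives no separate detailed proof of this theorem, deriving it directly from the KL property of $F$ (justified via semi-algebraicity, as in the proof of Theorem \ref{the4}) together with Theorem 2 of \cite{attouch:cr}, which is exactly the abstract sufficient-decrease/relative-error machinery you invoke. Your write-up merely makes explicit the verification of the two structural conditions and the uniformity of the constants, which the paper leaves implicit by citing the PALM framework of \cite{bolte:palm}.
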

\vspace{-3mm}

Theorem \ref{the5} shows us the convergence rate of our PALM algorithm for solving the non-convex and non-smooth bi-trace quasi-norm problem \eqref{Fm2} with the squared loss $\|\!\cdot\!\|^{2}_{2}$. Moreover, we can see that the convergence rate of our PALM algorithm is at least sub-linear.

\section{Recovery Guarantees}
We provide theoretical guarantees for our Bi-tr quasi-norm minimization in recovering low-rank matrices from small sets of linear observations. By using the null-space property (NSP), we first provide a sufficient condition for exact recovery of low-rank matrices. We then establish the restricted strong convexity (RSC) condition based and MC error bounds.

\subsection{Null Space Property}
The wide use of NSP for recovering sparse vectors and low-rank matrices can be found in~\cite{foucart:lp, oymak:lrm}. We give the sufficient and necessary condition for exact recovery via our bi-trace quasi-norm model \eqref{Fm1} that improves the NSP condition for the Schatten-$p$ quasi-norm in~\cite{oymak:lrm}. Let $U_{\star}\!\!=\!\!L_{(d)}\Sigma^{1/2}_{(d)}\!\!\in\!\mathbb{R}^{m\times d}$, $V_{\star}\!\!=\!\!R_{(d)}\Sigma^{1/2}_{(d)}\!\!\in\!\mathbb{R}^{n\times d}$ and $\Sigma_{(d)}\!\!=\!\!\textup{diag}([\sigma_{1}(X_{0}),\!\ldots\!,\sigma_{r}(X_{0}),0,\!\ldots\!,0])\!\!\in\!\!\mathbb{R}^{d\times d}$, where $L_{(d)}$ and $R_{(d)}$ denote the matrices consisting the top $d$ left and right singular vectors of the true matrix $X_{0}$ (which satisfies $\mathcal{A}(X_{0})\!=\!b$) with rank at most $r$ $(r\!\leq\! d)$. $\mathcal{N}(\mathcal{A})\!:=\!\{X \!\in\!\mathbb{R}^{m\times n}\!:\!\mathcal{A}(X)\!=\!\mathbf{0}\}$ denotes the null space of the linear operator $\mathcal{A}$. Then we have the following theorem, the proof of which is provided in the Supplementary Materials.

\begin{theorem}\label{Recovery Th1}
$X_{0}$ can be uniquely recovered by \eqref{Fm1}, if and only if for any $Z=U_{\star}W^{T}_{2}+W_{1}V^{T}_{\star}+W_{1}W^{T}_{2}\in \mathcal{N}(\mathcal{A})\setminus\{\mathbf{0}\}$, where $W_{1}\in \mathbb{R}^{m\times d}$, $W_{2}\in \mathbb{R}^{n\times d}$, we have
\vspace{-2mm}
\begin{equation}
\sum^{r}_{i=1}\!\sigma_{i}(W_{1})\!+\!\sigma_{i}(W_{2})<\sum^{d}_{i=r+1}\!\sigma_{i}(W_{1})\!+\!\sigma_{i}(W_{2}).
\end{equation}
\end{theorem}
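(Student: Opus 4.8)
The plan is to recast recovery in perturbation coordinates and reduce the statement to a single scalar inequality between the singular values of the factor perturbations. First I would record that, because $L_{(d)}$ and $R_{(d)}$ have orthonormal columns, the factors satisfy $\|U_{\star}\|_{\textup{tr}}=\|V_{\star}\|_{\textup{tr}}=\sum_{i=1}^{r}\sqrt{\sigma_i(X_0)}$, so that $(\|U_{\star}\|_{\textup{tr}}+\|V_{\star}\|_{\textup{tr}})/2=\sqrt{\|X_0\|_{S_{1/2}}}$; by Theorem~\ref{the1} and Property~\ref{pro1} this is exactly the optimal value of \eqref{Fm1} at $X_0$, i.e. $(U_{\star},V_{\star})$ is an optimal factorization of $X_0$. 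Any competitor is $X=(U_{\star}+W_1)(V_{\star}+W_2)^{T}=X_0+Z$ with $Z=U_{\star}W_2^{T}+W_1V_{\star}^{T}+W_1W_2^{T}$; it is feasible iff $Z\in\mathcal{N}(\mathcal{A})$ and differs from $X_0$ iff $Z\neq\mathbf{0}$. Since the bi-trace value of a competitor is the minimum over its factorizations and this minimum is attained, $X_0$ is uniquely recovered iff for every valid $(W_1,W_2)$ with $Z\neq\mathbf{0}$ one has $\|U_{\star}+W_1\|_{\textup{tr}}+\|V_{\star}+W_2\|_{\textup{tr}}>\|U_{\star}\|_{\textup{tr}}+\|V_{\star}\|_{\textup{tr}}$, which I will abbreviate as $(\ast)$.

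The engine of the proof is the following perturbation inequality for the trace norm: if $\textrm{rank}(A)\le r$ then for any $B$ of the same size $\|A+B\|_{\textup{tr}}\ge\|A\|_{\textup{tr}}-\sum_{i=1}^{r}\sigma_i(B)+\sum_{i>r}\sigma_i(B)$, with equality when $B$ shares the singular subspaces of $A$ and its top-$r$ block is anti-aligned with $A$. I would prove this by first reducing to the aligned (simultaneously diagonal) configuration, showing through a pinching/majorization argument that misaligning the singular vectors of $B$ relative to $A$ can only inflate $\|A+B\|_{\textup{tr}}$, and then, in the aligned case, reading off the claim as a scalar rearrangement inequality that crucially uses the ordering $\sigma_1(B)\ge\sigma_2(B)\ge\cdots$. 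Applying this to the pairs $(U_{\star},W_1)$ and $(V_{\star},W_2)$, both of rank $r$, and adding gives $\|U_{\star}+W_1\|_{\textup{tr}}+\|V_{\star}+W_2\|_{\textup{tr}}\ge\|U_{\star}\|_{\textup{tr}}+\|V_{\star}\|_{\textup{tr}}+\big(\sum_{i>r}-\sum_{i\le r}\big)[\sigma_i(W_1)+\sigma_i(W_2)]$.

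Sufficiency is then immediate: when the null-space inequality holds for the given $(W_1,W_2)$, the parenthesised term is strictly positive, the displayed bound yields $(\ast)$, and hence the objective of every feasible $X\neq X_0$ strictly exceeds that of $X_0$, so $X_0$ is the unique minimizer.

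For necessity I would argue by contraposition: assuming some valid $(W_1,W_2)$ with $Z\neq\mathbf{0}$ violates the inequality, I would build a feasible competitor whose objective is no larger than that of $X_0$ by realising the perturbation in aligned, anti-signed form, so that the trace-norm bound is \emph{attained}, while keeping the associated $Z$ inside $\mathcal{N}(\mathcal{A})$. This last point is where I expect the main difficulty: unlike the purely linear null-space arguments for the trace norm or the $\ell_1$ norm, here $Z$ depends bilinearly on $(W_1,W_2)$ through the cross term $W_1W_2^{T}$, and the objective is invariant under re-factorization, so $(\ast)$ holding for a single factorization is not literally equivalent to the scalar singular-value inequality. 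Controlling the bilinear term and producing an aligned competitor that remains in the null space (so that the Key Lemma is tight on it) is the crux I anticipate spending the most effort on; the reduction-to-aligned step inside the Key Lemma is the other technical linchpin.
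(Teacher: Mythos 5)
Your proposal follows essentially the same route as the paper's proof: both reduce unique recovery to the factor-level condition $(\ast)$ that $\|U_{\star}+W_{1}\|_{\textup{tr}}+\|V_{\star}+W_{2}\|_{\textup{tr}}>\|U_{\star}\|_{\textup{tr}}+\|V_{\star}\|_{\textup{tr}}$, invoke the same trace-norm perturbation bound (the paper imports it as Lemma~\ref{AppC1} with $p=1$ from a cited reference rather than re-proving it by pinching/majorization), obtain sufficiency from strict positivity of the residual term, and obtain necessity by contraposition via an anti-aligned competitor on which the bound is attained. The difficulty you flag at the end --- that the converse needs the perturbation bound to be tight on a competitor whose $Z$ still lies in $\mathcal{N}(\mathcal{A})$ despite the bilinear dependence of $Z$ on $(W_{1},W_{2})$ --- is genuine, and the paper's own converse simply posits $(W_{1})_{r}=-U_{\star}$ and $(W_{2})_{r}=-V_{\star}$ without justifying that such an aligned violating pair exists, so you have identified, rather than introduced, the weak point of the argument.
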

\vspace{-2mm}

\textbf{Remark}: Since $\Gamma\!\subset\! \mathcal{N}(\mathcal{A})$, where $\Gamma\!=\!\{Z|Z\!=\!U_{\star}W^{T}_{2}\!+\!W_{1}V^{T}_{\star}\!+\!W_{1}W^{T}_{2},\,Z\!\in\! \mathcal{N}(\mathcal{A})\!\setminus\!\{\mathbf{0}\}\}$, the sufficient condition in Theorem \ref{Recovery Th1} is weaker than the corresponding sufficient condition for the Schatten-$p$ quasi-norm in~\cite{oymak:lrm}.

\subsection{RSC based Error Bound}
Unlike most of existing recovery guarantees as in~\cite{zhang:ncmr, oymak:lrm}, we do not impose the restricted isometry property (RIP) on the general operator $\mathcal{A}$, rather, we require the operator $\mathcal{A}$ to satisfy a weaker and more general condition known as restricted strong convexity (RSC)~\cite{negahban:rsc}, as shown in the following.

\begin{assumption}[RSC]
We suppose that there is a positive constant $\kappa(\mathcal{A})$ such that the general operator $\mathcal{A}:\mathbb{R}^{m\times n}\!\rightarrow\!\mathbb{R}^{l}$ satisfies the following inequality
\vspace{-2mm}
\begin{displaymath}
\frac{1}{\sqrt{l}}\|\mathcal{A}(\Delta)\|_{2}\geq\kappa(\mathcal{A})\|\Delta\|_{F}
\end{displaymath}
for all $\Delta\in\mathbb{R}^{m\times n}$.
\end{assumption}
\vspace{-2mm}

We mainly provide the RSC based error bound for robust recovery via our bi-trace quasi-norm algorithm with noisy measurements. To our knowledge, our recovery guarantee analysis is the first one for solutions generated by Schatten quasi-norm algorithms, not for the global optima\footnote{It is well known that the Schatten-$p$ quasi-norm ($0\!\!<\!\!p\!\!<\!\!1$) problems in~\cite{marjanovic:mc, nie:rmc, lai:irls, lu:lrm, lu:irsvm} are non-convex, non-smooth and non-Lipschitz~\cite{bian:ipa}. The recovery guarantees in~\cite{rohde:lrm, zhang:ncmr, oymak:lrm} are naturally based on the global optimal solution of associated models.} of \eqref{Fm2} as in~\cite{rohde:lrm, zhang:ncmr, oymak:lrm}.

\begin{theorem}\label{Recovery Th3}
Assume $X_{0}\in \mathbb{R}^{m\times n}$ is a true matrix and the corrupted measurements $\mathcal{A}(X_{0})+e=b$, where $e$ is noise with $\|e\|_{2}\leq \epsilon$. Let $(\hat{U},\hat{V})$ be a critical point of \eqref{Fm2} with the squared loss $\|\!\cdot\!\|^{2}_{2}$, and suppose the operator $\mathcal{A}$ satisfies the RSC condition with a constant $\kappa(\mathcal{A})$. Then
\vspace{-2mm}
\begin{displaymath}
\frac{\|X_{0}\!-\!\hat{U}\hat{V}^{T}\|_{F}}{\sqrt{m n}}\!\leq\! \frac{\epsilon}{\kappa(\mathcal{A})\sqrt{lmn}}\!+\!\frac{\mu\sqrt{d}}{2C_{1}\kappa(\mathcal{A})\sqrt{lmn}},
\end{displaymath}
where $C_{1}=\frac{\|\mathcal{A}^{*}(b-\mathcal{A}(\hat{U}\hat{V}^{T}))\hat{V}\|_{F}}{\|b-\mathcal{A}(\hat{U}\hat{V}^{T})\|_{2}}$.
\end{theorem}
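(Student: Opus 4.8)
The plan is to combine the RSC lower bound on $\mathcal{A}$ with the first-order (critical point) optimality condition satisfied by $(\hat U,\hat V)$, using the subdifferential of the trace norm to control the data residual. Write $\hat X=\hat U\hat V^T$ and $\Delta=X_0-\hat X$. First I would apply the RSC assumption directly to $\Delta$, which gives $\|\Delta\|_F\le \frac{1}{\sqrt{l}\,\kappa(\mathcal{A})}\|\mathcal{A}(\Delta)\|_2$. Since $\mathcal{A}(X_0)=b-e$, we have $\mathcal{A}(\Delta)=(b-\mathcal{A}(\hat X))-e$, so the triangle inequality together with $\|e\|_2\le\epsilon$ yields $\|\mathcal{A}(\Delta)\|_2\le\|b-\mathcal{A}(\hat X)\|_2+\epsilon$. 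Dividing by $\sqrt{mn}$ already produces the $\epsilon$-term of the claimed bound, and it remains only to show the residual estimate $\|b-\mathcal{A}(\hat X)\|_2\le \mu\sqrt{d}/(2C_1)$.

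The next step is to exploit that $(\hat U,\hat V)$ is a critical point of \eqref{Fm2} with $f(\cdot)=\tfrac12\|\cdot\|_2^2$. Taking the $U$-block stationarity condition and writing $0\in \tfrac12\partial\|\hat U\|_{\textup{tr}}+\tfrac1\mu\mathcal{A}^{*}(\mathcal{A}(\hat X)-b)\hat V$, I would deduce that $\mathcal{A}^{*}(b-\mathcal{A}(\hat X))\hat V=\tfrac{\mu}{2}G$ for some subgradient $G\in\partial\|\hat U\|_{\textup{tr}}$. Because the spectral norm is dual to the trace norm, every such $G$ satisfies $\|G\|_2\le 1$; and since $G\in\mathbb{R}^{m\times d}$ has rank at most $d$, this gives $\|G\|_F\le\sqrt{d}\,\|G\|_2\le\sqrt{d}$. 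Hence $\|\mathcal{A}^{*}(b-\mathcal{A}(\hat X))\hat V\|_F\le \mu\sqrt{d}/2$. Substituting the definition $C_1=\|\mathcal{A}^{*}(b-\mathcal{A}(\hat X))\hat V\|_F/\|b-\mathcal{A}(\hat X)\|_2$ converts this into $C_1\|b-\mathcal{A}(\hat X)\|_2\le \mu\sqrt{d}/2$, i.e.\ precisely the required residual bound (assuming $b\neq\mathcal{A}(\hat X)$, else the residual term vanishes and there is nothing to prove). Finally I would assemble the pieces: inserting $\|b-\mathcal{A}(\hat X)\|_2\le \mu\sqrt{d}/(2C_1)$ into the RSC estimate and dividing by $\sqrt{mn}$ recovers exactly the two-term bound
\[
\frac{\|X_0-\hat U\hat V^T\|_F}{\sqrt{mn}}\le \frac{\epsilon}{\kappa(\mathcal{A})\sqrt{lmn}}+\frac{\mu\sqrt{d}}{2C_1\kappa(\mathcal{A})\sqrt{lmn}}.
\]

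The main obstacle I expect is the stationarity condition itself. Since \eqref{Fm2} is non-convex (bilinear in $U$ and $V$) and non-smooth, the notion of ``critical point'' must be read through the appropriate limiting subdifferential, so I would need to justify that, with $V$ frozen at $\hat V$, the $U$-block subdifferential of the objective splits cleanly as the sum of the convex trace-norm subdifferential $\tfrac12\partial\|\hat U\|_{\textup{tr}}$ and the gradient $\tfrac1\mu\mathcal{A}^{*}(\mathcal{A}(\hat X)-b)\hat V$ of the smooth term; this is exactly the separability that PALM critical points inherit from the block structure. A second, milder subtlety is the uniform bound $\|G\|_F\le\sqrt{d}$, which is clean here only because $\hat U$ has $d$ columns, forcing every subgradient to have rank at most $d$; this is precisely where the factored dimension $d$ (rather than $m$ or $n$) enters the error bound.
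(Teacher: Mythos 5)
Your proposal is correct and follows essentially the same route as the paper's proof: RSC plus the triangle inequality to split off the $\epsilon$-term, then the $U$-block stationarity condition $\tfrac{1}{\mu}\mathcal{A}^{*}(b-\mathcal{A}(\hat{U}\hat{V}^{T}))\hat{V}\in\tfrac12\partial\|\hat{U}\|_{\textup{tr}}$, the spectral-norm bound on trace-norm subgradients, and the rank-$d$ inequality $\|\cdot\|_{F}\le\sqrt{d}\,\|\cdot\|_{2}$ to bound the residual via $C_{1}$. The only cosmetic difference is that you isolate the residual estimate $\|b-\mathcal{A}(\hat{X})\|_{2}\le\mu\sqrt{d}/(2C_{1})$ as a separate step, whereas the paper substitutes the definition of $C_{1}$ directly inside the chain of inequalities.
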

\vspace{-2mm}

The proof of Theorem \ref{Recovery Th3} and the analysis of lower-boundedness of $C_{1}$ is provided in the Supplementary Materials.

\subsection{Error Bound on Matrix Completion}
Although the MC problem is a practically important application of \eqref{Fm2}, the projection operator $\mathcal{P}_{\Omega}$ in \eqref{ebmc1} does not satisfy the standard RIP and RSC conditions in general~\cite{candes:emc, candes:mcn, jain:svp}. Therefore, we also need to provide the recovery guarantee for performance of our Bi-tr quasi-norm minimization for solving the following MC problem.
\vspace{-2mm}
\begin{equation}\label{ebmc1}
\min_{U,V} \!\left\{\!\frac{\|U\|_{\textup{tr}}\!\!+\!\!\|V\|_{\textup{tr}}}{2}\!+\!\frac{1}{2\mu}\!\|\mathcal{P}_{\Omega}(U\!V^{T}\!)\!\!-\!\!\mathcal{P}_{\Omega}(D)\|^{2}_{F}\!\right\}.
\end{equation}
\vspace{-4mm}

Without loss of generality, assume that the observed matrix $D\!\in\! \mathbb{R}^{m\times n}$ can be decomposed as a true matrix $X_{0}$ of rank $r\!\leq\! d$ and a random Gaussian noise $E$, i.e., $D\!=\!X_{0}\!+\!E$. We give the following recovery guarantee for our Bi-tr quasi-norm minimization \eqref{ebmc1}.

\begin{theorem}\label{Recovery Th4}
Let $(\widehat{U},\widehat{V})$ be a critical point of the problem \eqref{ebmc1} with given rank $d$, and $m\geq n$. Then there exists an absolute constant $C_{2}$, such that with probability at least $1-2\exp(-m)$,
\vspace{-2mm}
\begin{equation*}
\frac{\|\!X_{0}\!-\!\widehat{U}\widehat{V}^{T}\!\|_{F}}{\sqrt{m n}}\!\!\leq\!\! \frac{\|E\|_{F}}{\sqrt{mn}}\!+\!C_{2}\delta\!\!\left(\!\frac{md\log(m)}{|\Omega|}\!\right)^{1/4}\!\!+\!\frac{\mu\sqrt{d}}{2C_{3}\!\sqrt{|\Omega|}},
\end{equation*}
\vspace{-2mm}

where $\delta\!=\!\max_{i,j}|D_{i,j}|$ and $C_{3}\!=\!\frac{\|\mathcal{P}_{\Omega}(D-\hat{U}\hat{V}^{T}\!)\hat{V}\|_{F}}{\|\mathcal{P}_{\Omega}(D-\hat{U}\hat{V}^{T}\!)\|_{F}}$.
\end{theorem}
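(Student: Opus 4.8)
The plan is to start from the first-order optimality conditions satisfied by the critical point $(\widehat{U},\widehat{V})$ and then transfer the resulting control of the \emph{in-sample} residual $\mathcal{P}_{\Omega}(D-\widehat{U}\widehat{V}^{T})$ to the full reconstruction error by a restricted norm-preservation argument. Writing $\widehat{X}=\widehat{U}\widehat{V}^{T}$, criticality of $(\widehat{U},\widehat{V})$ for \eqref{ebmc1} means there is a subgradient $G_{\widehat{U}}\in\partial\|\widehat{U}\|_{\textup{tr}}$ with $\tfrac{1}{2}G_{\widehat{U}}+\tfrac{1}{\mu}\mathcal{P}_{\Omega}(\widehat{X}-D)\widehat{V}=\mathbf{0}$. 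Since every subgradient of the trace norm has spectral norm at most $1$, and $\mathcal{P}_{\Omega}(\widehat{X}-D)\widehat{V}\in\mathbb{R}^{m\times d}$ has rank at most $d$, I would bound $\|G_{\widehat{U}}\|_{F}\leq\sqrt{d}\,\|G_{\widehat{U}}\|_{2}\leq\sqrt{d}$, giving $\|\mathcal{P}_{\Omega}(D-\widehat{X})\widehat{V}\|_{F}=\tfrac{\mu}{2}\|G_{\widehat{U}}\|_{F}\leq\tfrac{\mu\sqrt{d}}{2}$. Dividing by the denominator in the definition of $C_{3}$ then yields the in-sample bound $\|\mathcal{P}_{\Omega}(D-\widehat{X})\|_{F}\leq\tfrac{\mu\sqrt{d}}{2C_{3}}$, which after division by $\sqrt{|\Omega|}$ is exactly the third term of the claimed inequality.

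Next I would pass from the sampled norm to the full Frobenius norm. The key observation is that $\Delta:=X_{0}-\widehat{X}$ has rank at most $r+d\leq 2d$ and entries bounded in magnitude by a constant multiple of $\delta=\max_{i,j}|D_{i,j}|$, so it lies in a low-complexity matrix class. For such a class I would invoke (or re-derive via a covering-net plus matrix-Bernstein / Rademacher-complexity argument, as in the matrix-completion literature) a restricted norm-preservation inequality of the form
\begin{equation*}
\frac{\|\Delta\|_{F}}{\sqrt{mn}}\leq\frac{\|\mathcal{P}_{\Omega}(\Delta)\|_{F}}{\sqrt{|\Omega|}}+C_{2}\,\delta\left(\frac{md\log(m)}{|\Omega|}\right)^{1/4},
\end{equation*}
holding uniformly over all rank-$\leq 2d$, entrywise-bounded $\Delta$ with probability at least $1-2\exp(-m)$ over the random sampling set $\Omega$. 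The $1/4$ exponent arises because the deviation of the \emph{squared} sampled norm from the squared population norm scales like $\delta^{2}\sqrt{md\log(m)/|\Omega|}$ before the square root is taken, the $md\log(m)$ factor is the effective dimension of the rank-$2d$ class, and the tail $2\exp(-m)$ matches the concentration bound.

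Finally I would combine the two pieces. Splitting $\mathcal{P}_{\Omega}(\Delta)=\mathcal{P}_{\Omega}(D-\widehat{X})-\mathcal{P}_{\Omega}(E)$ and applying the triangle inequality gives $\tfrac{\|\mathcal{P}_{\Omega}(\Delta)\|_{F}}{\sqrt{|\Omega|}}\leq\tfrac{\mu\sqrt{d}}{2C_{3}\sqrt{|\Omega|}}+\tfrac{\|\mathcal{P}_{\Omega}(E)\|_{F}}{\sqrt{|\Omega|}}$; since the Gaussian noise concentrates so that both $\tfrac{\|\mathcal{P}_{\Omega}(E)\|_{F}}{\sqrt{|\Omega|}}$ and $\tfrac{\|E\|_{F}}{\sqrt{mn}}$ sit close to the noise standard deviation, the sampled noise term is controlled by $\tfrac{\|E\|_{F}}{\sqrt{mn}}$ up to lower-order fluctuations absorbed into the second term. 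Substituting these into the norm-preservation inequality produces the stated three-term bound. I expect the main obstacle to be establishing the uniform restricted norm-preservation inequality: one must carefully (i) justify the entrywise boundedness of $\Delta$ by $\delta$ (e.g.\ through an implicit constraint or a clipping argument on $\widehat{X}$), and (ii) control the supremum of the empirical process over the non-convex rank-$2d$ set, which is precisely where the net construction, the $\log(m)$ factor, and the $\exp(-m)$ tail must be tracked. The optimality-condition step and the final triangle-inequality bookkeeping are, by contrast, routine.
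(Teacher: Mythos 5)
Your two key ingredients --- the first-order optimality condition giving $\|\mathcal{P}_{\Omega}(D-\widehat{U}\widehat{V}^{T})\widehat{V}\|_{F}\leq\mu\sqrt{d}/2$ (hence $\|\mathcal{P}_{\Omega}(D-\widehat{U}\widehat{V}^{T})\|_{F}\leq\mu\sqrt{d}/(2C_{3})$ after dividing by $C_{3}$), and a uniform ``sampled norm $\approx$ full norm'' inequality over a low-rank, entrywise-bounded class with error $C\delta\left(md\log (m)/|\Omega|\right)^{1/4}$ --- are exactly the paper's. The difference is the order in which you peel off the noise, and it matters. The paper first applies the plain triangle inequality to the \emph{full} norms, $\|X_{0}-\widehat{X}\|_{F}\leq\|E\|_{F}+\|D-\widehat{X}\|_{F}$, and only then transfers $\|D-\widehat{X}\|_{F}/\sqrt{mn}$ to $\|\mathcal{P}_{\Omega}(D-\widehat{X})\|_{F}/\sqrt{|\Omega|}$ via a concentration lemma (quoted from Wang et al.) in which $D$ is the fixed reference matrix and the supremum runs only over rank-$d$ matrices $\widehat{U}\widehat{V}^{T}$. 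That ordering means the noise $E$ never has to be sampled: the first term $\|E\|_{F}/\sqrt{mn}$ appears exactly, with no extra probabilistic event. You instead transfer $\Delta=X_{0}-\widehat{X}$ (rank at most $2d$, which is fine and only changes constants) and are then forced to split $\mathcal{P}_{\Omega}(\Delta)=\mathcal{P}_{\Omega}(D-\widehat{X})-\mathcal{P}_{\Omega}(E)$, leaving a term $\|\mathcal{P}_{\Omega}(E)\|_{F}/\sqrt{|\Omega|}$ that is \emph{not} bounded by $\|E\|_{F}/\sqrt{mn}$ deterministically (a sampling set concentrated on the large entries of $E$ violates it); your plan to ``absorb the fluctuation into the second term'' needs an additional Gaussian concentration argument and a union bound, and even then it would not reproduce the theorem's first term exactly. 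This is the one real gap in your route, and it is entirely avoidable by doing the triangle inequality before, rather than after, the norm-transfer step. The entrywise-boundedness caveat you flag for $\widehat{X}$ is genuine, but it is equally present (and equally glossed over) in the paper's own invocation of the lemma.
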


The proof of Theorem \ref{Recovery Th4} and the analysis of lower-boundedness of $C_{3}$ can be found in the Supplementary Materials. When the samples size $|\Omega|\!\gg\! md\log(m)$, the second and third terms diminish, and the recovery error is essentially bounded by the ``average" magnitude of entries of noise $E$. In other words, only $O(md\log(m))$ observed entries are needed, significantly lower than $O(mr\log^{2}(m))$ in standard matrix completion theories~\cite{candes:mcn, keshavan:mc, recht:nnm}, which will be confirmed by the following experimental results.

\section{Experimental Results}
We evaluate both the effectiveness and efficiency of our methods (i.e., the Bi-tr and Tri-tr methods) for solving MC and RPCA problems, such as collaborative filtering and text separation. All experiments were conducted on an Intel Xeon E7-4830V2 2.20GHz CPU with 64G RAM.

\begin{figure*}[t]
\begin{center}
\setlength{\belowcaptionskip}{-6pt}
{\includegraphics[width=0.4925\columnwidth]{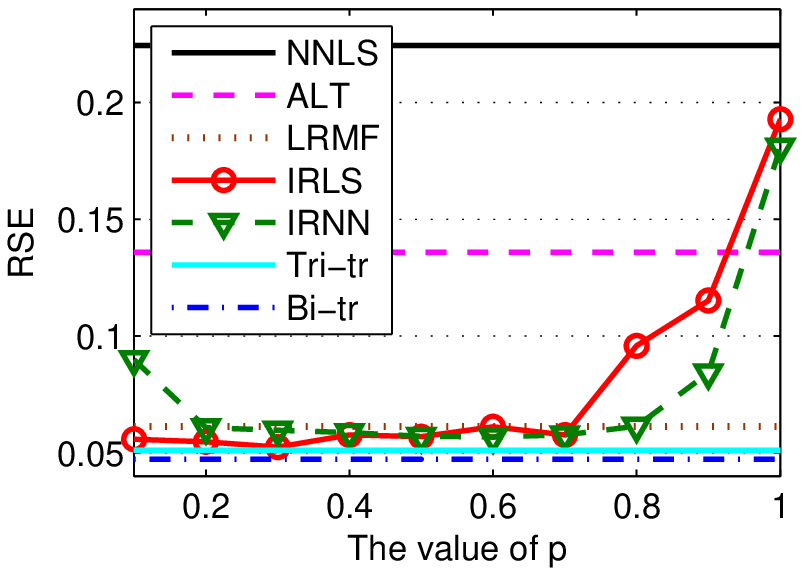}}\,
{\includegraphics[width=0.4925\columnwidth]{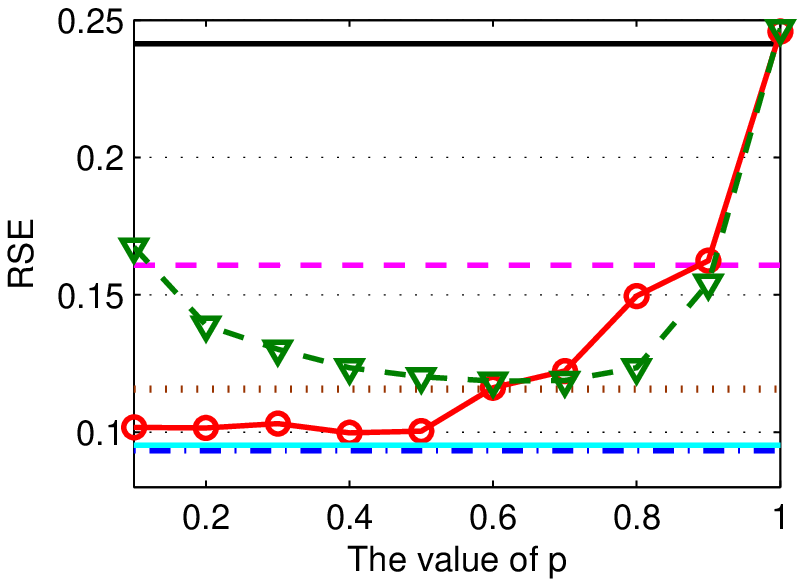}}\,
{\includegraphics[width=0.4925\columnwidth]{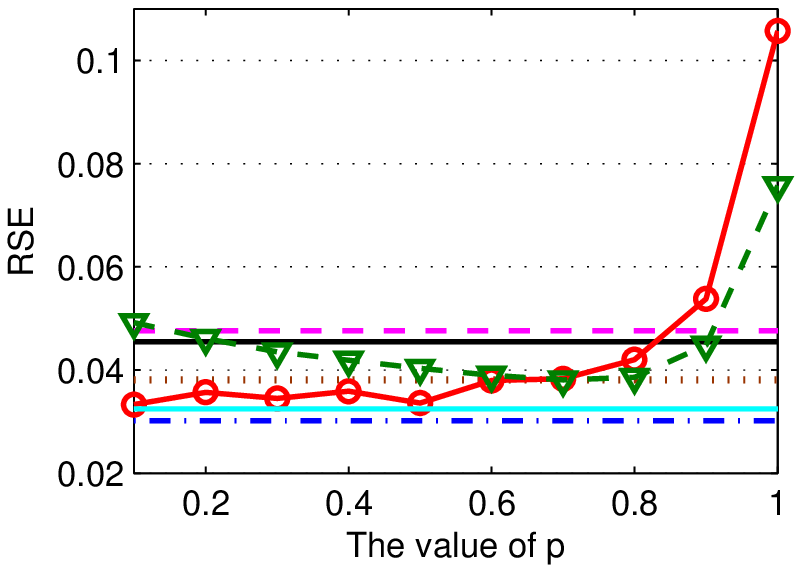}}\,
{\includegraphics[width=0.4925\columnwidth]{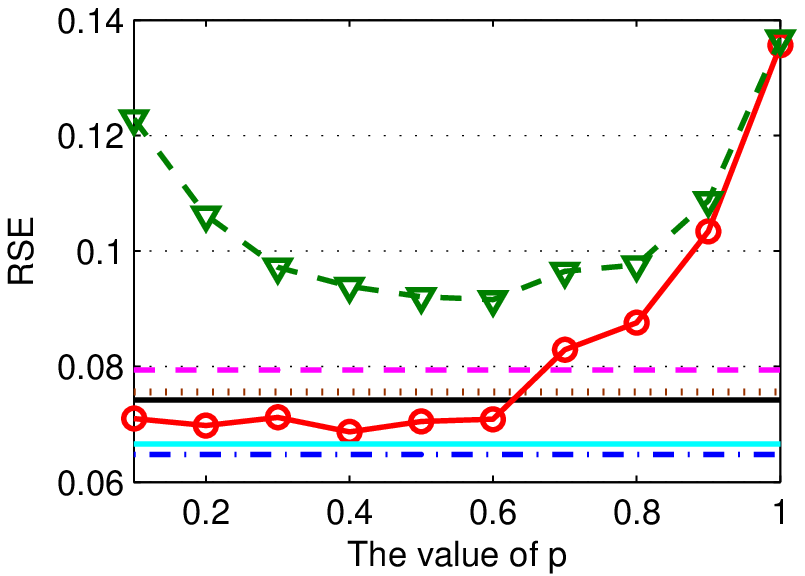}}
\subfigure[20\% SR and $nf\!=\!0.1$]{\includegraphics[width=0.4925\columnwidth]{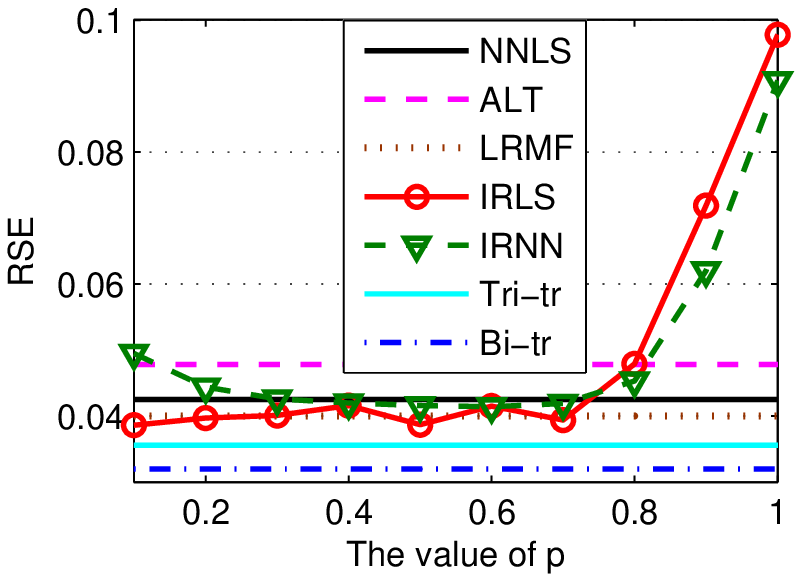}}\,
\subfigure[20\% SR and $nf\!=\!0.2$]{\includegraphics[width=0.4925\columnwidth]{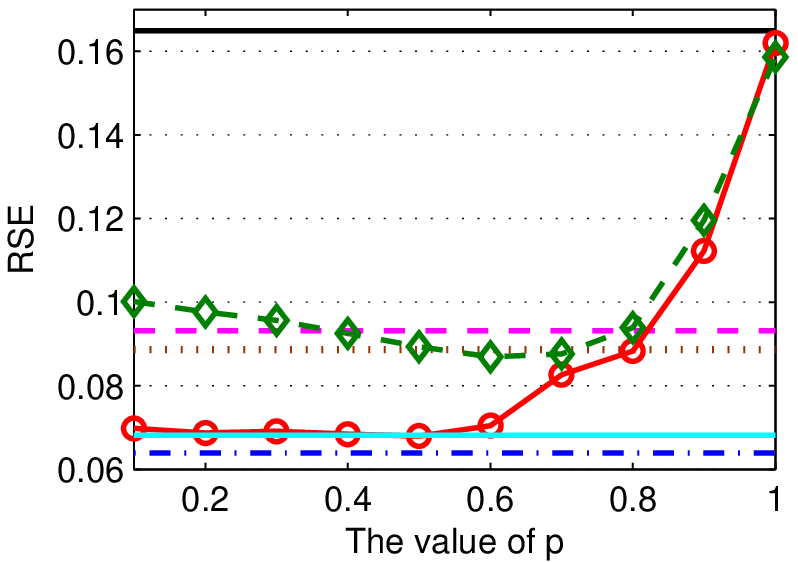}}\,
\subfigure[30\% SR and $nf\!=\!0.1$]{\includegraphics[width=0.4925\columnwidth]{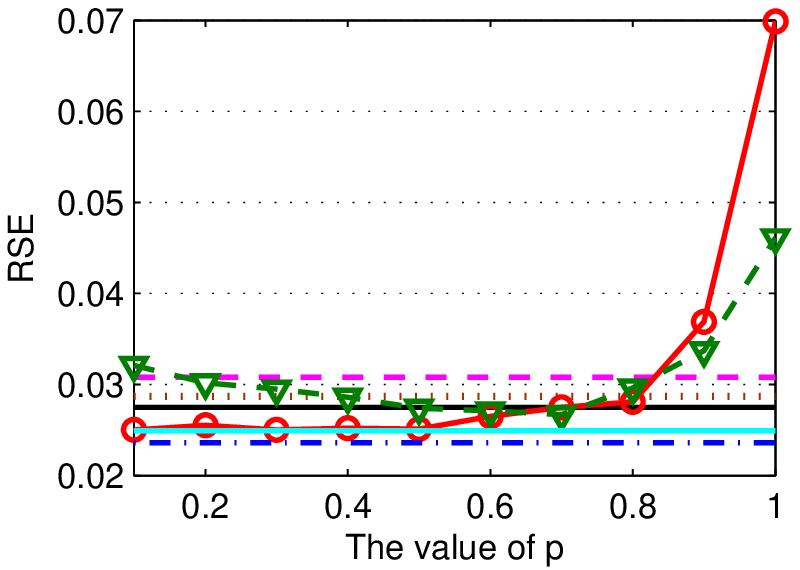}}\,
\subfigure[30\% SR and $nf\!=\!0.2$]{\includegraphics[width=0.4925\columnwidth]{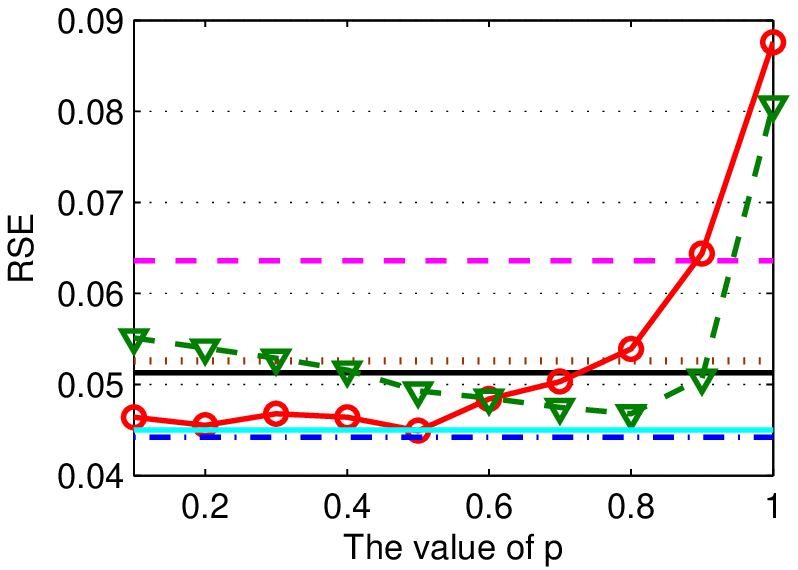}}
\caption{The recovery accuracy of NNLS, ALT, LRMF, IRLS, IRNN, and our Tri-tr and Bi-tr methods on noisy random matrices of size $100\times100$ (the first row) or $200\times200$ (the second row).}
\label{fig_sim1}
\end{center}
\end{figure*}

\subsection{Synthetic Matrix Completion}
The synthetic matrices $X_{0}\!\in\!\mathbb{R}^{m\times n}$ with rank $r$ are generated randomly by the following procedure: the entries of both random matrices $P\!\in\!\mathbb{R}^{m\times r}$ and $Q\!\in\!\mathbb{R}^{n\times r}$ are first generated as independent and identically distributed (i.i.d.) numbers, and then $X_{0}\!=\!PQ^{T}$ is assembled. The experiments are conducted on random matrices with different noise factors, $nf\!=\!0.1$ or $0.2$, where the observed subset is corrupted by i.i.d.\ standard Gaussian random variables as in~\cite{shang:snm}. In both cases, the sampling ratio (SR) is set to 20\% or 30\%. We use the relative standard error ($\textup{RSE}\!:=\!\|X\!-\!X_{0}\|_{F}/\|X_{0}\|_{F}$) as the evaluation measure, where $X$ denotes the recovered matrix.

We compare our methods with two trace norm solvers: NNLS~\cite{toh:apg} and ALT~\cite{hsieh:nnm}, one bilinear spectral regularization method, LRMF~\cite{mitra:lsmf}, and two Schatten-$p$ norm methods, IRLS~\cite{lai:irls} and IRNN~\cite{lu:lrm}. The recovery results of IRLS and IRNN ($p\!\in\!\{0.1,0.2,\ldots,1\}$) on noisy random matrices are shown in Figure~\ref{fig_sim1}, from which we can observe that as a scalable alternative to trace norm regularization, LRMF with relatively small ranks often obtains more accurate solutions than its trace norm counterparts, i.e., NNLS and ALT. If $p$ is chosen from the range of $\{0.3,0.4,0.5,0.6\}$, IRLS and IRNN have similar performance, and usually outperform NNLS, ALT and LRMF in terms of RSE, otherwise they sometimes perform much worse than the latter three methods, especially $p\!=\!1$. This means that both our methods (which are in essence the Schatten-$1/2$ and $1/3$ quasi-norm algorithms) should perform better than them. As expected, the RSE results of both our methods under all of these settings are consistently much better than those of the other approaches. This clearly justifies the usefulness of our Bi-tr and Tri-tr quasi-norm penalties. Moreover, the running time of all these methods on random matrices with different sizes is provided in the Supplementary Materials, which shows that our methods are much faster than the other methods. This confirms that both our methods have very good scalability and can address large-scale problems.

\subsection{Collaborative Filtering}
We test our methods on the real-world recommendation system datasets: MovieLens1M, MovieLens10M and MovieLens20M\footnote{\url{http://www.grouplens.org/node/73}}, and Netflix~\cite{kdd:cup}. We randomly choose 90\% as the training set and the remaining as the testing set, and the experimental results are reported over 10 independent runs. Besides those methods used above, we also compare our methods to one of the fastest methods, LMaFit~\cite{wen:nsor}, and use the root mean squared error (RMSE) as evaluation measure.

\begin{figure}[t]
\begin{center}
\setlength{\belowcaptionskip}{-2pt}
\subfigure[MovieLens1M]{\includegraphics[width=0.492\columnwidth]{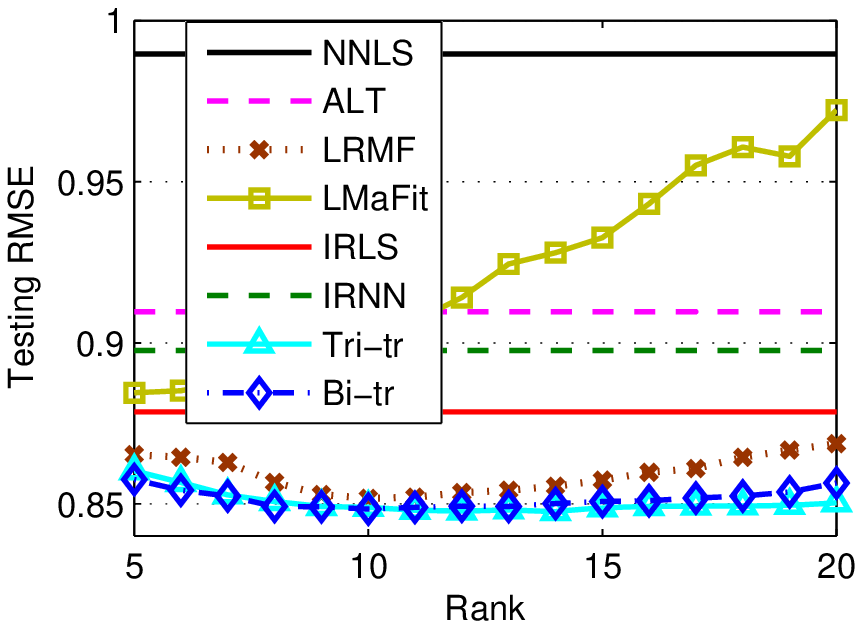}}
\subfigure[MovieLens10M]{\includegraphics[width=0.492\columnwidth]{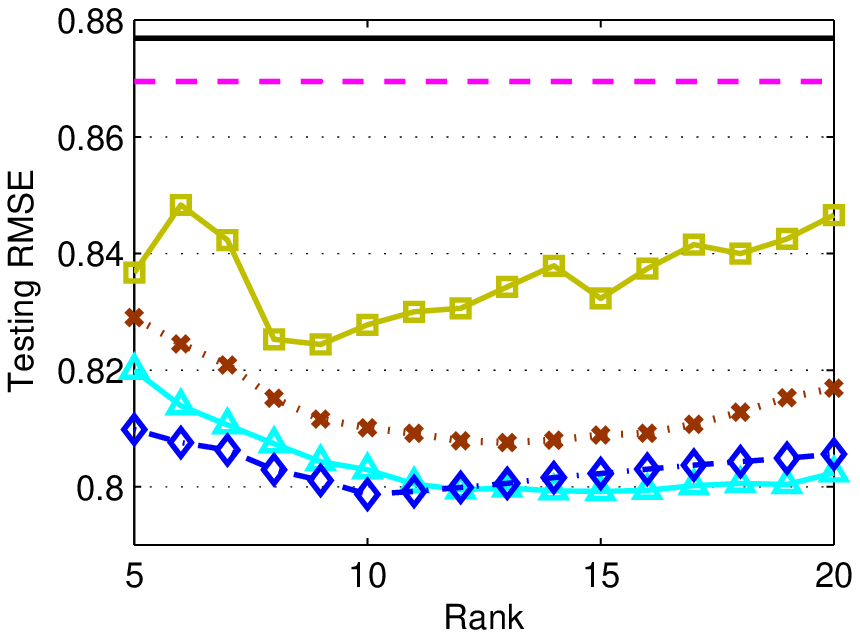}}
\subfigure[MovieLens20M]{\includegraphics[width=0.492\columnwidth]{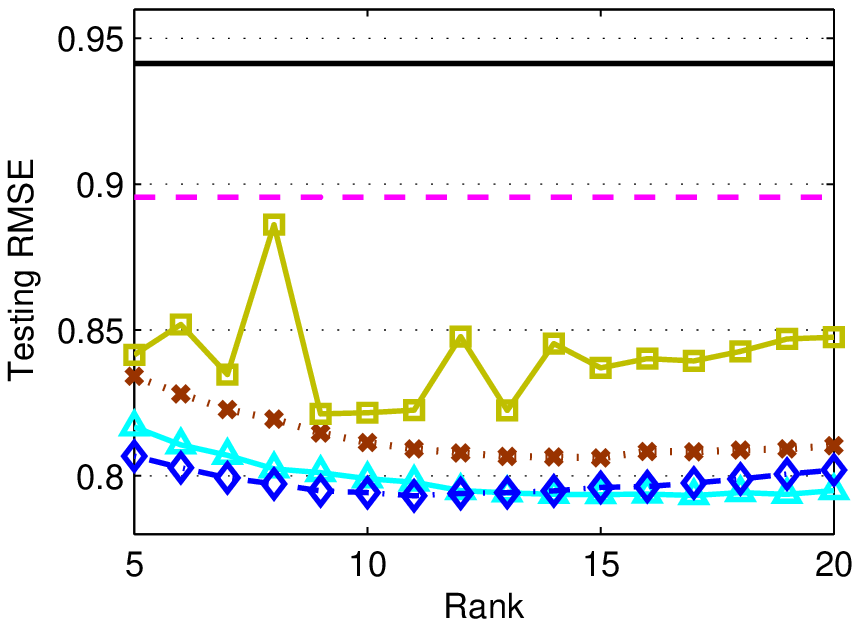}}
\subfigure[Netflix]{\includegraphics[width=0.492\columnwidth]{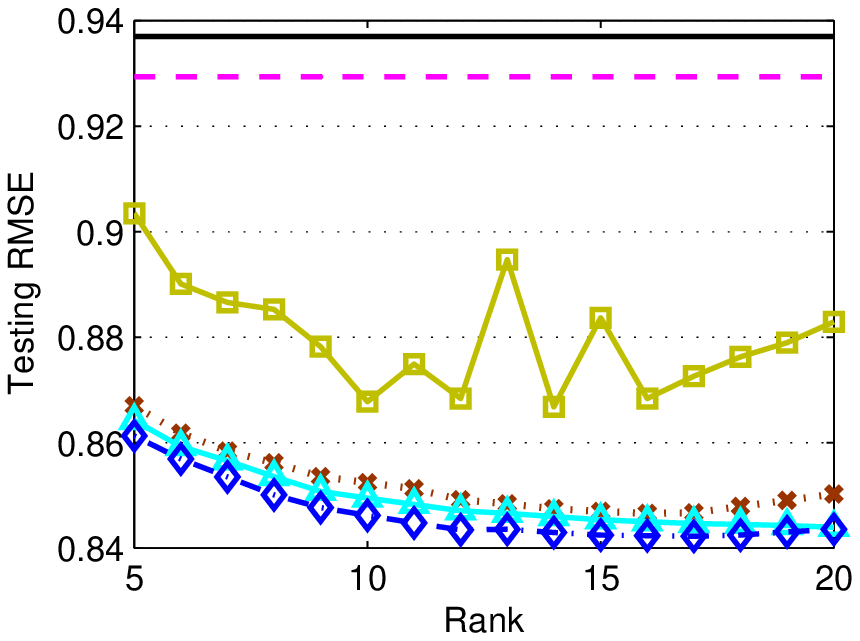}}
\caption{Evolution of the testing RMSE of different methods with ranks varying from 5 to 20.}
\label{fig_sim2}
\end{center}
\end{figure}

\begin{figure*}[t]
\centering
\includegraphics[width=0.137\linewidth]{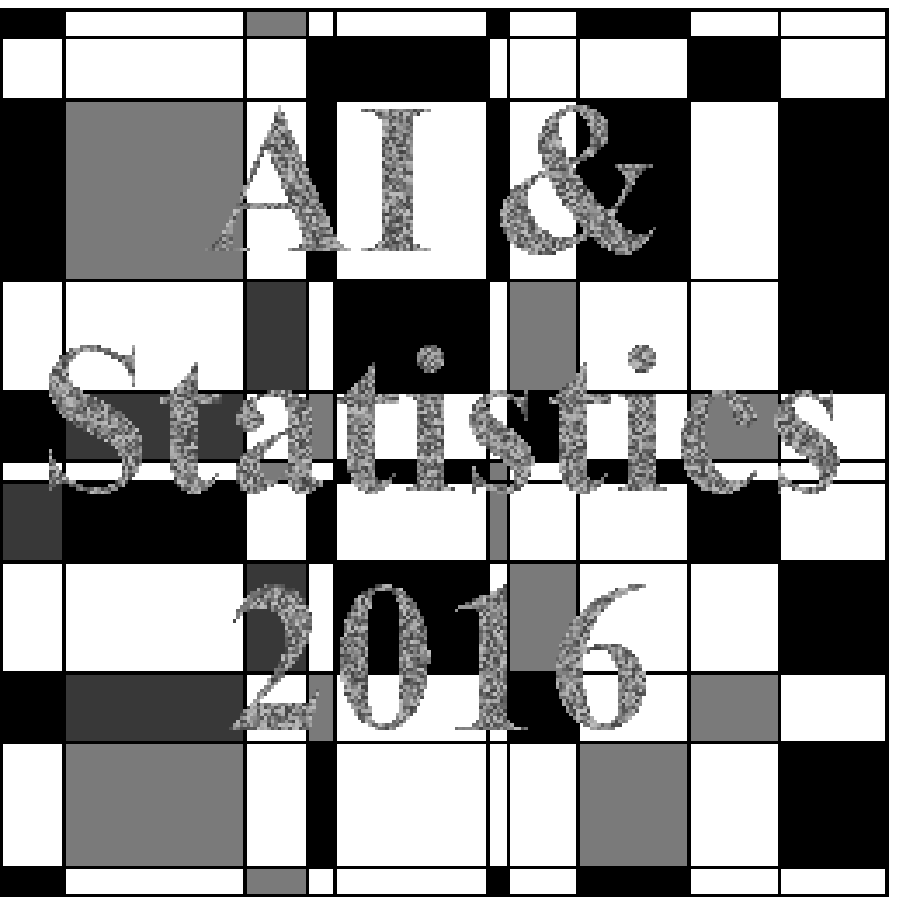}
\includegraphics[width=0.137\linewidth]{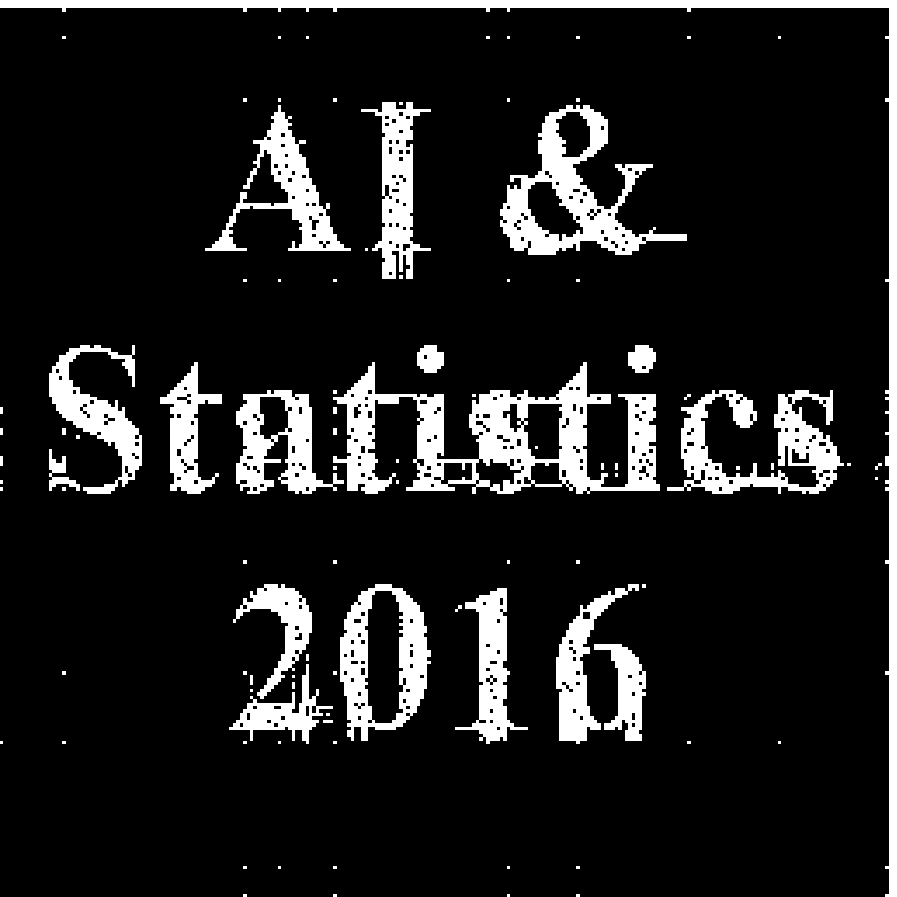}
\includegraphics[width=0.137\linewidth]{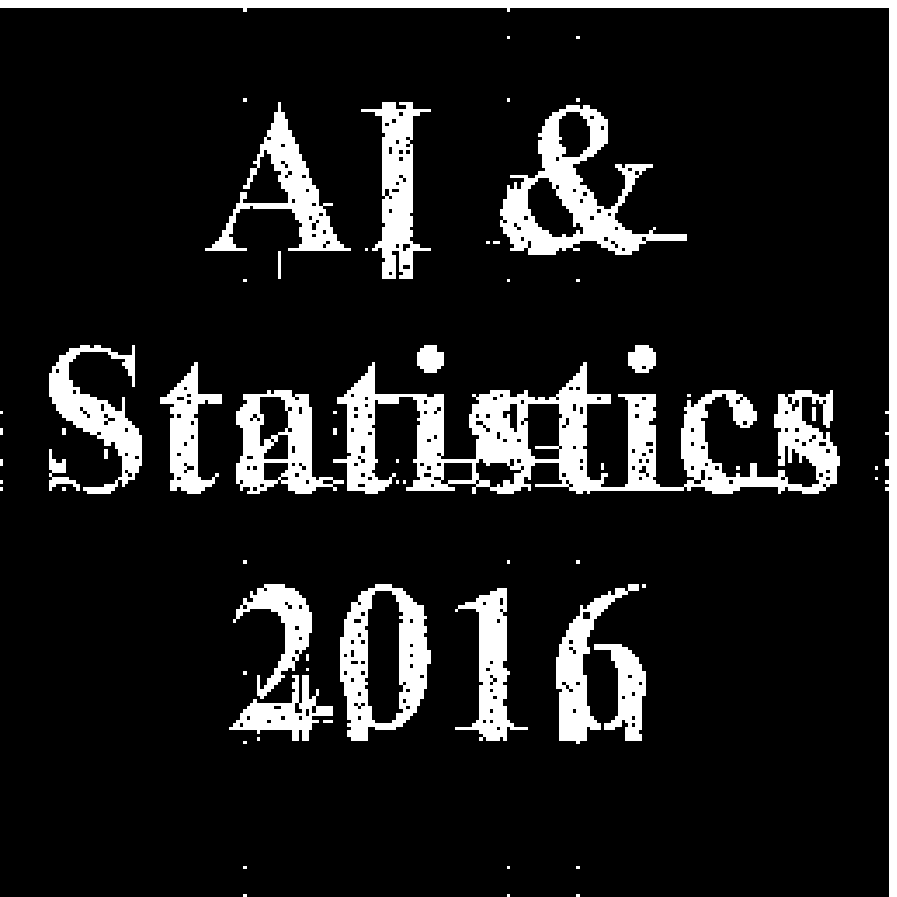}
\includegraphics[width=0.137\linewidth]{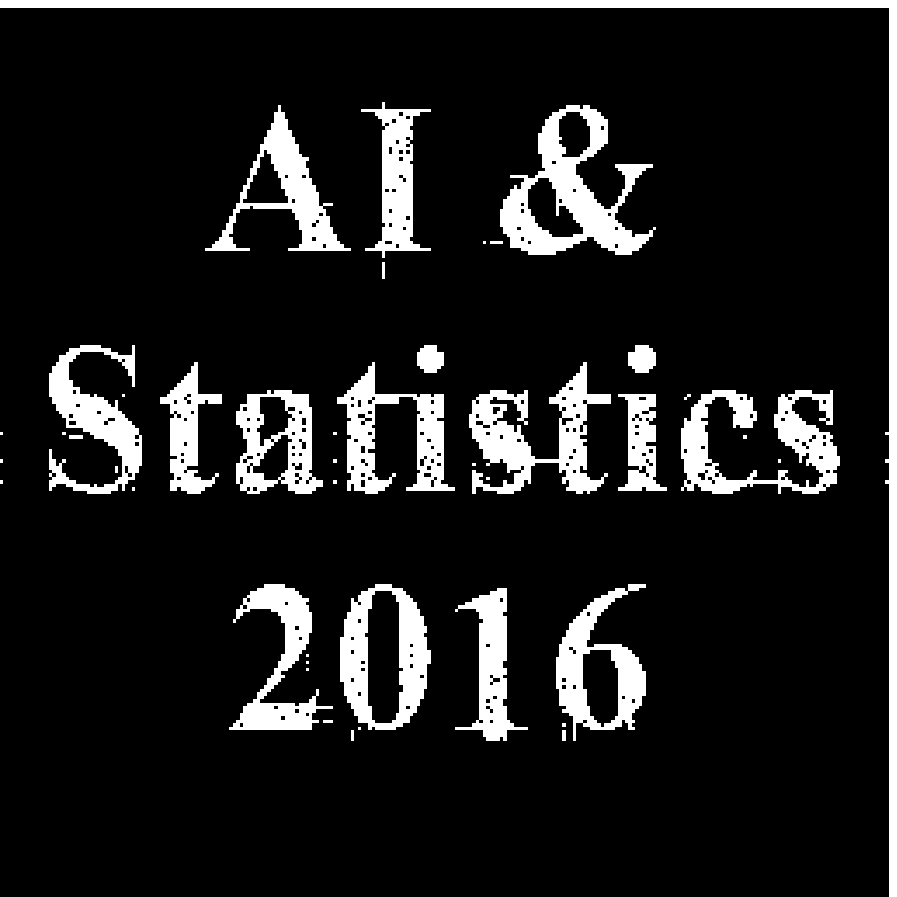}
\includegraphics[width=0.137\linewidth]{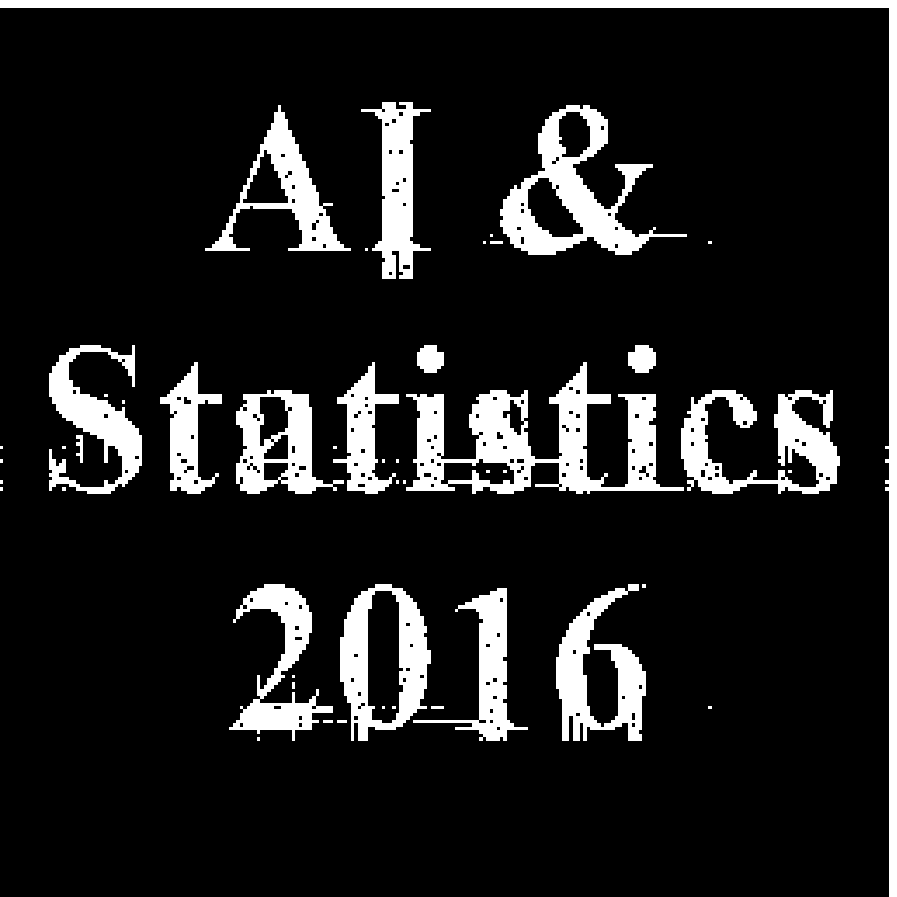}
\includegraphics[width=0.137\linewidth]{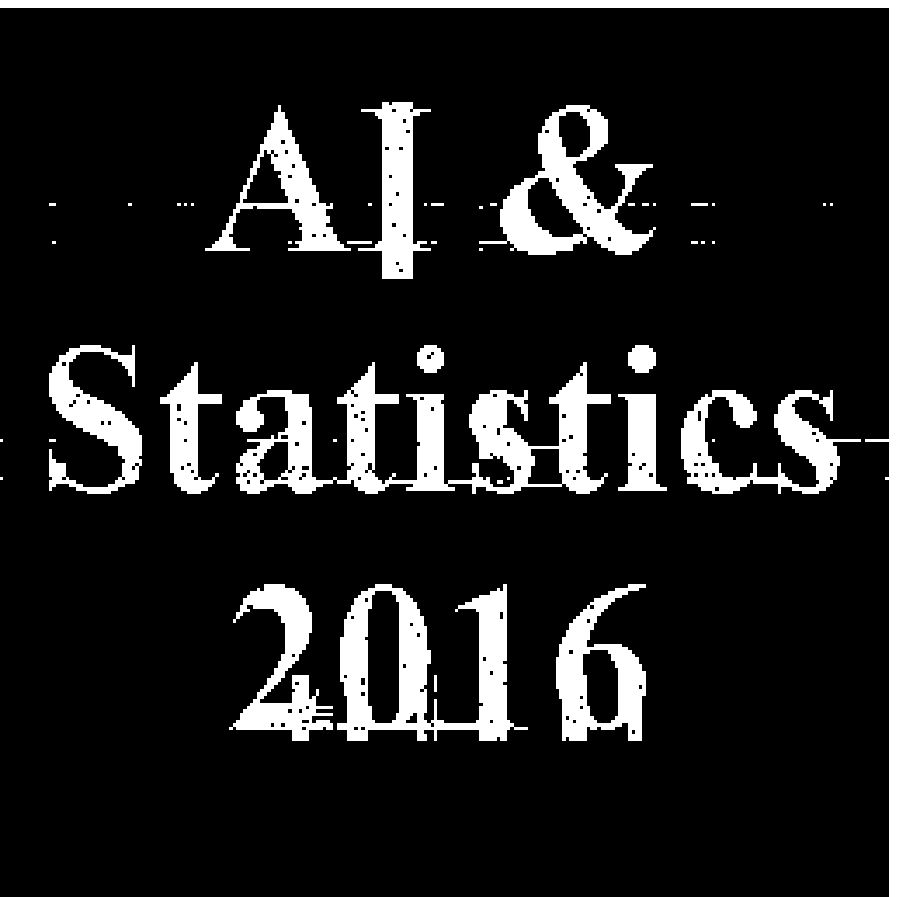}
\includegraphics[width=0.137\linewidth]{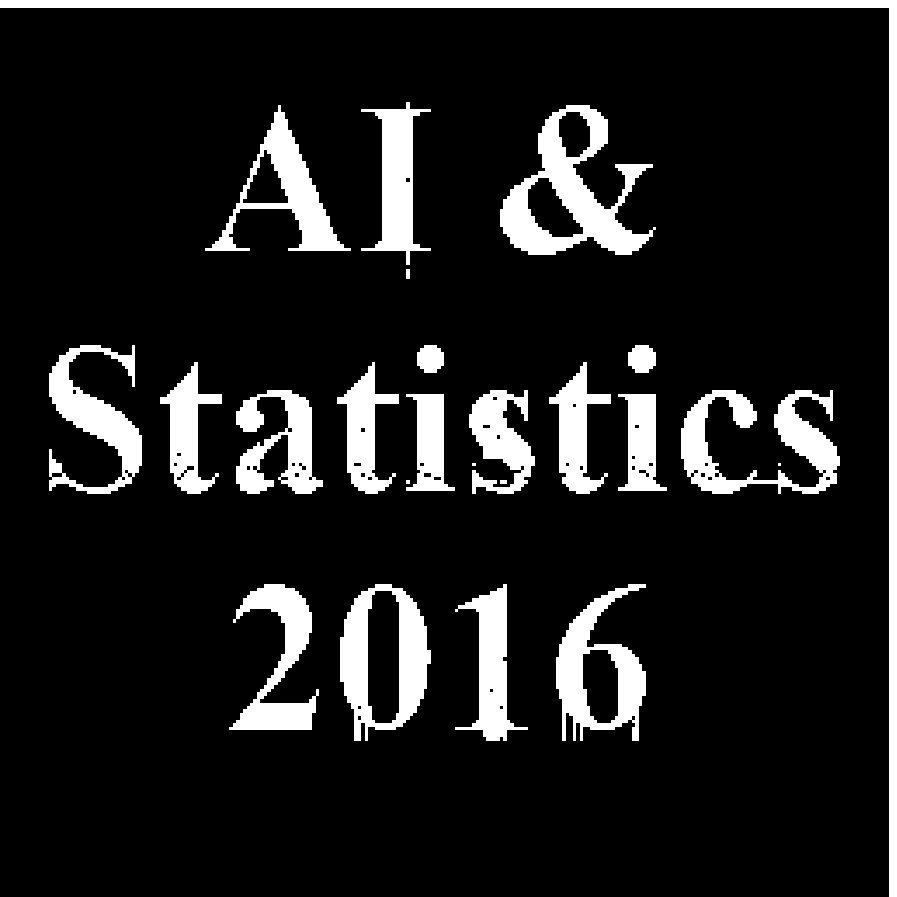}
\subfigure[Images]{\includegraphics[width=0.137\linewidth]{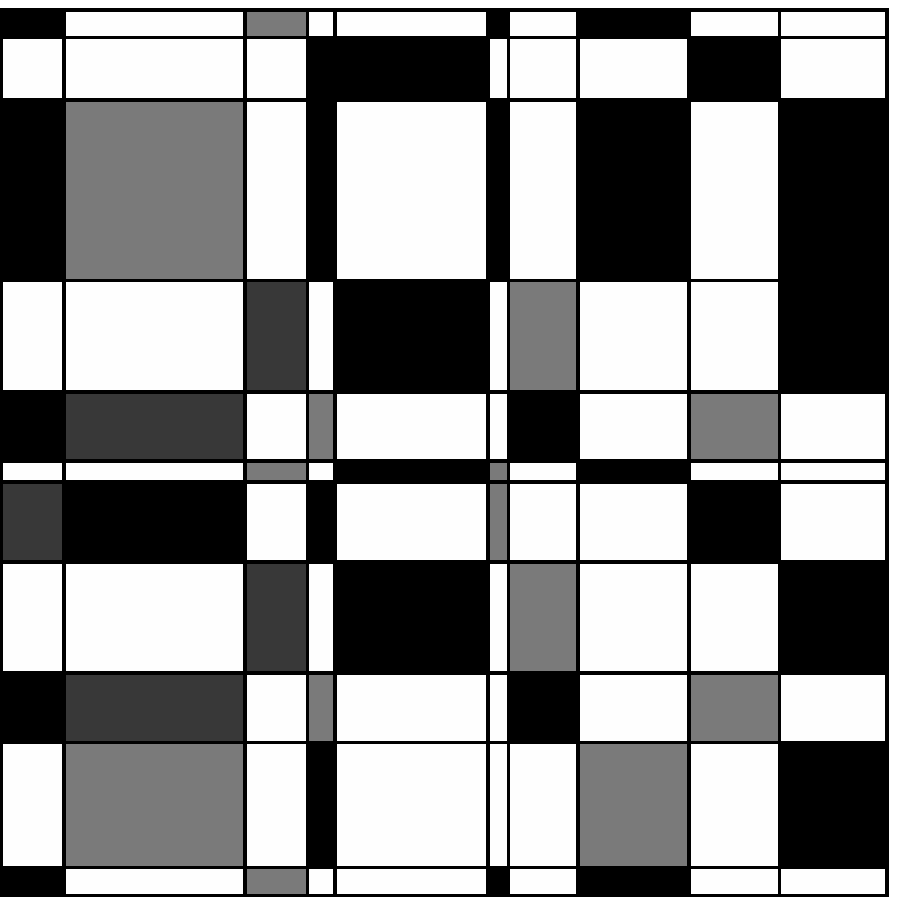}}
\subfigure[PCP]{\includegraphics[width=0.137\linewidth]{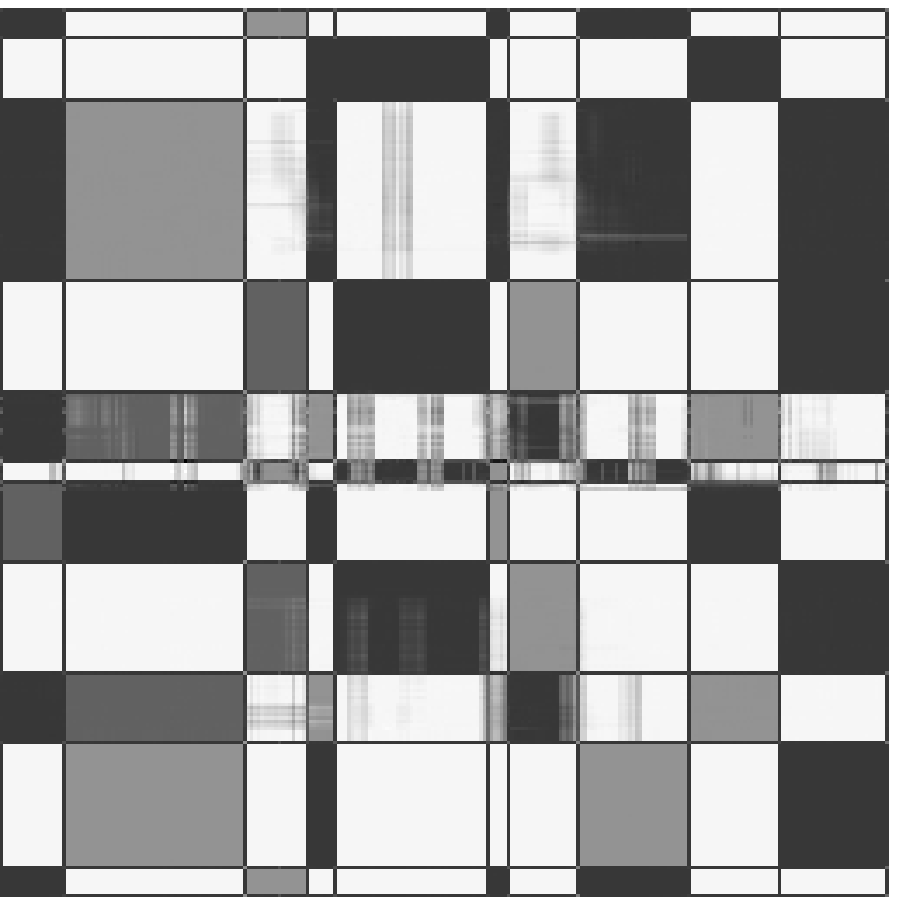}}
\subfigure[LRMF+$\ell_{1}$]{\includegraphics[width=0.137\linewidth]{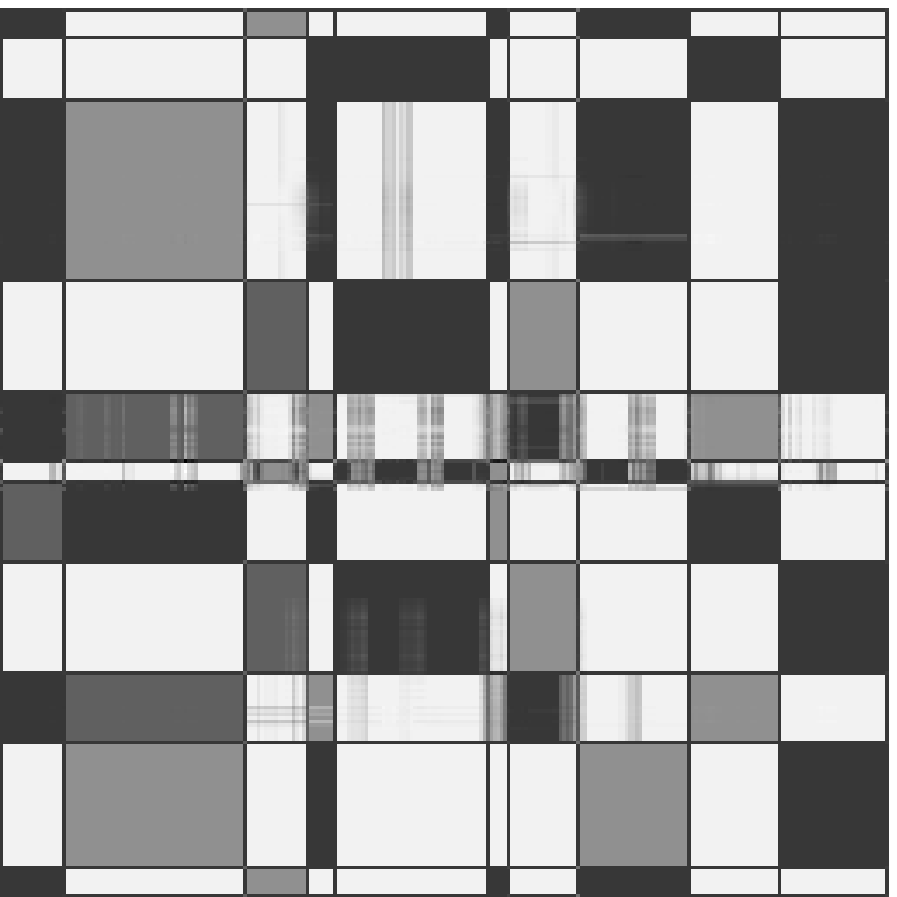}}
\subfigure[$\textup{S}_{p}$+$\ell_{p}$]{\includegraphics[width=0.137\linewidth]{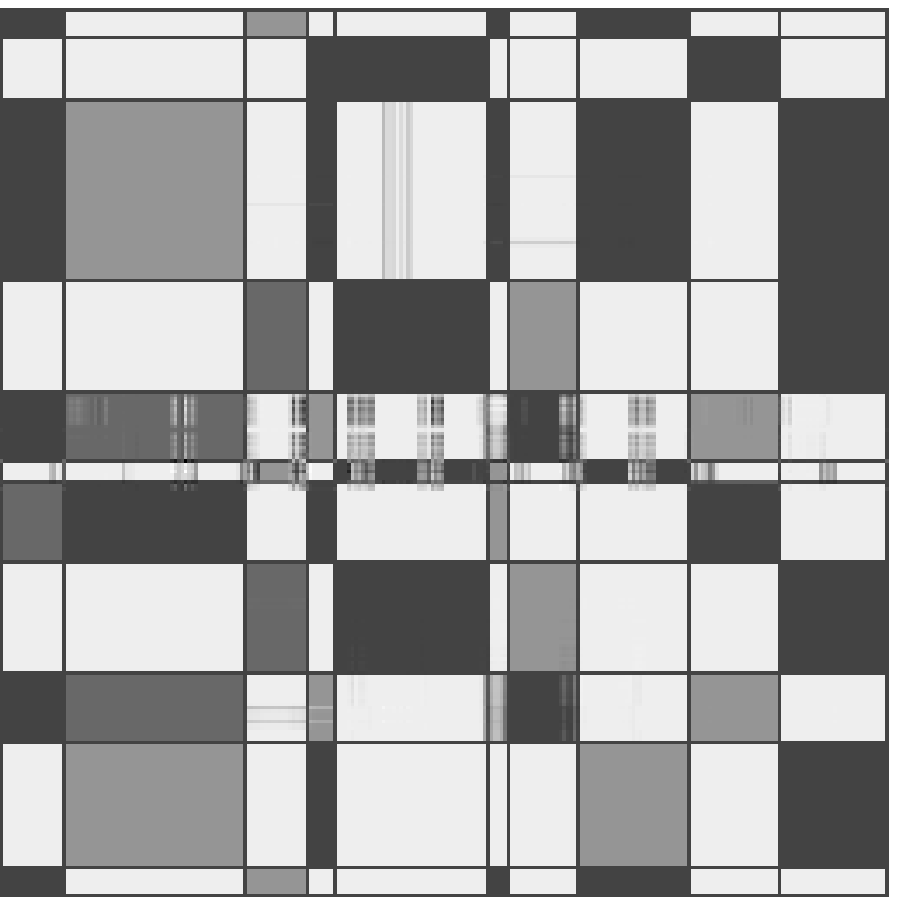}}
\subfigure[Tri-tr+$\ell_{1}$]{\includegraphics[width=0.137\linewidth]{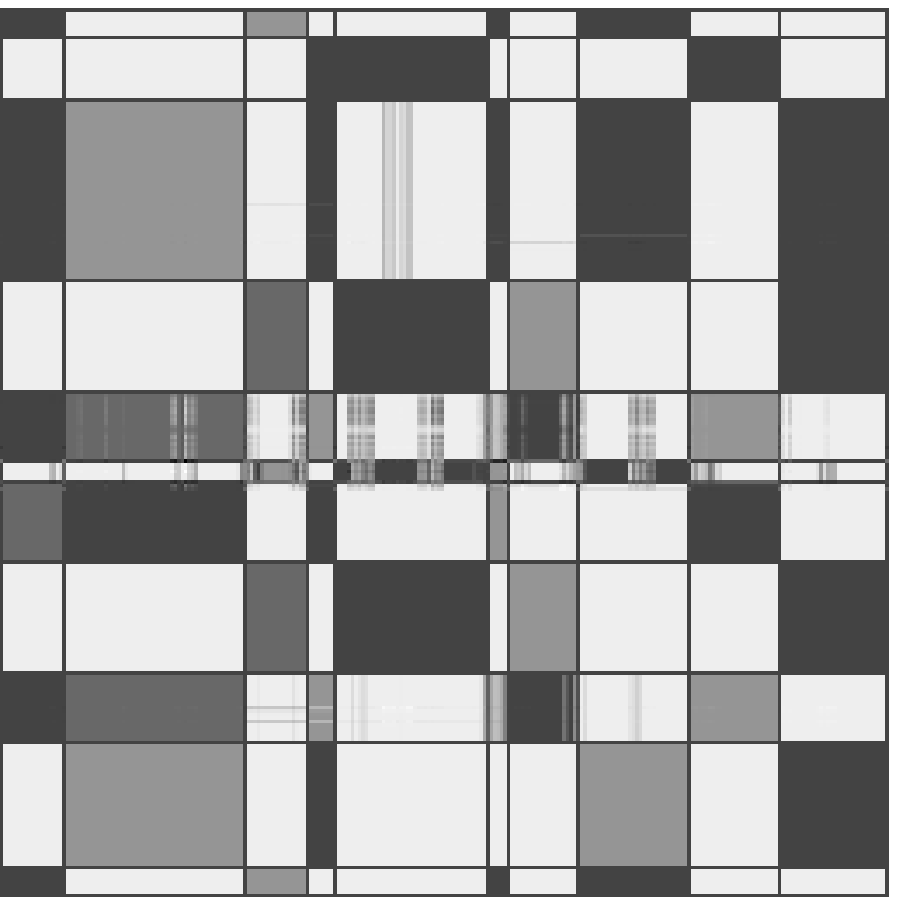}}
\subfigure[Bi-tr+$\ell_{1}$]{\includegraphics[width=0.137\linewidth]{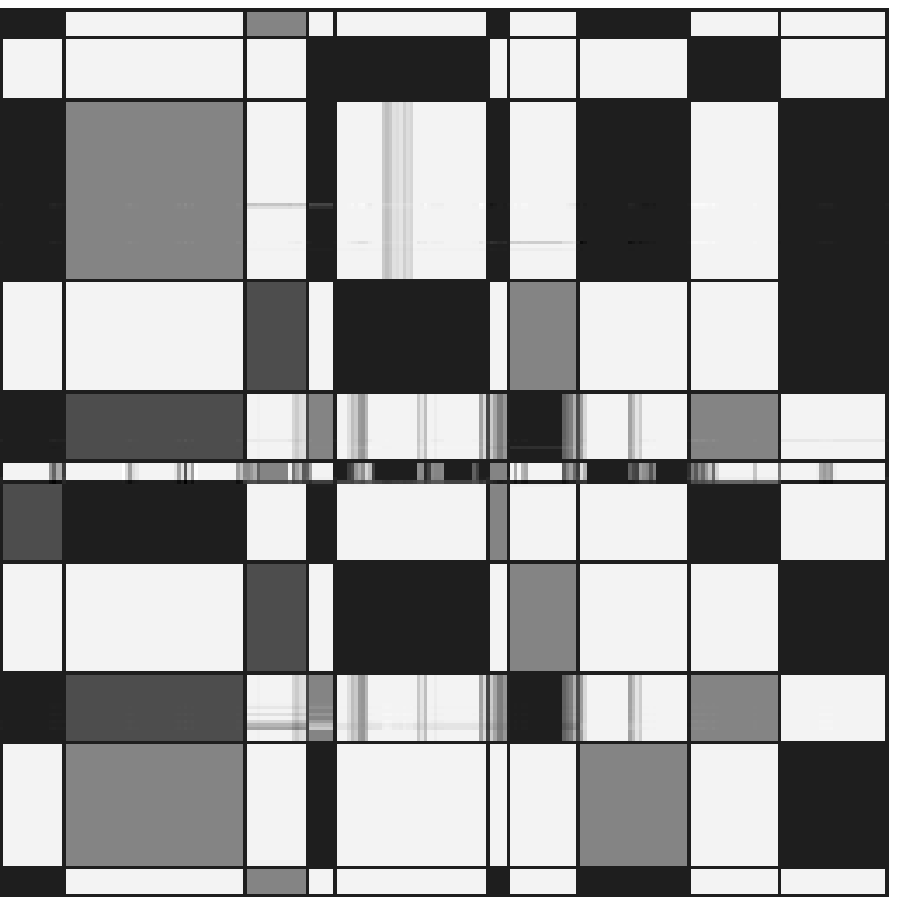}}
\subfigure[Bi-tr+$\ell_{1/2}$]{\includegraphics[width=0.137\linewidth]{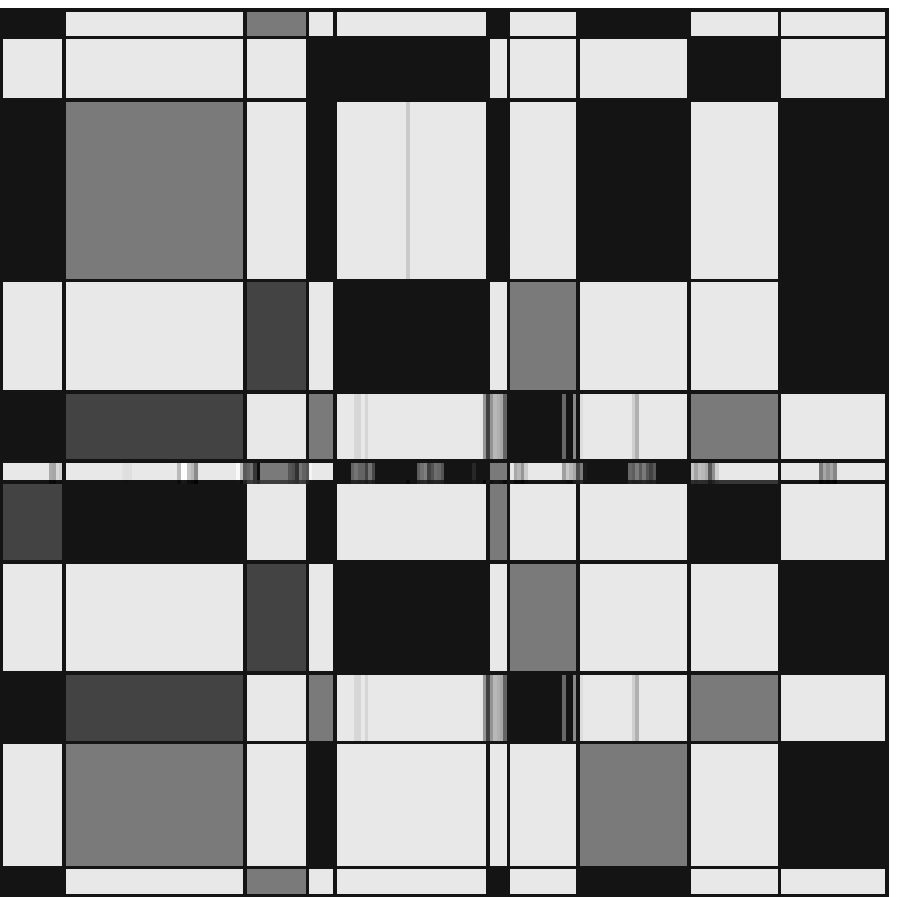}}
\caption{Text separation results. The first and second rows mainly show the detected texts and the recovered background images: (a) Input image (upper) and original image (bottom); (b) AUC: 0.8939, RSE: 0.1494; (c) AUC: 0.9058, RSE: 0.1406; (d) AUC: 0.9425, RSE: 0.1342; (e) AUC: 0.9356, RSE: 0.1320; (f) AUC: 0.9389, RSE: 0.1173; (g) AUC: \textbf{0.9731}, RSE: \textbf{0.0853}.}
\label{fig_sim3}
\end{figure*}

The testing RMSE of all those methods on the four datasets is reported in Figure \ref{fig_sim2}, where the rank varies from 5 to 20 (the running time of all methods are provided in Supplementary Materials). From all these results, we can observe that for these fixed ranks, the matrix factorization methods including LMaFit, LRMF and our methods significantly perform better than the trace norm solvers including NNLS and ALT in terms of RMSE, especially on the three larger datasets, as shown in Figures \ref{fig_sim2}(b)-(d). In most cases, the sophisticated matrix factorization based approaches outperform LMaFit as a baseline method without any regularization term. This suggests that those regularized models can alleviate the over-fitting problem of matrix factorization. The testing RMSE of both our methods varies only slightly when the number of the given rank increases, while that of the other matrix factorization methods changes dramatically. This further means that our methods perform much more robust than them in terms of the given ranks. More importantly, both our methods under all of the rank settings consistently outperform the other methods in terms of prediction accuracy. This confirms that our Bi-tr or Tri-tr quasi-norm regularized models can provide a good estimation of a low-rank matrix. Note that IRLS and IRNN could not run on the three larger datasets due to runtime exceptions. Moreover, our methods are much faster than LRMF, NNLS, ALT, IRLS and IRNN on all these datasets, and are comparable in speed with LMaFit. This shows that our methods have very good scalability and can solve large-scale problems.
\vspace{-2mm}

\subsection{Text Separation}
We conducted an experiment on artificially generated data to separate some text from an image. The ground-truth image is of size $256\!\times\!256$ with rank equal to 10. Figure \ref{fig_sim3}(a) shows the input image together with the original image. The input data are generated by setting 10\% of the randomly selected pixels as missing entries. We compare our Bi-tr+$\ell_{1}$, Tri-tr+$\ell_{1}$ and Bi-tr+$\ell_{1/2}$ methods (see Supplementary Materials for the details) to three state-of-the-art methods, including PCP~\cite{candes:rpca}, LRMF+$\ell_{1}$~\cite{cabral:nnbf} and $\textup{S}_{p}$+$\ell_{p}$~\cite{nie:rmc} with $0\!<\!p\!\leq\!1$. For fairness, we set the rank of all methods to 15, and $\varepsilon\!=\!10^{-4}$ for all these algorithms.

The results of different methods are shown in Figure \ref{fig_sim3}, where the text detection accuracy (the score Area Under the receiver operating characteristic Curve, AUC) and the RSE of low-rank component recovery are reported. Note that we present the best performance results of $\textup{S}_{p}$+$\ell_{p}$ with all choices of $p$ in $\{0.1,0.2,\ldots,0.9\}$. For both low-rank component recovery and text separation, our Bi-tr+$\ell_{1/2}$ method is significantly better than the other methods, not only visually but also quantitatively. In addition, our Bi-tr+$\ell_{1}$ and Tri-tr+$\ell_{1}$ methods have very similar performance to the $\textup{S}_{p}$+$\ell_{p}$ method, and all these three methods outperform PCP and LRMF+$\ell_{1}$ in terms of AUC and RSE. Moreover, the running time of PCP, LRMF+$\ell_{1}$, $\textup{S}_{p}$+$\ell_{p}$, Tri-tr+$\ell_{1}$, Bi-tr+$\ell_{1}$ and Bi-tr+$\ell_{1/2}$ is 31.57sec, 6.91sec, 163.65sec, 0.96sec, 0.57sec and 1.62sec, respectively. In other words, our three methods are at least 7, 12 and 4 times faster than the other methods, respectively. This is a very impressive result as our three methods are nearly 170, 290 or 100 times faster than the most related $\textup{S}_{p}$+$\ell_{p}$ method, which further confirms that our methods have good scalability.

\section{Conclusions}
In this paper, we defined two tractable Schatten quasi-norm formulations, and then proved that they are in essence the Schatten-${1/2}$ and ${1/3}$ quasi-norms, respectively. By applying the two defined quasi-norms to various rank minimization problems, such as MC and RPCA, we achieved some challenging non-smooth and non-convex problems. Then we designed two classes of efficient PALM and LADM algorithms to solve our problems with smooth and non-smooth loss functions, respectively. Finally, we established that each bounded sequence generated by our algorithms converges to a critical point, and also provided the recovery performance guarantees for our algorithms. Experiments on real-world data sets showed that our methods outperform the state-of-the-art methods in terms of both efficiency and effectiveness. For future work, we are interested in analyzing the recovery bound for our algorithms to solve the Bi-tr or Tri-tr quasi-norm regularized problems with non-smooth loss functions.

\subsubsection*{Acknowledgements}
We thank the reviewers for their valuable comments. The authors are supported by the Hong Kong GRF 2150851. The project is funded by Research Committee of CUHK.

\bibliographystyle{unsrt}
\bibliography{aistats}

\twocolumn[

\aistatstitle{Supplementary Materials for ``Tractable and Scalable Schatten Quasi-Norm Approximations for Rank Minimization"}

\aistatsauthor{ Fanhua Shang \And Yuanyuan Liu \And James Cheng }
\aistatsaddress{ Department of Computer Science and Engineering, The Chinese University of Hong Kong }~\\]

In this supplementary material, we give the detailed proofs of some lemmas, properties and theorems, as well as some additional experimental results on synthetic data and four recommendation system data sets.

\setcounter{page}{11}

\section{More Notations}
$\mathbb{R}^{n}$ denotes the $n$-dimensional Euclidean space, and the set of all $m\times n$ matrices with real entries is denoted by $\mathbb{R}^{m\times n}$. Given matrices $X$ and $Y\in\mathbb{R}^{m\times n}$, the inner product is defined by $\langle X, Y\rangle:=\textrm{Tr}(X^{T}Y)$, where $\textrm{Tr}(\cdot)$ denotes the trace of a matrix. $\|X\|_{2}$ is the spectral norm and is equal to the maximum singular value of $X$. $I$ denotes an identity matrix.

For any vector $x\in\mathbb{R}^{n}$, its $\ell_{p}$ quasi-norm for $0<p<1$ is defined as
\begin{displaymath}
\|x\|_{p}=\left(\sum_{i}|x_{i}|^{p}\right)^{1/p}.
\end{displaymath}
In addition, the $\ell_{1}$-norm and the $\ell_{2}$-norm of $x$ are $\|x\|_{1}=\sum_{i}|x_{i}|$ and $\|x\|_{2}=\sqrt{\sum_{i}x^{2}_{i}}$, respectively.

For any matrix $X\in\mathbb{R}^{m\times n}$, we assume the singular values of $X$ are ordered as $\sigma_{1}(X)\geq \sigma_{2}(X)\geq\cdots\geq\sigma_{r}(X)>\sigma_{r+1}(X)=\cdots=\sigma_{\min(m,n)}(X)=0$, where $r=\textrm{rank}(X)$. By writing $X=U\Sigma V^{T}$ in its standard singular value decomposition (SVD), we can extend $X=U\Sigma V^{T}$ to the following definitions.

The Schatten-$p$ quasi-norm ($0<p<1$) of a matrix $X\in\mathbb{R}^{m\times n}$ is defined as follows:
\begin{displaymath}
\|X\|_{S_{p}}=\left(\sum^{\min(m,n)}_{i=1}\left(\sigma_{i}(X)\right)^{p}\right)^{1/p}.
\end{displaymath}

The trace norm (also called the nuclear norm or the Schatten-1 norm) of $X$ is defined as
\begin{displaymath}
\|X\|_{\textup{tr}}=\sum^{\min(m,n)}_{i=1}\sigma_{i}(X).
\end{displaymath}

The Frobenius norm (also called the Schatten-2 norm) of $X$ is defined as
\begin{displaymath}
\|X\|_{F}=\sqrt{\textrm{Tr}\left(X^{T}X\right)}=\sqrt{\sum^{\min(m,n)}_{i=1}\sigma^{2}_{i}(X)}.
\end{displaymath}

\setcounter{equation}{15}
\setcounter{lemma}{1}
\setcounter{definition}{2}
~\\
\section{Proof of Theorem 1}
In the following, we will first prove that the bi-trace norm $\|\!\cdot\!\|_{\textup{Bi-tr}}$ is a quasi-norm.

\begin{proof}
By the definition of the bi-trace norm, for any $a$, $a_{1}$, $a_{2}\in \mathbb{R}$ and $a=a_{1}a_{2}$, we have
\begin{displaymath}
\begin{split}
\|aX\|_{\textup{Bi-tr}}&=\min_{aX=(a_{1}U)(a_{2}V^{T})}\|a_{1}U\|_{\textup{tr}}\|a_{2}V\|_{\textup{tr}}\\
&=\min_{X=UV^{T}}|a|\,\|U\|_{\textup{tr}}\|V\|_{\textup{tr}}\\
&=|a|\min_{X=UV^{T}}\|U\|_{\textup{tr}}\|V\|_{\textup{tr}}\\
&=|a|\,\|X\|_{\textup{Bi-tr}}.
\end{split}
\end{displaymath}

By Lemma 1 of the main paper, i.e., $\|X\|_{\textup{tr}}\!=\!\min_{X=UV^{T}}\|U\|_{F}\|V\|_{F}$, and Lemma 6 in~\cite{mazumder:sr}, there exist both matrices $\widehat{U}\!=\!U_{(d)}\Sigma^{1/2}_{(d)}$ and $\widehat{V}\!=\!V_{(d)}\Sigma^{1/2}_{(d)}$ (which are constructed in the same way as $U_{\star}$ and $V_{\star}$ in Section 5.1) such that $\|X\|_{\textup{tr}}\!=\!\|\widehat{U}\|_{F}\|\widehat{V}\|_{F}$ with the SVD of $X$, i.e., $X\!=\!U\Sigma V^{T}$. By the fact that $\|X\|_{\textup{tr}}\leq\sqrt{\textrm{rank}(X)}\|X\|_{F}$, we have
\begin{displaymath}
\begin{split}
\|X\|_{\textup{Bi-tr}}&=\min_{X=UV^{T}}\|U\|_{\textup{tr}}\|V\|_{\textup{tr}}\\
&\leq \|\widehat{U}\|_{\textup{tr}}\|\widehat{V}\|_{\textup{tr}}\\
&\leq \sqrt{\textrm{rank}(X)}\sqrt{\textrm{rank}(X)}\|\widehat{U}\|_{F}\|\widehat{V}\|_{F}\\
&\leq \textrm{rank}(X)\|X\|_{\textup{tr}}.
\end{split}
\end{displaymath}
If $X\neq0$, then $\textrm{rank}(X)\geq 1$. On the other hand, we also have
\begin{displaymath}
\|X\|_{\textup{tr}}\leq \|X\|_{\textup{Bi-tr}}.
\end{displaymath}

\onecolumn

By the above properties, there exists a constant $\alpha\geq 1$ such that the following holds for all $X,Y\in\mathbb{R}^{m\times n}$
\begin{displaymath}
\begin{split}
\|X+Y\|_{\textup{Bi-tr}}&\leq \alpha\|X+Y\|_{\textup{tr}}\\
&\leq \alpha(\|X\|_{\textup{tr}}+\|Y\|_{\textup{tr}})\\
&\leq \alpha(\|X\|_{\textup{Bi-tr}}+\|Y\|_{\textup{Bi-tr}}).
\end{split}
\end{displaymath}

$\forall X\in\mathbb{R}^{m\times n}$ and $X=UV^{T}$, we have
\begin{displaymath}
\|X\|_{\textup{Bi-tr}}=\min_{X=UV^{T}}\|U\|_{\textup{tr}}\|V\|_{\textup{tr}}\geq0.
\end{displaymath}
Moreover, if $\|X\|_{\textup{Bi-tr}}=0$, we have $\|X\|_{\textup{tr}}\leq\|X\|_{\textup{Bi-tr}}=0$, i.e., $\|X\|_{\textup{tr}}=0$.
Hence, $X=0$. In short, the bi-trace norm $\|\cdot\|_{\textup{Bi-tr}}$ is a quasi-norm.
\end{proof}
~\\
Before giving a complete proof for Theorem 1, we first present and prove the following lemmas.

\begin{lemma}[Jensen's inequality]\label{lem2}
Assume that the function $g:\mathbb{R}^{+}\rightarrow \mathbb{R}^{+}$ is a continuous concave function on $[0,+\infty)$. Then, for any $t_{i}\geq 0$, $\sum_{i}t_{i}=1$, and any $x_{i}\in \mathbb{R}^{+}$ for $i=1,\ldots,n$, we have
\begin{equation}\label{dtreq1}
g\left(\sum^{n}_{i=1}t_{i}x_{i}\right)\geq \sum^{n}_{i=1}t_{i}g(x_{i}).
\end{equation}
\end{lemma}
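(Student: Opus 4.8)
The plan is to proceed by induction on $n$, the number of points, using the two-point definition of concavity as the engine and a renormalization of the weights to reduce the general case to it.

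First I would dispatch the base cases. For $n=1$ the statement reads $g(x_1)\ge g(x_1)$, which is trivial. The case $n=2$ is precisely the defining inequality of concavity of $g$: for any $t_1,t_2\ge 0$ with $t_1+t_2=1$ and any $x_1,x_2\in\mathbb{R}^{+}$, one has $g(t_1 x_1+t_2 x_2)\ge t_1 g(x_1)+t_2 g(x_2)$. Since $g$ is assumed concave on $[0,+\infty)$, this holds automatically and serves as the inductive base.

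Next comes the inductive step. Assume the inequality holds for any collection of $n-1$ nonnegative weights summing to one. Given weights $t_1,\dots,t_n$, I would first treat the degenerate case $t_n=1$ (equivalently, one weight equals one and the rest vanish), for which both sides collapse to $g(x_n)$ and the claim is immediate. Otherwise $t_n<1$, so $1-t_n=\sum_{i=1}^{n-1}t_i>0$, and I can renormalize by setting $s_i:=t_i/(1-t_n)$ for $i=1,\dots,n-1$, which yields nonnegative weights with $\sum_{i=1}^{n-1}s_i=1$. The key algebraic identity is then $\sum_{i=1}^{n}t_i x_i=(1-t_n)(\sum_{i=1}^{n-1}s_i x_i)+t_n x_n$, exhibiting the argument as a genuine two-point convex combination of $\sum_{i=1}^{n-1}s_i x_i$ and $x_n$.

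Finally I would chain two inequalities. Applying the two-point concavity inequality to this convex combination gives $g(\sum_{i=1}^{n}t_i x_i)\ge (1-t_n)\,g(\sum_{i=1}^{n-1}s_i x_i)+t_n g(x_n)$; then applying the induction hypothesis to the $n-1$ points $x_1,\dots,x_{n-1}$ with weights $s_i$ gives $g(\sum_{i=1}^{n-1}s_i x_i)\ge \sum_{i=1}^{n-1}s_i g(x_i)$. Substituting and unwinding the renormalization via $(1-t_n)s_i=t_i$ recovers $\sum_{i=1}^{n}t_i g(x_i)$, closing the induction. There is no genuine obstacle here, as the result is classical; the only points demanding care are the bookkeeping around degenerate weights (ensuring $1-t_n\neq 0$ before dividing) and invoking the correct one-sided form of the concavity definition. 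Note that the positivity of the codomain and the continuity of $g$ are not actually used for this finite inequality.
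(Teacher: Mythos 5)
Your proof is correct. The paper states this lemma as the classical (finite) Jensen's inequality and does not supply a proof of its own, so there is nothing to compare against; your induction on $n$ with the renormalization $s_i = t_i/(1-t_n)$ is the standard and complete argument, and your handling of the degenerate case $t_n=1$ is the right piece of bookkeeping. Your closing remark is also accurate: neither the continuity of $g$ nor the positivity of its codomain is needed for the finite form of the inequality, so the hypotheses in the statement are stronger than what the conclusion requires.
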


\begin{lemma}\label{lem3}
Suppose $Z\in\mathbb{R}^{m\times n}$ is of rank $r\leq\min(m,n)$, and denote its SVD by $Z=L\Sigma_{Z}R^{T}$, where $L\in \mathbb{R}^{m\times r}$, $R\in \mathbb{R}^{n\times r}$ and $\Sigma_{Z}\in \mathbb{R}^{r\times r}$. For any unitary matrix $A$ satisfying $AA^{T}=A^{T}A=I_{r\times r}$, then $(A\Sigma_{Z}A^{T})_{k,k}\geq 0$ for any $k=1,\ldots,r$, and
\begin{equation*}
\textup{Tr}^{1/2}(A\Sigma_{Z}A^{T})\geq \textup{Tr}^{1/2}(\Sigma_{Z})=\|Z\|^{1/2}_{S_{1/2}},
\end{equation*}
where $\textup{Tr}^{1/2}(B)=\sum_{i}B^{1/2}_{ii}$.
\end{lemma}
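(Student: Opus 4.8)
The plan is to reduce both assertions to elementary identities satisfied by the squared entries of the orthogonal matrix $A$, and then invoke Jensen's inequality (Lemma \ref{lem2}) for the concave square-root function. Write $\sigma_i := (\Sigma_Z)_{ii} \geq 0$ for the singular values of $Z$, so that $\Sigma_Z = \textup{diag}(\sigma_1,\ldots,\sigma_r)$. Since $\Sigma_Z$ is diagonal, a direct computation gives the diagonal entries of the conjugated matrix as $(A\Sigma_Z A^T)_{kk} = \sum_{i=1}^{r} A_{ki}^2\,\sigma_i$. Because each $\sigma_i \geq 0$ and each $A_{ki}^2 \geq 0$, this sum is nonnegative, which settles the first claim $(A\Sigma_Z A^T)_{kk}\geq 0$ immediately.

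For the trace inequality I would introduce the weights $t_{ki} := A_{ki}^2 \geq 0$. The two unitarity hypotheses then translate into two normalization facts: $AA^T = I$ yields $\sum_{i=1}^{r} t_{ki} = 1$ for every fixed row index $k$, while $A^T A = I$ yields $\sum_{k=1}^{r} t_{ki} = 1$ for every fixed column index $i$. The first of these shows that $(A\Sigma_Z A^T)_{kk} = \sum_i t_{ki}\sigma_i$ is genuinely a convex combination of the $\sigma_i$, so Jensen's inequality applies to the continuous concave function $g(x)=x^{1/2}$ on $[0,+\infty)$ with weights $t_{ki}$. This gives, for each $k$,
$$(A\Sigma_Z A^T)_{kk}^{1/2} = \Big(\sum_i t_{ki}\sigma_i\Big)^{1/2} \geq \sum_i t_{ki}\,\sigma_i^{1/2}.$$

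Next I would sum this over $k=1,\ldots,r$ and interchange the order of summation, which turns the right-hand side into $\sum_i \sigma_i^{1/2}\sum_k t_{ki}$. Here the second normalization $\sum_k t_{ki} = 1$ collapses the inner sum to $1$, leaving exactly $\sum_i \sigma_i^{1/2} = \textup{Tr}^{1/2}(\Sigma_Z)$. Hence $\textup{Tr}^{1/2}(A\Sigma_Z A^T) = \sum_k (A\Sigma_Z A^T)_{kk}^{1/2} \geq \textup{Tr}^{1/2}(\Sigma_Z)$, and the final identity $\textup{Tr}^{1/2}(\Sigma_Z) = \sum_i \sigma_i^{1/2} = \|Z\|_{S_{1/2}}^{1/2}$ is just the definition of the Schatten-$1/2$ quasi-norm.

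The argument is short, and the only genuinely substantive point is that \emph{both} orthogonality relations $AA^T = I$ and $A^T A = I$ are needed, playing distinct roles: the row normalization supplies the convex-combination structure that makes Jensen applicable row-by-row, while the column normalization is what cancels the accumulated weights once the inequality is summed over rows. Thus the main thing to be careful about is keeping the two normalizations cleanly separated; every remaining step is a routine computation.
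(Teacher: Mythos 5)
Your proposal is correct and follows essentially the same route as the paper's own proof: compute $(A\Sigma_Z A^T)_{kk}=\sum_i A_{ki}^2\sigma_i$, apply Jensen's inequality for $g(x)=x^{1/2}$ with the row weights $A_{ki}^2$ summing to one, then sum over $k$ and use the column normalization $\sum_k A_{ki}^2=1$ to collapse to $\sum_i\sigma_i^{1/2}$. Your explicit remark that the two orthogonality relations play distinct roles is a nice clarification, but the argument itself is the same.
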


\begin{proof}
For any $k\in\{1,\ldots,r\}$, we have
\begin{equation*}
(A\Sigma_{Z}A^{T})_{k,k}=\sum_{i}a^{2}_{ki}\sigma_{i}\geq 0,
\end{equation*}
where $\sigma_{i}\geq 0$ is the $i$-th singular value of $Z$. Then
\begin{equation}\label{smeq1}
\textup{Tr}^{1/2}(A\Sigma_{Z}A^{T})=\sum_{k}(\sum_{i}a^{2}_{ki}\sigma_{i})^{1/2}.
\end{equation}
Let $t_{i}\geq 0$, $\sum_{i}t_{i}=1$, and $x_{i}\geq 0$. Since the function $g(x)=x^{1/2}$ is concave on $\mathbb{R}^{+}$, and by Lemma \ref{lem2}, we have
\begin{equation*}
g\left(\sum_{i}t_{i}x_{i}\right)\geq \sum_{i}t_{i}g(x_{i}).
\end{equation*}

Due to the fact that $\sum_{i}a^{2}_{ki}=1$ for any $k\in\{1,\ldots,r\}$, we have
\begin{equation*}
\left(\sum_{i}a^{2}_{ki}\sigma_{i}\right)^{1/2}\geq \sum_{i}a^{2}_{ki}\sigma^{1/2}_{i}.
\end{equation*}
According to the above inequality and the fact that $\sum_{k}a^{2}_{ki}=1$ for any $i\in\{1,\ldots,r\}$, \eqref{smeq1} can be written as follows:
\begin{equation*}
\begin{split}
\textup{Tr}^{1/2}(A\Sigma_{Z}A^{T})&=\sum_{k}(\sum_{i}a^{2}_{ki}\sigma_{i})^{1/2}\\
&\geq \sum_{k}\sum_{i}a^{2}_{ki}\sigma^{1/2}_{i}\\
&=\sum_{i}\sigma^{1/2}_{i}\\
&=\textup{Tr}^{1/2}(\Sigma_{Z})=\|Z\|^{1/2}_{S_{1/2}}.
\end{split}
\end{equation*}
This completes the proof.
\end{proof}

\subsection*{Proof of Theorem 1:}
\begin{proof}
Assume that $U\!=\!L_{U}\Sigma_{U}R^{T}_{U}$ and $V\!=\!L_{V}\Sigma_{V}R^{T}_{V}$ are the thin SVDs of $U$ and $V$, respectively, where $L_{U}\in \mathbb{R}^{m\times d}$, $L_{V}\in \mathbb{R}^{n\times d}$, and $R_{U},\Sigma_{U},R_{V},\Sigma_{V}\in \mathbb{R}^{d\times d}$. Without loss of generality, we set $X\!=\!L_{X}\Sigma_{X}R^{T}_{X}$, where the columns of $L_{X}\in \mathbb{R}^{m\times d}$ and $R_{X}\in \mathbb{R}^{n\times d}$ are the left and right singular vectors associated with the top $d$ singular values of $X$ with rank at most $r$ $(r\!\leq\! d)$, and $\Sigma_{X}=\textup{diag}([\sigma_{1}(X),\!\cdots\!,\sigma_{r}(X),0,\!\cdots\!,0])\in\!\mathbb{R}^{d\times d}$.

By $X=UV^{T}$, which means that $L_{X}\Sigma_{X}R^{T}_{X}=L_{U}\Sigma_{U}R^{T}_{U}R_{V}\Sigma_{V}L^{T}_{V}$, then $\exists O_{1},\widehat{O}_{1}\in \mathbb{R}^{d\times d}$ satisfy $L_{X}=L_{U}O_{1}$ and $L_{U}=L_{X}\widehat{O}_{1}$. Since $O_{1}=L^{T}_{U}L_{X}$ and $\widehat{O}_{1}=L^{T}_{X}L_{U}$, then $O^{T}_{1}=\widehat{O}_{1}$. Indeed, since $L_{X}=L_{U}O_{1}=L_{X}\widehat{O}_{1}O_{1}$, we immediately have $\widehat{O}_{1}O_{1}=O^{T}_{1}O_{1}=I_{d\times d}$. In addition, we obviously have $O_{1}\widehat{O}_{1}=O_{1}O^{T}_{1}=I_{d\times d}$. Similarly, $\exists O_{2}\in \mathbb{R}^{d\times d}$ satisfying $O_{2}O^{T}_{2}=O^{T}_{2}O_{2}=I_{d\times d}$ such that $R_{X}=L_{V}O_{2}$. Therefore, we have
\begin{displaymath}
\begin{split}
\Sigma_{X}O^{T}_{2}=\Sigma_{X}R^{T}_{X}L_{V}=L^{T}_{X}L_{U}\Sigma_{U}R^{T}_{U}R_{V}\Sigma_{V}=O^{T}_{1}\Sigma_{U}R^{T}_{U}R_{V}\Sigma_{V}.
\end{split}
\end{displaymath}

Let $O_{3}=O_{2}O^{T}_{1}\in \mathbb{R}^{d\times d}$, then we have $O_{3}O^{T}_{3}=O^{T}_{3}O_{3}=I_{d\times d}$, i.e., $\sum_{i}(O_{3})^{2}_{ij}=\sum_{j}(O_{3})^{2}_{ij}=1$ for $\forall i,j\in \{1,2,\ldots,d\}$, where $a_{i,j}$ denotes the element of the matrix $A$ in the $i$-th row and the $j$-th column. Furthermore, let $O_{4}=R^{T}_{U}R_{V}$, we have $\sum_{i}(O_{4})^{2}_{ij}\leq 1$ and $\sum_{j}(O_{4})^{2}_{ij}\leq 1$ for $\forall i,j\in \{1,2,\ldots,d\}$.

By the above analysis, then we have $O_{2}\Sigma_{X}O^{T}_{2}=O_{2}O^{T}_{1}\Sigma_{U}O_{4}\Sigma_{V}=O_{3}\Sigma_{U}O_{4}\Sigma_{V}$. Let $\varrho_{i}$ and $\tau_{j}$ denote the $i$-th and the $j$-th diagonal elements of $\Sigma_{V}$ and $\Sigma_{U}$, respectively. By Lemma \ref{lem3}, we can derive that
\begin{equation*}
\begin{split}
\|X\|_{S_{1/2}}&\leq\left(\textup{Tr}^{1/2}(O_{2}\Sigma_{X}O^{T}_{2})\right)^{2}=\left(\textup{Tr}^{1/2}(O_{2}O^{T}_{1}\Sigma_{U}O_{4}\Sigma_{V})\right)^{2}=\left(\textup{Tr}^{1/2}(O_{3}\Sigma_{U}O_{4}\Sigma_{V})\right)^{2}\\
&=\left(
\sum^{d}_{i=1}\sqrt{\sum^{d}_{j=1}\tau_{j}(O_{3})_{ij}(O_{4})_{ji}\varrho_{i}}\right )^{2}=\left(
\sum^{d}_{i=1}\sqrt{\varrho_{i}\sum^{d}_{j=1}\tau_{j}(O_{3})_{ij}(O_{4})_{ji}}\right )^{2}\\
&^{a}\!\!\!\leq\sum^{d}_{i=1}\varrho_{i}\sum^{d}_{i=1}\sum^{d}_{j=1}(\tau_{j}(O_{3})_{ij}(O_{4})_{ji})\\
&^{b}\!\!\!\leq\sum^{d}_{i=1}\varrho_{i}\sum^{d}_{i=1}\sum^{d}_{j=1}\frac{((O_{3})^{2}_{ij}\tau_{j}+(O_{4})^{2}_{ji}\tau_{j})}{2}\\
&^{c}\!\!\!\leq\sum^{d}_{i=1}\varrho_{i}\sum^{d}_{j=1}\tau_{j}\\
&=\|U\|_{\textup{tr}}\|V\|_{\textup{tr}}\leq\left(\frac{\|U\|_{\textup{tr}}+\|V\|_{\textup{tr}}}{2}\right)^{2},
\end{split}
\end{equation*}
where the inequality $^{a}\!\!\leq$ holds due to the Cauchy$-$Schwartz inequality, the inequality $^{b}\!\!\leq$ follows from the basic inequality $xy\leq \frac{x^{2}+y^{2}}{2}$ for any real numbers $x$ and $y$, and the inequality $^{c}\!\!\leq$ relies on the fact that $\sum_{i}(O_{3})^{2}_{ij}=1$ and $\sum_{i}(O_{4})^{2}_{ji}\leq 1$. Thus, we obtain
\begin{equation*}
\|X\|_{S_{1/2}}\leq\|U\|_{\textup{tr}}\|V\|_{\textup{tr}}\leq\left(\frac{\|U\|_{\textup{tr}}+\|V\|_{\textup{tr}}}{2}\right)^{2}\;\textup{and}\;\|X\|_{S_{1/2}}\leq\|U\|_{\textup{tr}}\|V\|_{\textup{tr}}\leq\frac{\|U\|^{2}_{\textup{tr}}+\|V\|^{2}_{\textup{tr}}}{2}.
\end{equation*}

On the other hand, set $U_{\star}=L_{X}\Sigma^{1/2}_{X}$ and $V_{\star}=R_{X}\Sigma^{1/2}_{X}$, then we have $X=U_{\star}V^{T}_{\star}$ and
\begin{equation*}
\begin{split}
\|X\|_{S_{1/2}}&=[\textup{Tr}^{1/2}(\Sigma_{X})]^{2}=\|L_{X}\Sigma^{1/2}_{X}\|_{\textup{tr}}\|R_{X}\Sigma^{1/2}_{X}\|_{\textup{tr}}=\|U_{\star}\|_{\textup{tr}}\|V_{\star}\|_{\textup{tr}}=\frac{\|L_{X}\Sigma^{1/2}_{X}\|^{2}_{\textup{tr}}+\|R_{X}\Sigma^{1/2}_{X}\|^{2}_{\textup{tr}}}{2}\\
&=\frac{\|U_{\star}\|^{2}_{\textup{tr}}+\|V_{\star}\|^{2}_{\textup{tr}}}{2}=\left(\frac{\|L_{X}\Sigma^{1/2}_{X}\|_{\textup{tr}}+\|R_{X}\Sigma^{1/2}_{X}\|_{\textup{tr}}}{2}\right)^{2}=\left(\frac{\|U_{\star}\|_{\textup{tr}}+\|V_{\star}\|_{\textup{tr}}}{2}\right)^{2}. \end{split}
\end{equation*}
Therefore, under the constraint $X=UV^{T}$, we have
\begin{displaymath}
\|X\|_{S_{1/2}}=\min_{X=UV^{T}}\!\left(\frac{\|U\|_{\textup{tr}}+\|V\|_{\textup{tr}}}{2}\right)^{2}=\min_{X=UV^{T}}\!\frac{\|U\|^{2}_{\textup{tr}}+\|V\|^{2}_{\textup{tr}}}{2}=\min_{X=UV^{T}}\!\|U\|_{\textup{tr}}\|V\|_{\textup{tr}}=\|X\|_{\textup{Bi-tr}}.
\end{displaymath}
This completes the proof.
\end{proof}

\section{Proof of Theorem 3}
Before giving the proof of Theorem 3, we first prove the boundedness of multipliers and some variables of Algorithm 1. To prove the boundedness, we first give the following lemmas.

\begin{lemma}[\cite{recht:nnm}] \label{Conv L1}
Let $\mathcal{H}$ be a real Hilbert space endowed with an inner product $\langle\cdot,\cdot\rangle$ and a corresponding norm $\|\!\cdot\!\|$, and $y\in \partial\|x\|$, where $\partial f(x)$ denotes the subgradient of $f(x)$. Then $\|y\|^{*}=1$ if $x\neq 0$, and $\|y\|^{*}\leq 1$ if $x=0$, where $\|\!\cdot\!\|^{*}$ is the dual norm of $\|\!\cdot\!\|$. For instance, the dual norm of the trace norm is the spectral norm, $\|\!\cdot\!\|_{2}$, i.e., the largest singular value.
\end{lemma}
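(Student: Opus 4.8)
The plan is to argue directly from the two definitions in play: the dual norm $\|y\|^{*}=\sup_{\|z\|\leq 1}\langle y,z\rangle$, and the defining inequality of the subgradient, namely $y\in\partial\|x\|$ if and only if $\|z\|\geq\|x\|+\langle y,z-x\rangle$ for every $z\in\mathcal{H}$. I would dispose of the case $x=0$ first, since it is immediate: here the inequality reads $\|z\|\geq\langle y,z\rangle$ for all $z$, so restricting to the unit ball $\{z:\|z\|\leq 1\}$ and taking the supremum gives $\|y\|^{*}\leq 1$, which is exactly the claimed bound.

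For $x\neq 0$ I would establish the two inequalities $\|y\|^{*}\leq 1$ and $\|y\|^{*}\geq 1$ separately. For the upper bound, I substitute $z=x+w$ for an arbitrary $w$ into the subgradient inequality to get $\|x+w\|\geq\|x\|+\langle y,w\rangle$; combining this with the triangle inequality $\|x+w\|\leq\|x\|+\|w\|$ yields $\langle y,w\rangle\leq\|w\|$ for all $w$, whence $\|y\|^{*}\leq 1$ after normalizing $w$.

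For the lower bound I would first extract the key identity $\langle y,x\rangle=\|x\|$. Taking $z=0$ in the subgradient inequality gives $\langle y,x\rangle\geq\|x\|$, while taking $z=2x$ and using positive homogeneity $\|2x\|=2\|x\|$ gives $\langle y,x\rangle\leq\|x\|$; together these force equality. Then the normalized vector $\bar{x}=x/\|x\|$ lies on the unit sphere and satisfies $\langle y,\bar{x}\rangle=\|x\|/\|x\|=1$, so $\|y\|^{*}\geq\langle y,\bar{x}\rangle=1$. Combined with the upper bound this yields $\|y\|^{*}=1$, completing the case $x\neq 0$.

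I do not expect a serious obstacle, as this is a classical convex-analytic fact; the only point requiring a little care is the lower bound, where one must verify that the subgradient actually attains the supremum defining the dual norm—this is precisely what the identity $\langle y,x\rangle=\|x\|$ guarantees, and it is the step that breaks down at $x=0$, which explains why only the inequality $\|y\|^{*}\leq 1$ survives there. For the intended application one then simply invokes the fact that the dual of the trace norm is the spectral norm, so the lemma certifies that any subgradient of $\|\cdot\|_{\textup{tr}}$ has spectral norm at most one.
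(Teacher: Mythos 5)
Your proof is correct. Note, however, that the paper itself offers no proof of this lemma at all: it is imported verbatim from the cited reference \cite{recht:nnm} and used as a black box (chiefly to bound multipliers and subgradients of the trace norm by $1$ in spectral norm). So there is no ``paper's approach'' to compare against; what you have supplied is the standard convex-analytic derivation, and it is complete. Your three steps are exactly the right ones: the substitution $z=x+w$ plus the triangle inequality gives $\langle y,w\rangle\leq\|w\|$ for all $w$, hence $\|y\|^{*}\leq 1$ in both cases; the two test points $z=0$ and $z=2x$ pin down the identity $\langle y,x\rangle=\|x\|$; and normalizing $x$ shows the supremum defining $\|y\|^{*}$ is attained at $\bar{x}=x/\|x\|$ when $x\neq 0$, which is precisely the step that is unavailable at $x=0$ and explains the asymmetry in the statement. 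The only cosmetic caveat is that the final sentence of the lemma (duality of trace and spectral norms) is a separate classical fact that your argument correctly treats as an input rather than something to be re-derived; that is consistent with how the paper uses it.
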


\begin{lemma}[\cite{bertsekas:np}] \label{Conv L2}
Assume that $\nabla\! g$ is Lipschitz continuous on \textup{dom}$(g):=\{X|g(X)<\infty\}$ satisfying the following condition: $\|\nabla\! g(X)-\nabla\! g(Y)\|_{F}\leq L_{g}\|X-Y\|_{F},\;\forall X, Y\in\textup{dom}(g)$, with a Lipschitz constant $L_{g}$. Then
\begin{displaymath}
g(X)\leq g(Y)+\langle\nabla\! g(Y),\,X-Y\rangle+\frac{L_{g}}{2}\|X-Y\|^{2}_{F},\;\;\forall X, Y\in\textup{dom}(g).
\end{displaymath}
\end{lemma}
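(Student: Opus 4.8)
The plan is to prove this as the standard \emph{descent lemma} (quadratic upper bound) by reducing the matrix-variable statement to a one-dimensional integral along the line segment joining $Y$ to $X$. First I would fix $X, Y \in \textup{dom}(g)$ and introduce the scalar auxiliary function $\phi(t) := g(Y + t(X - Y))$ for $t \in [0,1]$, which is well defined because the segment $\{Y + t(X-Y) : t \in [0,1]\}$ lies in $\textup{dom}(g)$. Since $\nabla\! g$ exists and is (Lipschitz, hence) continuous, $\phi$ is continuously differentiable with $\phi'(t) = \langle \nabla\! g(Y + t(X-Y)),\, X - Y \rangle$ by the chain rule, so all the objects needed below are integrable.

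Next I would apply the fundamental theorem of calculus to $\phi$ and subtract the first-order term. Concretely,
\[
g(X) - g(Y) = \phi(1) - \phi(0) = \int_{0}^{1} \langle \nabla\! g(Y + t(X-Y)),\, X - Y \rangle \, dt ,
\]
and since $\langle \nabla\! g(Y), X - Y \rangle = \int_{0}^{1} \langle \nabla\! g(Y), X - Y \rangle \, dt$, subtracting isolates the remainder as a single integral,
\[
g(X) - g(Y) - \langle \nabla\! g(Y), X - Y \rangle = \int_{0}^{1} \langle \nabla\! g(Y + t(X-Y)) - \nabla\! g(Y),\, X - Y \rangle \, dt .
\]

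The key step is then to bound the integrand from above. By the Cauchy--Schwarz inequality it is at most $\|\nabla\! g(Y + t(X-Y)) - \nabla\! g(Y)\|_{F}\,\|X - Y\|_{F}$, and the Lipschitz hypothesis with constant $L_{g}$ bounds the first factor by $L_{g}\|t(X-Y)\|_{F} = L_{g}\, t\, \|X - Y\|_{F}$. Hence the integrand is at most $L_{g}\, t\, \|X - Y\|_{F}^{2}$, and integrating $\int_{0}^{1} L_{g}\, t \, dt = L_{g}/2$ delivers exactly the claimed bound $\tfrac{L_{g}}{2}\|X - Y\|_{F}^{2}$.

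I do not expect a genuine obstacle here: the argument is essentially mechanical, and the only points requiring a word of care are the differentiability of $\phi$ and the interchange of the $\langle\cdot,\cdot\rangle$ pairing with the integral, both of which follow from continuity of $\nabla\! g$ on the segment. Since this lemma is a textbook fact cited from~\cite{bertsekas:np}, in the paper itself the proof may simply be omitted and replaced by the reference, with the sketch above recorded only for completeness.
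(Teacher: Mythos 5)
Your argument is the standard and correct proof of the descent lemma: parametrize the segment, apply the fundamental theorem of calculus, bound the remainder integral via Cauchy--Schwarz and the Lipschitz hypothesis, and integrate $\int_{0}^{1} L_{g}\,t\,dt = L_{g}/2$. The paper itself gives no proof of this lemma, citing it directly from~\cite{bertsekas:np}, and your sketch coincides with the textbook argument; the only hypothesis worth making explicit is that the segment from $Y$ to $X$ lies in $\textup{dom}(g)$ (i.e.\ convexity of the domain), which holds trivially in the paper's application where $g=\varphi_{k}$ is defined on all of $\mathbb{R}^{m\times d}$.
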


\begin{lemma}
Let $\lambda_{k+1}=\lambda_{k}+\beta_{k}(\mathcal{A}(U_{k+1}V^{T}_{k+1})-b-e_{k+1})$, then the sequences $\{U_{k},V_{k}\}$, $\{e_{k}\}$ and $\{\lambda_{k}\}$ produced by Algorithm 1 are all bounded.
\end{lemma}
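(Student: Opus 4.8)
The plan is to bound the multiplier sequence $\{\lambda_k\}$ first, and then to combine that bound with the monotone behaviour of the augmented Lagrangian $\mathcal{L}$ to extract the boundedness of the primal iterates $\{U_k,V_k\}$ and $\{e_k\}$. Throughout I abbreviate $r_k:=\mathcal{A}(U_kV_k^T)-b-e_k$, so that the multiplier update reads $\lambda_k=\lambda_{k-1}+\beta_{k-1}r_k$.

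First I would bound $\{\lambda_k\}$ by exploiting the optimality condition of the $e$-subproblem. Its first-order condition is $0\in\frac{1}{\mu}\partial f(e_{k+1})-\beta_k\!\left(\mathcal{A}(U_{k+1}V_{k+1}^T)-b-e_{k+1}+\lambda_k/\beta_k\right)$, and substituting the multiplier update collapses the bracketed term to $\lambda_{k+1}/\beta_k$, giving $\lambda_{k+1}\in\frac{1}{\mu}\partial f(e_{k+1})$. For the non-smooth loss $f(\cdot)=\|\!\cdot\!\|_1$ (indeed for any Lipschitz $f$), Lemma~\ref{Conv L1} shows every element of $\partial f$ is bounded in the dual norm (here the $\ell_\infty$-norm), so $\{\lambda_k\}$ is bounded and hence $\{\lambda_{k}-\lambda_{k-1}\}$ is bounded as well.

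Next I would control $\mathcal{L}_k:=\mathcal{L}(U_k,V_k,e_k,\lambda_{k-1},\beta_{k-1})$. Each linearized block update minimizes a proximal surrogate that majorizes $\mathcal{L}$ in the corresponding block: by the descent Lemma~\ref{Conv L2} applied with the Lipschitz constants $t^\varphi_k,t^\psi_k$, the $U$- and $V$-surrogates dominate the true objective whenever the step sizes satisfy \eqref{Al4}, and the $e$-update is exact. Running the three updates in sequence is therefore block-coordinate descent on $\mathcal{L}(\cdot,\cdot,\cdot,\lambda_k,\beta_k)$, yielding $\mathcal{L}(U_{k+1},V_{k+1},e_{k+1},\lambda_k,\beta_k)\le\mathcal{L}(U_k,V_k,e_k,\lambda_k,\beta_k)$. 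The only growth comes from the multiplier/penalty changes, which I would quantify using $\langle\lambda_k-\lambda_{k-1},r_k\rangle=\|\lambda_k-\lambda_{k-1}\|_2^2/\beta_{k-1}$ and the $\beta$-change term $\frac{\beta_k-\beta_{k-1}}{2\beta_{k-1}^2}\|\lambda_k-\lambda_{k-1}\|_2^2$, producing the recursion
\begin{equation*}
\mathcal{L}_{k+1}\le\mathcal{L}_k+\frac{\beta_{k-1}+\beta_k}{2\beta_{k-1}^2}\|\lambda_k-\lambda_{k-1}\|_2^2.
\end{equation*}
Since $\{\lambda_k-\lambda_{k-1}\}$ is bounded and $\beta_k$ grows geometrically, $\sum_k(\beta_{k-1}+\beta_k)/\beta_{k-1}^2=O(\sum_k\beta_{k-1}^{-1})<\infty$, so the perturbation series is summable and $\{\mathcal{L}_k\}$ is bounded above.

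Finally I would rewrite $\mathcal{L}_k$ by completing the square in $\lambda$: with $\lambda_k=\lambda_{k-1}+\beta_{k-1}r_k$ one gets $\langle\lambda_{k-1},r_k\rangle+\frac{\beta_{k-1}}{2}\|r_k\|_2^2=\frac{1}{2\beta_{k-1}}\!\left(\|\lambda_k\|_2^2-\|\lambda_{k-1}\|_2^2\right)$, hence
\begin{equation*}
\mathcal{L}_k=\tfrac{1}{2}\!\left(\|U_k\|_{\textup{tr}}+\|V_k\|_{\textup{tr}}\right)+\frac{f(e_k)}{\mu}+\frac{1}{2\beta_{k-1}}\!\left(\|\lambda_k\|_2^2-\|\lambda_{k-1}\|_2^2\right).
\end{equation*}
The last term is bounded because $\{\lambda_k\}$ is bounded and $\beta_{k-1}\ge\beta_0$; as the three nonnegative quantities $\|U_k\|_{\textup{tr}}$, $\|V_k\|_{\textup{tr}}$ and $f(e_k)$ then sum to something bounded above, each is individually bounded. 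Consequently $\{U_k\},\{V_k\}$ are bounded in trace norm, hence in Frobenius norm, and the coercivity of $f=\|\!\cdot\!\|_1$ forces $\{e_k\}$ to be bounded. The main obstacle is the middle step: establishing the non-increase of $\mathcal{L}$ across the three primal updates requires the exact majorization inequalities from the linearization with $t^\varphi_k,t^\psi_k$, and the summability of the perturbations hinges on the geometric growth of $\beta_k$ — the saturation at $\beta_{\max}$ must be handled carefully, since once $\beta_k$ stops increasing one must instead invoke the separately assumed decay $\|\lambda_{k+1}-\lambda_k\|_2\to0$ to keep the series finite.
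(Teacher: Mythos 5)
Your proposal follows essentially the same route as the paper's proof: bound $\{\lambda_k\}$ via the optimality condition of the $e$-subproblem and the dual-norm bound on subgradients of $\|\cdot\|_1$, establish monotone descent of the augmented Lagrangian across the three primal updates using the majorization from the linearized surrogates, absorb the multiplier/penalty perturbation into a summable series $\sum_k\frac{\beta_{k-1}+\beta_k}{2\beta_{k-1}^2}\|\lambda_k-\lambda_{k-1}\|_2^2$, and then read off boundedness of $\|U_k\|_{\textup{tr}}$, $\|V_k\|_{\textup{tr}}$ and $\|e_k\|_1$ after completing the square in $\lambda$. Your closing remark about the saturation of $\beta_k$ at $\beta_{\max}$ identifies a point the paper's summability computation silently assumes away, but this does not alter the fact that the argument is the same as the paper's.
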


\begin{proof}
By the first-order optimality condition of the Lagrangian function $\mathcal{L}(U,V,e,\lambda,\beta)$ with respect to $e$, we have
\begin{displaymath}
0\in \partial_{e}\mathcal{L}(U_{k+1},V_{k+1},e,\lambda_{k},\beta_{k}),
\end{displaymath}
which equivalently states that
\begin{displaymath}
\beta_{k}\!\left(\mathcal{A}(U_{k+1}V^{T}_{k+1})-e_{k+1}-b\right)+\lambda_{k}\in \frac{1}{\mu}\partial\|e_{k+1}\|_{1}.
\end{displaymath}

Recalling $\lambda_{k+1}=\lambda_{k}+\beta_{k}(\mathcal{A}(U_{k+1}V^{T}_{k+1})-b-e_{k+1})$, we have $\lambda_{k+1}\in \frac{1}{\mu}\partial\|e_{k+1}\|_{1}$. By Lemma~\ref{Conv L1}, we have
\begin{displaymath}
\|\lambda_{k+1}\|_{\infty}\leq \frac{1}{\mu},
\end{displaymath}
where $\|\!\cdot\!\|_{\infty}$ is the dual norm of $\|\!\cdot\!\|_{1}$. Thus, the sequence $\{\lambda_{k}\}$ is bounded.

By the definition of the linearization function $\widehat{\varphi}_{k}(U,U_{k})$ in (8), we have $\widehat{\varphi}_{k}(U_{k},U_{k})=\varphi_{k}(U_{k})$. Since $U_{k+1}$ is the optimal solution of (9), and by Lemma \ref{Conv L2}, then we can derive that
\begin{equation*}
\begin{split}
&\mathcal{L}(U_{k+1},V_{k},e_{k},\lambda_{k},\beta_{k})\\
=&\frac{1}{2}\|U_{k+1}\|_{\textup{tr}}+\frac{\beta_{k}}{2}\varphi_{k}(U_{k+1})+c\\
\leq&\frac{1}{2}\|U_{k+1}\|_{\textup{tr}}+\frac{\beta_{k}}{2}\widehat{\varphi}_{k}(U_{k+1},U_{k})+c\\ \leq&\frac{1}{2}\|U_{k}\|_{\textup{tr}}+\frac{\beta_{k}}{2}\varphi_{k}(U_{k})+c=\mathcal{L}(U_{k},V_{k},e_{k},\lambda_{k},\beta_{k}),
\end{split}
\end{equation*}
where $c$ is a constant independent of both $U_{k}$ and $U_{k+1}$. Similarly, we have
\begin{equation*}
\mathcal{L}(U_{k+1},V_{k+1},e_{k},\lambda_{k},\beta_{k})\leq\mathcal{L}(U_{k+1},V_{k},e_{k},\lambda_{k},\beta_{k}).
\end{equation*}
Furthermore, by the iteration procedure of Algorithm 1, we obtain
\begin{equation*}
\begin{split}
&\mathcal{L}(U_{k+1},V_{k+1},e_{k+1},\lambda_{k},\beta_{k})\\
\leq&\mathcal{L}(U_{k+1},V_{k+1},e_{k},\lambda_{k},\beta_{k})\leq\mathcal{L}(U_{k},V_{k},e_{k},\lambda_{k},\beta_{k})\\
=&\mathcal{L}(U_{k},V_{k},e_{k},\lambda_{k-1},\beta_{k-1})+\alpha_{k}\|\lambda_{k}-\lambda_{k-1}\|^2_{2},
\end{split}
\end{equation*}
where $\alpha_{k}=\frac{\beta_{k}+\beta_{k-1}}{2\beta^{2}_{k-\!1}}$.

Since
\begin{displaymath}
\sum^\infty_{k=1}\alpha_{k}=\frac{\rho(\rho+1)}{2\beta_{0}(\rho-1)}<+\infty,
\end{displaymath}
and recall the boundedness of $\{\lambda_{k}\}$, we have that $\{\mathcal{L}(U_{k},V_{k},e_{k},\lambda_{k-1},\beta_{k-1})\}$ is upper-bounded.

Note that $\lambda_{k}=\lambda_{k-1}+\beta_{k-1}(\mathcal{A}(U_{k}V^{T}_{k})-b-e_{k})$. Then we have
\begin{displaymath}
\frac{1}{2}(\|U_{k}\|_{\textup{tr}}+\|V_{k}\|_{\textup{tr}})+\frac{1}{\mu}\|e_{k}\|_{1}=\mathcal{L}(U_{k},V_{k},e_{k},\lambda_{k-1},\beta_{k-1})-\frac{\|\lambda_{k}\|^2_{2}-\|\lambda_{k-1}\|^2_{2}}{2\beta_{k-1}},
\end{displaymath}
which is also upper-bounded. Thus the sequences $\{e_{k}\}$, $\{U_{k}\}$ and $\{V_{k}\}$ are all bounded.
\end{proof}

\subsection*{Proof of Theorem 3:}
\begin{proof}
$\textbf{(I)}$ By $\mathcal{A}(U_{k+1}V^{T}_{k+1})-e_{k+1}-b=(\lambda_{k+1}-\lambda_{k})/\beta_{k}$, the boundedness of $\{\lambda_{k}\}$, and $\lim_{k\rightarrow\infty}\beta_{k}=\infty$, we have
\begin{equation*}
\lim_{k\rightarrow\infty}\|\mathcal{A}(U_{k+1}V^{T}_{k+1})-e_{k+1}-b\|_{2}=0.
\end{equation*}
Hence, $\{(U_{k},\,V_{k},\,e_{k})\}$ approaches to a feasible solution.

In the following, we will prove that the sequences $\{U_{k}\}$, $\{V_{k}\}$ and $\{e_{k}\}$ are Cauchy sequences.

By the boundedness of $\{\lambda_{k}\}$, $\{e_{k}\}$, $\{U_{k}\}$ and $\{V_{k}\}$, then both $\nabla\! \varphi_{k}(U_{k})$ and $t^{\varphi}_{k}$ are bounded. Furthermore, $\exists P_{k+1}\in\partial\|U_{k+1}\|_{\textup{tr}}$ satisfies the following first-order optimality condition of (9)
\begin{equation}\label{Th31}
\frac{1}{2}P_{k+1}+\beta_{k}t^{\varphi}_{k}\left[U_{k+1}-U_{k}+\frac{1}{t^{\varphi}_{k}}\nabla\! \varphi_{k}(U_{k})\right]=0.
\end{equation}
By Lemma \ref{Conv L1}, we have $\|P_{k+1}\|_{2}\leq 1$, which implies that $\{P_{k+1}\}$ is bounded.
\begin{equation}\label{Th312}
\nabla\! \varphi_{k}(U_{k})=\mathcal{A}^{*}[\mathcal{A}(U_{k}V^{T}_{k})-e_{k}-b+\lambda_{k}/\beta_{k}]V_{k}=\frac{\mathcal{A}^{*}((\rho+1)\lambda_{k}-\rho\lambda_{k-1})V_{k}}{\beta_{k}}.
\end{equation}

Substituting \eqref{Th312} into \eqref{Th31}, it is easy to see that
\begin{equation*}
\|U_{k+1}-U_{k}\|_{F}=\frac{\|\frac{1}{2}P_{k+1}+\beta_{k}\nabla\! \varphi_{k}(U_{k})\|_{F}}{\beta_{k}t^{\varphi}_{k}}=\frac{\|P_{k+1}+2\mathcal{A}^{*}((\rho+1)\lambda_{k}-\rho\lambda_{k-1})V_{k}\|_{F}}{2\beta_{k}t^{\varphi}_{k}}.
\end{equation*}
Consequently, if $m>n$,
\begin{equation*}
\begin{split}
\|U_{n}\!-\!U_{m}\|_{F}\!\!&\leq\!\|U_{n}-U_{n+1}\|_{F}+\|U_{n+1}-U_{n+2}\|_{F}+\ldots+\|U_{m-1}-U_{m}\|_{F}\\
&\!=\!\!\frac{\|\!P\!_{n\!+\!1}\!\!+\!\!2\!\mathcal{A}^{*}\!(\!(\!\rho\!+\!1\!)\!\lambda_{n}\!\!-\!\!\rho\!\lambda_{n\!-\!1}\!)\!V_{n}\!\|_{F}\!\!}{2\beta_{n}t^{\varphi}_{n}}\!\!+\!\!\frac{\|
\!P\!_{n\!+\!2}\!\!+\!\!2\!\mathcal{A}^{*}\!(\!(\!\rho\!+\!1\!)\!\lambda_{n\!+\!1}\!\!-\!\!\rho\!\lambda_{n}\!)\!V_{n\!+\!1}\!\|_{F}\!\!}{2\beta_{n+\!1}t^{\varphi}_{n\!+\!1}}\!\!+\!\!\ldots\!\!+\!\!
\frac{\|\!P\!_{m}\!\!+\!\!2\!\mathcal{A}^{*}\!(\!(\!\rho\!+\!1\!)\!\lambda_{m\!-\!1}\!\!-\!\!\rho\!\lambda_{m\!-\!2}\!)\!V_{m\!-\!1}\!\|_{F}\!\!}{2\beta_{m-\!1}t^{\varphi}_{m-\!1}}\\
&\!\leq\!\delta_{C}(\frac{1}{\beta_{n}}+\frac{1}{\beta_{n+1}}+\ldots+\frac{1}{\beta_{m-1}})=\frac{\delta_{C}}{\beta_{n}}(1+\frac{1}{\rho}+\ldots+\frac{1}{\rho^{m-n-1}})<\frac{\rho\delta_{C}}{(\rho-1)\beta_{n}},
\end{split}
\end{equation*}
where $\delta_{C}\!=\!\max\{\frac{\|\!P_{n\!+\!1}+2\!\mathcal{A}^{*}\!((\!\rho\!+\!1\!)\lambda_{n}\!-\!\rho\lambda_{n\!-\!1}\!)\!V_{n}\!\|_{F}}{2t^{\varphi}_{n}},\frac{\|
\!P_{n\!+\!2}+2\!\mathcal{A}^{*}\!((\!\rho\!+\!1\!)\lambda_{n\!+\!1}\!-\!\rho\lambda_{n}\!)\!V_{n\!+\!1}\!\|_{F}}{2t^{\varphi}_{n\!+\!1}},\ldots,\frac{\|\!P_{m}+2\!\mathcal{A}^{*}\!((\!\rho\!+\!1\!)\lambda_{m\!-\!1}\!-\!\rho\lambda_{m\!-\!2}\!)\!V_{m\!-\!1}\!\|_{F}}{2t^{\varphi}_{m-\!1}}\}$.
Since $\frac{\rho\delta_{C}}{(\rho-1)\beta_{n}}\!\rightarrow\! 0$, it follows that indeed $\{U_{k}\}$ is a Cauchy sequence.

Similarly, $\{V_{k}\}$ and $\{e_{k}\}$ are also Cauchy sequences.

$\textbf{(II)}$ Let $(U_{*},V_{*},e_{*})$ be a stationary point of (6), then the Karush-Kuhn-Tucker (KKT) conditions for (6) are formulated as follows:
\begin{equation*}
\begin{split}
0&\in \partial \|U_{*}\|_{\textup{tr}}+2\mathcal{A}^{*}(\lambda_{*})V_{*},\\
0&\in \partial \|V_{*}\|_{\textup{tr}}+2(\mathcal{A}^{*}(\lambda_{*}))^{T}U_{*},\\
0&\in \frac{1}{\mu}\partial \|e_{*}\|_{1}-\lambda_{*},\\
e_{*}&\!=\mathcal{A}(U_{*}V^{T}_{*})-b,
\end{split}
\end{equation*}
where $\lambda_{*}$ is the associated Lagrangian multiplier. The first-order optimality condition of each subproblem at the $(k\!+\!1)$-th iteration is given by
\begin{equation}\label{Th32}
\begin{split}
&0\in\partial\|U_{k+1}\|_{\textup{tr}}+2\beta_{k}t^{\varphi}_{k}\!\left[U_{k+1}-U_{k}+\frac{1}{t^{\varphi}_{k}}\nabla\! \varphi_{k}(U_{k})\right],\\
&0\in\partial\|V_{k+1}\|_{\textup{tr}}+2\beta_{k}t^{\psi}_{k}\!\left[V_{k+1}-V_{k}+\frac{1}{t^{\psi}_{k}}\nabla\! \psi_{k}(V_{k})\right],\\
&0\in \frac{1}{\mu}\partial \|e_{k+1}\|_{1}-\beta_{k}\!\left[\mathcal{A}(U_{k+1}V^{T}_{k+1})-e_{k+1}-b+\lambda_{k}/\beta_{k}\right].
\end{split}
\end{equation}

Since $\{U_{k}\}$, $\{V_{k}\}$ and $\{e_{k}\}$ are Cauchy sequences, then $\|U_{k+1}-U_{k}\|_{F}\rightarrow 0$, $\|V_{k+1}-V_{k}\|_{F}\rightarrow 0$ and $\|e_{k+1}-e_{k}\|_{2}\rightarrow 0$. Let $U_{\infty}$, $V_{\infty}$ and $e_{\infty}$ be their limit points, respectively, and $\lambda_{\infty}$ be the associated Lagrangian multiplier. By the assumption $\|\lambda_{k+1}-\lambda_{k}\|_{2}\rightarrow 0$, and $(U_{k},V_{k},e_{k})$ approaches a feasible solution, then $\beta_{k}\nabla\! \varphi_{k}(U_{k})=\beta_{k}\mathcal{A}^{*}[\mathcal{A}(U_{k}V^{T}_{k})-b-e_{k}+\lambda_{k}/\beta_{k}]V_{k}\rightarrow \mathcal{A}^{*}(\lambda_{\infty})V_{\infty}$ and $\beta_{k}\nabla\! \psi_{k}(V_{k})=\beta_{k}\{\mathcal{A}^{*}[\mathcal{A}(U_{k+1}V^{T}_{k})-b-e_{k}+\lambda_{k}/\beta_{k}]\}^{T}U_{k+1}\rightarrow [\mathcal{A}^{*}(\lambda_{\infty})]^{T}U_{\infty}$. Therefore, with $k\rightarrow \infty$, the following holds
\begin{equation*}
\begin{split}
0&\in \partial\|U_{\infty}\|_{\textup{tr}}+2\mathcal{A}^{*}(\lambda_{\infty})V_{\infty},\\
0&\in \partial\|V_{\infty}\|_{\textup{tr}}+2[\mathcal{A}^{*}(\lambda_{\infty})]^{T}U_{\infty},\\
0&\in \frac{1}{\mu}\partial\|e_{\infty}\|_{1}-\lambda_{\infty},\\
e_{\infty}&\!=\mathcal{A}(U_{\infty}V^{T}_{\infty})-b.
\end{split}
\end{equation*}
Hence, the accumulation point $(U_{\infty},U_{\infty},e_{\infty})$ of the sequence $\{(U_{k},U_{k},e_{k})\}$ generated by Algorithm 1 satisfies the KKT conditions for the problem (6).
\end{proof}

\section{Proof of Theorem 4}
To solve the bi-trace quasi-norm regularized problem (4) with the squared loss $\|\!\cdot\!\|^{2}_{2}$, the proposed algorithm is based on the proximal alternating linearized minimization (PALM) method for solving the following non-convex problem:
\begin{equation}\label{smeq2}
\min_{x,y}\,Q(x,y):=F(x)+G(y)+H(x,y),
\end{equation}
where $F(x)$ and $G(y)$ are proper lower semi-continuous functions, and $H(x,y)$ is a smooth function with Lipschitz continuous gradients on any bounded set.

In Section 4.2 of the main paper, we stated that our PALM algorithm alternates between two blocks of variables, $U$ and $V$. We establish the global convergence of our PALM algorithm by transforming the problem (4) into a standard form \eqref{smeq2}, and show that the transformed problem satisfies the condition needed to establish the convergence. First, the minimization problem (4) can be expressed in the form of \eqref{smeq2} by setting
\begin{equation*}
\begin{cases}
F(U):=\frac{1}{2}\|U\|_{\textup{tr}};\\
G(V):=\frac{1}{2}\|V\|_{\textup{tr}};\\
H(U,V):=\frac{1}{2\mu}\|\mathcal{A}(UV^{T})-b\|^{2}_{2}.
\end{cases}
\end{equation*}

The conditions for global convergence of the PALM algorithm proposed in~\cite{bolte:palm} are shown in the following lemma.
\begin{lemma}\label{lem7}
Let $\{(x_{k},y_{k})\}$ be a sequence generated by the PALM algorithm proposed in~\cite{bolte:palm}. This sequence converges to a critical point of \eqref{smeq2}, if the following conditions hold:
\begin{enumerate}
  \item $Q(x,y)$ is a Kurdyka-{\L}ojasiewicz (KL) function;
  \item $\nabla H(x,y)$ has Lipschitz constant on any bounded set;
  \item $\{(x_{k},y_{k})\}$ is a bounded sequence.
\end{enumerate}
\end{lemma}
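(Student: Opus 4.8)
The plan is to prove Lemma~\ref{lem7} along the lines of the abstract convergence framework for proximal alternating linearized minimization, using the two building blocks already available in the excerpt: the descent inequality of Lemma~\ref{Conv L2} and the subgradient characterization of Lemma~\ref{Conv L1}. Write $z_k=(x_k,y_k)$ for the iterates. The argument rests on three ingredients executed in order --- a sufficient-decrease estimate, a subgradient lower bound on the iterate gap, and the finite-length consequence of the KL inequality --- after which Cauchyness and convergence to a single critical point follow.

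First I would prove the \emph{sufficient decrease} property. The PALM updates are proximal-gradient steps in each block: $x_{k+1}$ minimizes $F(x)+\langle\nabla_x H(x_k,y_k),x-x_k\rangle+\frac{c_k}{2}\|x-x_k\|^2$ with step size $c_k$ exceeding the partial Lipschitz constant of $x\mapsto\nabla_x H(\cdot,y_k)$, and symmetrically for $y_{k+1}$ with a step size $d_k$. Applying Lemma~\ref{Conv L2} to $H$ in each block and using that $x_{k+1},y_{k+1}$ are the respective proximal minimizers yields a constant $\gamma>0$ with
\begin{equation*}
Q(z_{k+1})+\frac{\gamma}{2}\|z_{k+1}-z_k\|^2\le Q(z_k).
\end{equation*}
Since $\{z_k\}$ is bounded (condition~3) and $Q$ is lower semicontinuous, $Q$ is bounded below along the iterates; as $Q(z_k)$ is non-increasing it converges, and summing the estimate gives $\sum_k\|z_{k+1}-z_k\|^2<\infty$, hence $\|z_{k+1}-z_k\|\to0$.

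Next I would establish a \emph{subgradient bound}. From the first-order optimality conditions of the two proximal subproblems, together with $\nabla H$ being Lipschitz on the bounded region (condition~2), one constructs an element $w_{k+1}\in\partial Q(z_{k+1})$ and a constant $b>0$ satisfying $\|w_{k+1}\|\le b\,\|z_{k+1}-z_k\|$; Lemma~\ref{Conv L1} is what keeps the subdifferential terms controlled. Because $\{z_k\}$ is bounded it has a nonempty compact set of accumulation points, and lower semicontinuity of $F,G$ with continuity of $\nabla H$ shows every such point is a critical point of $Q$ and that $Q$ takes a common finite value $Q^{*}$ on the whole accumulation set.

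The final and hardest step is the \emph{KL finite-length argument}, which is where all three hypotheses are used simultaneously. Invoking the uniformized KL property on a neighborhood of the accumulation set with desingularizing function $\phi$, and combining the KL inequality $\phi'(Q(z_k)-Q^{*})\,\textup{dist}(0,\partial Q(z_k))\ge1$ with the sufficient-decrease and subgradient estimates, I would derive a telescoping bound of the form
\begin{equation*}
\|z_{k+1}-z_k\|\le C\big(\phi(Q(z_k)-Q^{*})-\phi(Q(z_{k+1})-Q^{*})\big)+\tfrac14\|z_k-z_{k-1}\|
\end{equation*}
for a constant $C>0$. Summing over $k$ shows $\sum_k\|z_{k+1}-z_k\|<\infty$, so $\{z_k\}$ has finite length, is therefore Cauchy, and converges; by the previous step the limit is a critical point of $Q$. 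The delicate point is precisely the bookkeeping that turns the KL inequality into a summable length bound, since it requires the concavity of $\phi$, the descent estimate, and the relative-error bound to be chained together without losing the geometric control near the accumulation set.
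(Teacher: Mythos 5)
The paper never proves Lemma \ref{lem7}: it is imported verbatim from \cite{bolte:palm}, and the supplementary material only verifies that its three hypotheses hold for the specific objective \eqref{Fm2} (semi-algebraicity implies KL, polynomial $H$ has locally Lipschitz gradient, the iterates stay in a box). So there is no in-paper argument to compare against; what you have written is, in outline, the proof of the convergence theorem of \cite{bolte:palm} itself, and the three-step architecture --- sufficient decrease, relative-error subgradient bound, uniformized-KL finite-length argument --- is the correct one. Three points of bookkeeping are worth flagging. First, condition 3 is needed not only to bound $Q$ from below along the iterates but also to keep the block Lipschitz moduli of $\nabla_x H(\cdot,y_k)$ and $\nabla_y H(x_{k+1},\cdot)$, hence the step sizes $c_k,d_k$ and your descent constant $\gamma$, uniformly bounded above and away from zero; without this the sufficient-decrease inequality can degenerate. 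Second, Lemma \ref{Conv L1} is not what controls the subdifferential terms in the relative-error bound: the optimality condition of the $x$-subproblem yields an explicit element $u_{k+1}=-\nabla_x H(x_k,y_k)-c_k(x_{k+1}-x_k)\in\partial F(x_{k+1})$, so that $w^x_{k+1}=\nabla_x H(z_{k+1})+u_{k+1}\in\partial_x Q(z_{k+1})$ has norm at most $(M+c_k)\|z_{k+1}-z_k\|$ using only condition 2; no bound on the size of subgradients of $F$ or $G$ is needed (and indeed Lemma \ref{Conv L1} would not apply to a general proper lsc $F$). Third, the degenerate case $Q(z_{k_0})=Q^{*}$ for some finite $k_0$ must be dispatched separately, since $\phi'$ is only defined on $(0,\eta)$; the descent inequality then forces $z_{j+1}=z_j$ for all $j\geq k_0$. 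With these caveats your sketch is a faithful reconstruction of the cited proof, and it supplies an argument the paper itself omits.
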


As stated in Lemma \ref{lem7}, the first condition requires that the objective function satisfies the KL property (For more details, see~\cite{bolte:palm}). It is known that any proper closed semi-algebraic function is a KL function as such a function satisfies the KL property for all points in $\textup{dom}f$ with $\phi(s)=cs^{1-\theta}$ for some $\theta\in[0,1)$ and some $c>0$. Therefore, we first give the following definitions of semi-algebraic sets and functions, and then prove that the proposed problem (4) with the squared loss $\|\!\cdot\!\|^{2}_{2}$ is also semi-algebraic.

\begin{definition} [\cite{bolte:palm}]
A subset $S\subset\mathbb{R}^{n}$ is a real semi-algebraic set if there exists a finite number of real polynomial functions $g_{ij},\,h_{ij}:\mathbb{R}^{n}\rightarrow \mathbb{R}$ such that
\begin{equation*}
S=\bigcup_{j}\bigcap_{i} \left\{u\in \mathbb{R}^{n}:g_{ij}(u)=0,\;h_{ij}(u)<0\right\}.
\end{equation*}
Moreover, a function $g(u)$ is called semi-algebraic if its graph $\{(u,t)\in \mathbb{R}^{n+1}: g(u)=t\}$ is a semi-algebraic set.
\end{definition}

Semi-algebraic sets are stable under the operations of finite union, finite intersections, complementation and Cartesian product. The following are the semi-algebraic functions or the property of semi-algebraic functions used below:
\begin{itemize}
\item Real polynomial functions.
\item Finite sums and product of semi-algebraic functions.
\item Composition of semi-algebraic functions.
\end{itemize}

\begin{lemma}
Each term in the proposed problem (4) with the squared loss $\|\!\cdot\!\|^{2}_{2}$ is a semi-algebraic function, and thus the function (4) is also semi-algebraic.
\end{lemma}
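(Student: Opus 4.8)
The plan is to check the three summands of (4) with squared loss separately---$F(U)=\tfrac12\|U\|_{\textup{tr}}$, $G(V)=\tfrac12\|V\|_{\textup{tr}}$, and $H(U,V)=\tfrac{1}{2\mu}\|\mathcal{A}(UV^{T})-b\|^{2}_{2}$---show each is semi-algebraic, and then combine them using closure of the semi-algebraic class under finite sums and products. The term $H$ is the easy one: since $\mathcal{A}$ is linear, each coordinate of $\mathcal{A}(UV^{T})$ is a fixed linear combination of the entries $(UV^{T})_{ij}=\sum_{k}U_{ik}V_{jk}$, which are bilinear---hence polynomial---in the entries of $U$ and $V$. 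Subtracting the constant vector $b$ and taking the squared Euclidean norm yields a real polynomial in the combined variables $(U,V)$, so $H$ is a polynomial function and therefore semi-algebraic by the first listed property.

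The real work is showing the trace norm $\|U\|_{\textup{tr}}=\sum_{i}\sigma_{i}(U)$ is semi-algebraic, after which $F$ and $G$ follow immediately as products of the constant $\tfrac12$ (a degenerate polynomial, hence semi-algebraic) with a semi-algebraic function. I would argue this at the level of graphs. Introduce auxiliary scalars $s_{1}\geq\cdots\geq s_{d}\geq0$ playing the role of the singular values and encode the statement ``$s_{1}^{2},\ldots,s_{d}^{2}$ are exactly the eigenvalues of $U^{T}U$'' by equating the elementary symmetric functions of the $s_{i}^{2}$ with the corresponding coefficients of the characteristic polynomial $\det(\lambda I-U^{T}U)$, whose coefficients are themselves polynomials in the entries of $U$. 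Together with the ordering inequalities and the defining relation $t=\sum_{i}s_{i}$, this realizes the graph $\{(U,t):t=\|U\|_{\textup{tr}}\}$ as the projection, eliminating the $s_{i}$, of a set cut out by finitely many polynomial equalities and inequalities (a semi-algebraic set, using closure under complementation to obtain the non-strict inequalities). Since the projection of a semi-algebraic set is again semi-algebraic (the Tarski--Seidenberg principle), the graph of $\|\cdot\|_{\textup{tr}}$ is semi-algebraic, i.e.\ the trace norm is a semi-algebraic function. Equivalently, one may invoke the classical fact that each singular-value map $\sigma_{i}(\cdot)$ is semi-algebraic and write $\|\cdot\|_{\textup{tr}}$ as their finite sum.

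Assembling the pieces, $F$ and $G$ are semi-algebraic (scalar multiples of the semi-algebraic trace norm), $H$ is semi-algebraic (a polynomial), and the full objective $F(U)+G(V)+H(U,V)$ is a finite sum of semi-algebraic functions, hence semi-algebraic. I expect the trace-norm step to be the sole obstacle: because $\|\cdot\|_{\textup{tr}}$ is not polynomial, it cannot be reached inside the ``polynomials / sums / products / compositions'' toolkit alone, so one must pass through the auxiliary quantified (characteristic-polynomial) description and rely on the projection property of semi-algebraic sets to eliminate the singular-value variables.
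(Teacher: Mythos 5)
Your proposal is correct and follows the same overall decomposition as the paper: treat $H(U,V)=\tfrac{1}{2\mu}\|\mathcal{A}(UV^{T})-b\|_{2}^{2}$ as a real polynomial in the entries of $(U,V)$, establish that the trace norm is semi-algebraic so that $F$ and $G$ are, and conclude by closure of semi-algebraic functions under finite sums. The difference lies entirely in how the trace-norm step is justified. The paper's argument there is quite thin: it asserts that the $\ell_{1}$-norm is semi-algebraic and that, since the trace norm is ``the $\ell_{1}$-norm on singular values,'' it is ``natural'' that the trace norm is semi-algebraic --- which implicitly assumes that each singular-value map is itself semi-algebraic and that composition preserves the class, without proving either. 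You instead give a self-contained quantifier-elimination argument: introduce ordered nonnegative auxiliaries $s_{1}\geq\cdots\geq s_{d}\geq 0$, tie their squares to the eigenvalues of $U^{T}U$ by matching elementary symmetric functions against the coefficients of $\det(\lambda I - U^{T}U)$ (polynomials in the entries of $U$), adjoin $t=\sum_{i}s_{i}$, and invoke Tarski--Seidenberg to conclude that the projection eliminating the $s_{i}$ --- i.e.\ the graph of $\|\cdot\|_{\textup{tr}}$ --- is semi-algebraic. This buys a complete proof of the one nontrivial step, at the cost of going outside the ``polynomials, sums, products, compositions'' toolkit the paper lists and appealing to stability of semi-algebraic sets under projection (which the paper does not state but which is standard and is what any honest proof of this step must use). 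Your diagnosis that the trace norm is the sole genuine obstacle, and that it cannot be reached by the listed closure properties alone, is exactly right.
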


\begin{proof}
It is easy to notice that the sets $\mathcal{U}=\{U\in \mathbb{R}^{m\times d}:\|U\|_{\infty}\leq D_{1}\}$ and $\mathcal{V}=\{V\in \mathbb{R}^{n\times d}:\|V\|_{\infty}\leq D_{2}\}$ are both semi-algebraic sets, where $D_{1}$ and $D_{2}$ denote two pre-defined upper-bounds for all entries of $U$ and $V$, respectively.

For both terms $F(U)=\frac{1}{2}\|U\|_{\textup{tr}}$ and $G(V)=\frac{1}{2}\|V\|_{\textup{tr}}$: According to~\cite{bolte:palm}, we can know that the $\ell_{1}$-norm is a semi-algebraic function. Since the trace norm is equivalent to the $\ell_{1}$-norm on singular values of the associated matrix, it is natural that the trace norm is also semi-algebraic.

For the third term $H(U,V)=\frac{1}{2\mu}\|\mathcal{A}(UV^{T})-b\|^{2}_{2}$, it is a real polynomial function, and thus is a semi-algebraic function~\cite{bolte:palm}. Therefore, the proposed problem (4) with the squared loss $\|\!\cdot\!\|^{2}_{2}$ is semi-algebraic due to the fact that a finite sum of semi-algebraic functions is also semi-algebraic.
\end{proof}

For the second condition in Lemma \ref{lem7}, $H(U,V)=\frac{1}{2\mu}\|\mathcal{A}(UV^{T})-b\|^{2}_{2}$ is a smooth polynomial function, and $\nabla H(U,V)=(\frac{1}{\mu}\mathcal{A}^{*}[\mathcal{A}(UV^{T})-b]V,\,\frac{1}{\mu}\{\mathcal{A}^{*}[\mathcal{A}(UV^{T})-b]\}^{T}U)$. It is natural that $\nabla H(U,V)$ has Lipschitz constant on any bounded set~\cite{toh:apg}.

For the final condition in Lemma \ref{lem7}, $U_{k}\in\mathcal{U}$ and $V_{k}\in\mathcal{V}$ for any $k=1,2,\ldots$, which implies the sequence $\{(U_{k},V_{k})\}$ is bounded.

In short, we can know that three similar conditions as in Lemma \ref{lem7} hold for our PALM algorithm. In other words, our PALM algorithm shares the same convergence property as in Lemma \ref{lem7}.~\\

\section{Proof of Theorem 6}
In order to prove Theorem 6, we first introduce the following Lemma (i.e., the Lemma 11 in~\cite{oymak:lrm} or the Theorem 1 in~\cite{yue:lrmr}).

\begin{lemma}\label{AppC1}
Let $A, B\in \mathbb{R}^{n_{1}\times n_{2}}$, for any $p\in (0,1]$, then we have
\begin{equation*}
\sum^{n}_{i=1}|\sigma^{p}_{i}(A)-\sigma^{p}_{i}(B)|\leq\sum^{n}_{i=1}\sigma^{p}_{i}(A-B),
\end{equation*}
where $n=\min(n_{1},n_{2})$.
\end{lemma}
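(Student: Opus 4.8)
The plan is to read the inequality as the special case $f(t)=t^{p}$ of a perturbation bound for a general nonnegative, nondecreasing, concave function $f$ with $f(0)=0$, namely $\sum_{i}|f(\sigma_{i}(A))-f(\sigma_{i}(B))|\le\sum_{i}f(\sigma_{i}(A-B))$. The map $t\mapsto t^{p}$ with $p\in(0,1]$ has exactly these properties, and its two features I would exploit are (i) \emph{subadditivity}, $(x+y)^{p}\le x^{p}+y^{p}$ for $x,y\ge0$ (immediate from concavity and $0^{p}=0$, cf.\ Lemma~\ref{lem2}), and (ii) its interaction with Weyl's singular-value inequality. I would first flag why the most tempting shortcut fails: combining the scalar bound $|\sigma_{i}^{p}(A)-\sigma_{i}^{p}(B)|\le|\sigma_{i}(A)-\sigma_{i}(B)|^{p}$ with Mirsky's theorem (the $p=1$ case, $\sum_{i}|\sigma_{i}(A)-\sigma_{i}(B)|\le\|A-B\|_{\textup{tr}}$) does \emph{not} close, because $\sum_{i}|\sigma_{i}(A)-\sigma_{i}(B)|^{p}$ and $\sum_{i}\sigma_{i}^{p}(A-B)$ are then compared through a concave power, which reverses the direction of the weak majorization supplied by Mirsky. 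Hence a genuinely structural argument is needed.

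The organizing step I would use is an integral representation that linearizes $f$. For $p\in(0,1)$ one has
$$t^{p}=\int_{0}^{\infty}\min\{t,u\}\,p(1-p)\,u^{p-2}\,du,$$
a nonnegative mixture of the elementary concave ``ramp'' functions $f_{u}(t)=\min\{t,u\}$ (the case $p=1$ degenerates to $f(t)=t$). Since both sides of the target inequality are additive in $f$, it suffices to establish, for every fixed $u>0$,
$$\textstyle\sum_{i}\big|\min\{\sigma_{i}(A),u\}-\min\{\sigma_{i}(B),u\}\big|\le\sum_{i}\min\{\sigma_{i}(A-B),u\}.$$
Indeed, integrating this against the nonnegative weight $p(1-p)u^{p-2}\,du$, using the triangle inequality $\int|g(u)|\,d\nu\ge|\int g(u)\,d\nu|$ termwise on the left and additivity on the right, recovers $\sum_{i}|\sigma_{i}^{p}(A)-\sigma_{i}^{p}(B)|\le\sum_{i}\sigma_{i}^{p}(A-B)$; the $p=1$ endpoint is precisely Mirsky's theorem and serves as a base case.

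The main obstacle is the truncated inequality for $f_{u}$ itself, which is where the singular-value geometry really enters. Here I would invoke Weyl's inequality $\sigma_{i+j-1}(A)\le\sigma_{i}(A-B)+\sigma_{j}(B)$ together with the subadditivity and monotonicity of $f_{u}$ to obtain the pointwise ``shifted'' bounds $f_{u}(\sigma_{i+j-1}(A))-f_{u}(\sigma_{j}(B))\le f_{u}(\sigma_{i}(A-B))$ and its $A\leftrightarrow B$ symmetric counterpart. The delicate part is then to upgrade these pointwise estimates into the summed absolute-difference inequality: one must split the indices according to the sign of $\sigma_{i}(A)-\sigma_{i}(B)$ and assign to each contributing index a \emph{distinct} value $\sigma_{k}(A-B)$, so that every $\min\{\sigma_{k}(A-B),u\}$ is charged at most once and no spurious factor of two survives. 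I expect this Ky-Fan/matching bookkeeping (equivalently, a weak-majorization argument) --- making a single assignment work simultaneously for the positive and negative index blocks --- to be the crux of the proof, with the scalar subadditivity and Weyl ingredients above being routine by comparison.
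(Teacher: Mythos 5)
The paper itself offers no proof of this lemma --- it is imported verbatim as Lemma 11 of Oymak et al.\ / Theorem 1 of Yue--So --- so the comparison is against that source. There the argument is structurally different from yours: one telescopes $A-B$ through the rank-one terms of its SVD, uses the fact that $d_f(A,B)=\sum_i|f(\sigma_i(A))-f(\sigma_i(B))|$ obeys the triangle inequality to reduce to $\operatorname{rank}(A-B)=1$, and in the rank-one case observes that Weyl's inequalities make the intervals $[\min(\sigma_i(A),\sigma_i(B)),\max(\sigma_i(A),\sigma_i(B))]$ pairwise disjoint and ordered with total length at most $\sigma_1(A-B)$ (Mirsky); sliding these disjoint intervals down to concatenate at the origin only increases the $f$-increments (concavity), which bounds the sum by $f(\sigma_1(A-B))$. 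Your integral representation $t^{p}=\int_0^\infty \min\{t,u\}\,p(1-p)u^{p-2}\,du$ and the interchange of sum and integral are correct, and your diagnosis of why Mirsky plus $|x^{p}-y^{p}|\le|x-y|^{p}$ does not close is also correct. But the reduction to the ramp $f_u$ buys nothing --- the truncated inequality is an instance of the same general concave-perturbation statement with all the difficulty intact --- and the mechanism you then propose for it cannot be completed.

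The gap is the ``matching'' step: there is in general \emph{no} injective assignment $i\mapsto k(i)$ with $|f_u(\sigma_i(A))-f_u(\sigma_i(B))|\le f_u(\sigma_{k(i)}(A-B))$, so no bookkeeping that charges each $\sigma_k(A-B)$ once can work. Take $B=\operatorname{diag}(2,\tfrac12)$ and $A=B+2ww^{T}$ with $w=(1/\sqrt6,\sqrt5/\sqrt6)^{T}$; then $\sigma(A)=(3,\tfrac32)$, $\sigma(B)=(2,\tfrac12)$, $\sigma(A-B)=(2,0)$, and $|\sigma_1(A)-\sigma_1(B)|=|\sigma_2(A)-\sigma_2(B)|=1$. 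Already at $p=1$ (equivalently $u=\infty$, i.e.\ Mirsky's theorem itself) the inequality $1+1\le 2+0$ holds only as a sum with cross-compensation between indices: one of the two unit differences would have to be dominated by $f_u(\sigma_2(A-B))=f_u(0)=0$, which is impossible. The same example shows that the intermediate bound $\sum_i|f_u(\sigma_i(A))-f_u(\sigma_i(B))|\le\sum_i f_u(|\sigma_i(A)-\sigma_i(B)|)$ is a dead end (for $u=1$ its right-hand side equals $2$ while $\sum_i f_u(\sigma_i(A-B))=1$). What saves the inequality is not the sizes of the differences but \emph{where} the singular values sit relative to one another --- exactly the interlacing information that the rank-one reduction exposes and that an index-matching argument discards. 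I would therefore abandon the ramp decomposition and the Weyl matching, and prove the lemma for $f(t)=t^{p}$ directly via the rank-one telescoping argument described above.
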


\subsection*{Proof of Theorem 6:}
\begin{proof}
($\Longrightarrow$) By the definitions of $U_{\star}$ and $V_{\star}$ and Theorem 1 of the main paper, we have
\begin{equation}\label{Th41}
X_{0}=U_{\star}V^{T}_{\star},\;\;\|X_{0}\|^{1/2}_{S_{1/2}}=\frac{\|U_{\star}\|_{\textup{tr}}+\|V_{\star}\|_{\textup{tr}}}{2},
\end{equation}
and $\textup{rank}(U_{\star})=\textup{rank}(X_{0})\leq r$ and $\textup{rank}(V_{\star})=\textup{rank}(X_{0})\leq r$.
$\forall Z=U_{\star}W^{T}_{2}+W_{1}V^{T}_{\star}+W_{1}W^{T}_{2}$, we then obtain
\begin{equation}\label{AppNSP1}
\begin{split}
X_{0}+Z&=U_{\star}V^{T}_{\star}+U_{\star}W^{T}_{2}+W_{1}V^{T}_{\star}+W_{1}W^{T}_{2}\\
&=(U_{\star}+W_{1})(V_{\star}+W_{2})^{T}.
\end{split}
\end{equation}

Recall that $\mathcal{A}(U_{\star}V^{T}_{\star})=\mathcal{A}(X_{0})=b$. Then for all $Z\in \mathcal{N}(\mathcal{A})\setminus \{\mathbf{0}\}$, we get
\begin{equation*}
\mathcal{A}\left((U_{\star}+W_{1})(V_{\star}+W_{2})^{T}\right)=\mathcal{A}(X_{0}+Z)=b.
\end{equation*}

Thus all feasible solutions to (3) can be represented as $X_{0}+Z$ with $Z\in \mathcal{N}(\mathcal{A})$. To prove that $X_{0}=U_{\star}V^{T}_{\star}$ is uniquely recovered by (3), we need to show that any feasible solution $X_{1}=U_{1}V^{T}_{1}$ ($X_{1}\neq X_{0}$) to (3) satisfies $\|X_{1}\|^{1/2}_{S_{1/2}}>\|X_{0}\|^{1/2}_{S_{1/2}}$, where $U_{1}\in\mathbb{R}^{m\times d}$ and $V_{1}\in\mathbb{R}^{n\times d}$. Let $Z_{1}=X_{1}-X_{0}$, then $Z_{1}\in \mathcal{N}(\mathcal{A})\setminus \{\mathbf{0}\}$. Applying Theorem 1 of the main paper and by \eqref{AppNSP1}, the following holds for some $W_{1}$ and $W_{2}$:
\begin{equation}\label{Th42}
\|X_{1}\|^{1/2}_{S_{1/2}}=\|X_{0}+Z_{1}\|^{1/2}_{S_{1/2}}=\left(\|U_{\star}+W_{1}\|_{\textup{tr}}+\|V_{\star}+W_{2}\|_{\textup{tr}}\right)\!/2,
\end{equation}
where $U_{\star}+W_{1}=U_{X_{1}}\!\Sigma^{1/2}_{X_{1}}$, $V_{\star}+W_{2}=V_{X_{1}}\!\Sigma^{1/2}_{X_{1}}$, and $U_{X_{1}}\!\Sigma_{X_{1}}\!V^{T}_{X_{1}}$ is the same SVD form of $X_{1}$ as that of $X_{0}$.

According to \eqref{Th41}, \eqref{Th42}, Theorem 1 of the main paper, and Lemma \ref{AppC1} with $p=1$, we have
\begin{equation*}
\begin{split}
\|X_{1}\|^{1/2}_{S_{1/2}}=&\|X_{0}+Z_{1}\|^{1/2}_{S_{1/2}}=\frac{1}{2}\left(\|U_{\star}+W_{1}\|_{\textup{tr}}+\|V_{\star}+W_{2}\|_{\textup{tr}}\right)\\
=&\frac{1}{2}\left(\sum^{d}_{i=1}\sigma_{i}(U_{\star}+W_{1})+\sum^{d}_{i=1}\sigma_{i}(V_{\star}+W_{2})\right)\\
\geq&\frac{1}{2}\left(\sum^{d}_{i=1}|\sigma_{i}(U_{\star})-\sigma_{i}(W_{1})|+\sum^{d}_{i=1}|\sigma_{i}(V_{\star})-\sigma_{i}(W_{2})|\right)\\
\geq&\frac{1}{2}\left[\sum^{r}_{i=1}\sigma_{i}(U_{\star})-\sum^{r}_{i=1}\sigma_{i}(W_{1})+\sum^{d}_{i=r+1}\sigma_{i}(W_{1})+\sum^{r}_{i=1}\sigma_{i}(V_{\star})-\sum^{r}_{i=1}\sigma_{i}(W_{2})+\sum^{d}_{i=r+1}\sigma_{i}(W_{2})\right]\\
>&\frac{1}{2}\left(\sum^{r}_{i=1}\sigma_{i}(U_{\star})+\sum^{r}_{i=1}\sigma_{i}(V_{\star})\right)\\
=&\frac{1}{2}\left(\|U_{\star}\|_{\textup{tr}}+\|V_{\star}\|_{\textup{tr}}\right)=\|X_{0}\|^{1/2}_{S_{1/2}},
\end{split}
\end{equation*}
which confirms that $X_{0}=U_{\star}V^{T}_{\star}$ is uniquely recovered by (3).

($\Longleftarrow$) Conversely if (14) does not hold for some $W_{1}$ and $W_{2}$, i.e.,
\begin{equation} \label{Th43}
\sum^{r}_{i=1}(\sigma_{i}(W_{1})+\sigma_{i}(W_{2}))\geq\sum^{d}_{i=r+1}(\sigma_{i}(W_{1})+\sigma_{i}(W_{2})),
\end{equation}
then we can find $W_{1}$ and $W_{2}$ such that $(W_{1})_{r}=-U_{\star}$ and $(W_{2})_{r}=-V_{\star}$, where $(W_{1})_{r}$ and $(W_{2})_{r}$ denote the matrices induced by setting all but largest $r$ singular values of $W_{1}$ and $W_{2}$ to 0, respectively. Using Theorem 1 of the main paper, then we can derive that
\begin{equation*}
\begin{split}
&\|X_{0}+Z\|^{1/2}_{S_{1/2}}\\
\leq &\frac{1}{2}\left(\|U_{\star}+W_{1}\|_{\textup{tr}}+\|V_{\star}+W_{2}\|_{\textup{tr}}\right)\\
=&\frac{1}{2}\left(\sum^{d}_{i=r+1}\sigma_{i}(W_{1})+\sum^{d}_{i=r+1}\sigma_{i}(W_{2})\right)\\
\leq &\frac{1}{2}\left(\sum^{r}_{i=1}\sigma_{i}(W_{1})+\sum^{r}_{i=1}\sigma_{i}(W_{2})\right)\\
=&\frac{1}{2}\left(\sum^{r}_{i=1}\sigma_{i}(U_{\star})+\sum^{r}_{i=1}\sigma_{i}(V_{\star})\right)\\
=&\|X_{0}\|^{1/2}_{S_{1/2}},
\end{split}
\end{equation*}
i.e., $\|X_{0}+Z\|^{1/2}_{S_{1/2}}\leq \|X_{0}\|^{1/2}_{S_{1/2}}$. This shows that $X_{0}=U_{\star}V^{T}_{\star}$ is not the unique minimizer.
\end{proof}

\section{Proof of Theorem 7}
\begin{proof}
With the squared loss, $\|\!\cdot\!\|^{2}_{2}$, (4) can be reformulated as follows:
\begin{equation}\label{Th71}
\min_{U,V}\left\{\frac{\|U\|_{\textup{tr}}\!+\!\|V\|_{\textup{tr}}}{2}+ \frac{1}{2\mu}\|\mathcal{A}(UV^{T})-b\|^{2}_{2}\right\}.
\end{equation}

Given $\widehat{V}$, the first-order optimality condition for the problem \eqref{Th71} with respect to $U$ is given by
\begin{equation}\label{Th72}
\frac{1}{\mu}\mathcal{A}^{*}(b-\mathcal{A}(\widehat{U}\widehat{V}^{T}))\widehat{V}\in \frac{1}{2}\partial\|\widehat{U}\|_{\textup{tr}}.
\end{equation}

According to Lemma \ref{Conv L1} and \eqref{Th72}, we can know that
\begin{displaymath}
\frac{1}{\mu}\|\mathcal{A}^{*}(b-\mathcal{A}(\widehat{U}\widehat{V}^{T}))\widehat{V}\|_{2}\leq \frac{1}{2},
\end{displaymath}
where $\|A\|_{2}$ is the spectral norm of a matrix $A$ and the dual norm of the trace norm. Recall that  $\mathcal{A}^{*}(b-\mathcal{A}(\widehat{U}\widehat{V}^{T}))\widehat{V}\in\mathbb{R}^{m\times d}$ and $\textrm{rank}(\mathcal{A}^{*}(b-\mathcal{A}(\widehat{U}\widehat{V}^{T}))\widehat{V})\leq d$, then we obtain
\begin{equation}\label{Th73}
\|\mathcal{A}^{*}(b-\mathcal{A}(\widehat{U}\widehat{V}^{T}))\widehat{V}\|_{F}\leq \sqrt{d}\|\mathcal{A}^{*}(b-\mathcal{A}(\widehat{U}\widehat{V}^{T}))\widehat{V}\|_{2} \leq \frac{\mu\sqrt{d}}{2}.
\end{equation}

Let $\hat{X}=\widehat{U}\widehat{V}^{T}$. By the RSC assumption and \eqref{Th73}, we have
\begin{displaymath}
\begin{split}
\frac{\|X_{0}-\hat{X}\|_{F}}{\sqrt{m n}}&\leq\frac{\|\mathcal{A}(X_{0}-\hat{X})\|_{2}}{\kappa(\mathcal{A})\sqrt{lmn}}\\
&\leq \frac{\|\mathcal{A}(X_{0})-b\|_{2}}{\kappa(\mathcal{A})\sqrt{lmn}}+\frac{\|b-\mathcal{A}(\hat{X})\|_{2}}{\kappa(\mathcal{A})\sqrt{lmn}}\\
&=\frac{\|e\|_{2}}{\kappa(\mathcal{A})\sqrt{lmn}}+\frac{\|\mathcal{A}^{*}(b-\mathcal{A}(\hat{X}))\hat{V}\|_{F}}{C_{1}\kappa(\mathcal{A})\sqrt{lmn}}\\
&\leq\frac{\epsilon}{\kappa(\mathcal{A})\sqrt{lmn}}+\frac{\mu\sqrt{d}}{2C_{1}\kappa(\mathcal{A})\sqrt{lmn}}.
\end{split}
\end{displaymath}
\end{proof}

\subsection*{Lower bound on $C_{1}$}
Next we discuss the lower boundedness of $C_{1}$, that is, it is lower bounded by a positive constant. By the characterization of the subdifferentials of the trace norm, we can know that
\begin{equation}\label{Th74}
\partial\|X\|_{\textup{tr}}=\left\{Y\,|\,\langle Y,\,X\rangle=\|X\|_{\textup{tr}},\,\|Y\|_{2}\leq 1\right\}.
\end{equation}
Let $\Xi=\mathcal{A}^{*}(b-\mathcal{A}(\widehat{U}\widehat{V}^{T}))\widehat{V}$, and by \eqref{Th72}, we have that $\Xi\in \frac{\mu}{2}\partial\|\widehat{U}\|_{\textup{tr}}$. By \eqref{Th74}, we obtain
\begin{displaymath}
\left\langle \frac{2}{\mu}\Xi, \,\widehat{U}\right\rangle=\|\widehat{U}\|_{\textup{tr}}.
\end{displaymath}
Note that $\|X\|_{\textup{tr}}\geq \|X\|_{F}$ and $\langle X,Y\rangle\leq\|X\|_{F}\|Y\|_{F}$ for any same-sized matrices $X$ and $Y$. Then
\begin{displaymath}
\frac{2}{\mu}\|\Xi\|_{F}\|\widehat{U}\|_{F}\geq\left\langle \frac{2}{\mu}\Xi, \,\widehat{U}\right\rangle=\|\widehat{U}\|_{\textup{tr}}\geq \|\widehat{U}\|_{F}.
\end{displaymath}
Recall that $\|\widehat{U}\|_{F}>0$ and $\mu\neq0$, thus we obtain
\begin{displaymath}
\|\mathcal{A}^{*}(b-\mathcal{A}(\widehat{U}\widehat{V}^{T}))\widehat{V}\|_{F}=\|\Xi\|_{F}\geq \frac{\mu}{2}.
\end{displaymath}

$\widehat{U}$ is the optimal solution of the problem \eqref{Th71} with given $\widehat{V}$, then
\begin{equation*}
\frac{1}{2\mu}\|\mathcal{A}(\widehat{U}\widehat{V}^{T})-b\|^{2}_{2}<\frac{1}{2\mu}\|\mathcal{A}(\widehat{U}\widehat{V}^{T})-b\|^{2}_{2}+\frac{1}{2}\|\widehat{U}\|_{\textup{tr}}\leq\frac{1}{2\mu}\|b\|^{2}_{2}=\nu,
\end{equation*}
where $\nu>0$ is a constant. Hence,
\begin{displaymath}
C_{1}=\frac{\|\mathcal{A}^{*}(b-\mathcal{A}(\hat{X}))\hat{V}\|_{F}}{\|b-\mathcal{A}(\hat{X})\|_{2}}>\frac{\sqrt{\mu}}{2\sqrt{2\nu}}.
\end{displaymath}

\section{Proof of Theorem 8}
According to Theorem 4 of the main paper, we can know that $(\widehat{U},\widehat{V})$ is a critical point of the problem (15). To prove Theorem 8, we first give the following lemma~\cite{wang:mfcf}.

\begin{lemma}\label{lem10}
Let $\mathcal{L}(X)=\frac{1}{\sqrt{mn}}\|X-\widehat{X}\|_{F}$ and $\hat{\mathcal{L}}(X)=\frac{1}{\sqrt{|\Omega|}}\|\mathcal{P}_{\Omega}(X-\widehat{X})\|_{F}$ be the actual and empirical loss function respectively, where $X,\;\widehat{X}\in \mathbb{R}^{m\times n}\;(m\geq n)$. Furthermore, assume entry-wise constraint $\max_{i,j}|X_{ij}|\leq \delta$. Then for all rank-$r$ matrices $X$, with probability greater than $1-2\exp(-m)$, there exists a fixed constant $C$ such that
\begin{displaymath}
\sup_{X\in S_{r}}|\hat{\mathcal{L}}(X)-\mathcal {L}(X)|\leq C\delta \left(\frac{mr\log(m)}{|\Omega|}\right)^{1/4},
\end{displaymath}
where $S_{r}=\{X\in \mathbb{R}^{m\times n}: rank(X)\leq r, \|X\|_{F}\leq \sqrt{mn}\delta\}$.
\end{lemma}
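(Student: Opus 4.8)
The plan is to prove this uniform deviation bound by a standard empirical-process argument: reduce the difference of the (square-root) losses to a difference of the squared losses, symmetrize, apply a contraction step to linearize the quadratic map $Y_{ij}\mapsto Y_{ij}^{2}$, and finally control the resulting linear process through the spectral norm of a random signed sampling matrix together with the low-rank structure of $Y=X-\widehat{X}$. Throughout I would model $\Omega$ as a uniform i.i.d.\ sample of $|\Omega|$ index pairs and treat $\widehat{X}$ as a fixed low-rank reference matrix.

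First I would pass from $|\hat{\mathcal{L}}-\mathcal{L}|$ to $|\hat{\mathcal{L}}^{2}-\mathcal{L}^{2}|$ using the elementary inequality $|\sqrt{u}-\sqrt{v}|\leq\sqrt{|u-v|}$, valid for all $u,v\geq 0$, so that
\begin{displaymath}
\sup_{X\in S_{r}}|\hat{\mathcal{L}}(X)-\mathcal{L}(X)|\leq\Big(\sup_{X\in S_{r}}|\hat{\mathcal{L}}(X)^{2}-\mathcal{L}(X)^{2}|\Big)^{1/2}.
\end{displaymath}
Writing $Y=X-\widehat{X}$, each sampled square $Y_{i_{k}j_{k}}^{2}$ is an unbiased estimate of $\frac{1}{mn}\sum_{i,j}Y_{ij}^{2}$, so that $\hat{\mathcal{L}}(X)^{2}-\mathcal{L}(X)^{2}$ is a centred empirical process indexed by $Y$. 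Since $X\in S_{r}$ has rank at most $r$ and $\widehat{X}$ is low rank, $Y$ has rank at most a constant multiple of $r$, satisfies $\|Y\|_{F}\leq 2\sqrt{mn}\,\delta$, and obeys the entrywise bound $|Y_{ij}|\leq 2\delta$; these three facts are exactly what the remaining steps exploit.

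Next I would bound the expected supremum by symmetrization, replacing the centred process by $2\,\mathbb{E}\sup_{Y}\frac{1}{|\Omega|}|\sum_{k}\epsilon_{k}Y_{i_{k}j_{k}}^{2}|$ with Rademacher signs $\epsilon_{k}$, and then apply the Ledoux--Talagrand contraction inequality: because $t\mapsto t^{2}$ is $4\delta$-Lipschitz on $[-2\delta,2\delta]$, this Rademacher average is at most a constant multiple of $\delta$ times the linear average $\mathbb{E}\sup_{Y}\frac{1}{|\Omega|}|\langle G,Y\rangle|$, where $G=\sum_{k}\epsilon_{k}e_{i_{k}}e_{j_{k}}^{T}$ is a random $\pm1$ matrix supported on the sampled entries. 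Using $|\langle G,Y\rangle|\leq\|G\|_{2}\|Y\|_{\textup{tr}}\leq\|G\|_{2}\sqrt{2r}\,\|Y\|_{F}$ and the Frobenius bound above reduces everything to estimating $\mathbb{E}\|G\|_{2}$. Matching the target rate requires the spectral-norm bound $\mathbb{E}\|G\|_{2}\leq c\sqrt{|\Omega|\log(m)/n}$ (using $m\geq n$), which follows from standard random-matrix estimates controlling the largest row/column energy plus a logarithmic union-bound factor. Combining these pieces gives $\mathbb{E}\sup_{X\in S_{r}}|\hat{\mathcal{L}}^{2}-\mathcal{L}^{2}|\leq c'\delta^{2}\sqrt{mr\log(m)/|\Omega|}$, and the square-root reduction of the first step then produces the claimed $\delta(mr\log(m)/|\Omega|)^{1/4}$ rate in expectation.

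Finally I would upgrade the in-expectation bound to the high-probability statement via a bounded-differences (McDiarmid) inequality applied to the function of the $|\Omega|$ i.i.d.\ sampled indices: resampling a single index changes each $\hat{\mathcal{L}}(X)^{2}$, and hence the supremum over $X$, by at most $8\delta^{2}/|\Omega|$, so the fluctuation of $\sup_{Y}|\hat{\mathcal{L}}^{2}-\mathcal{L}^{2}|$ around its mean is of order $\delta^{2}\sqrt{m/|\Omega|}$ with probability at least $1-2\exp(-m)$. Since this fluctuation is dominated by the mean term $\delta^{2}\sqrt{mr\log(m)/|\Omega|}$, it is absorbed into the constant, and the deterministic square-root reduction transfers the bound to $\sup|\hat{\mathcal{L}}-\mathcal{L}|$ at the same probability level, yielding the stated $1-2\exp(-m)$. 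I expect the main obstacle to be the spectral-norm estimate $\mathbb{E}\|G\|_{2}\leq c\sqrt{|\Omega|\log(m)/n}$: obtaining the correct $\sqrt{|\Omega|/n}$ scaling (which relies on $m\geq n$, so that columns are the sparser direction) together with the single logarithmic factor, and ensuring its tail is compatible with the $\exp(-m)$ probability level, is the delicate part, whereas the symmetrization and contraction steps are routine black-box applications.
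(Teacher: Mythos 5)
Your route is genuinely different from the paper's. The paper does not prove this lemma from scratch: it invokes Theorem 2 of \cite{wang:mfcf} --- whose underlying argument is an $\epsilon$-net covering of the set of bounded rank-$r$ matrices combined with pointwise concentration and a union bound over the net (the $mr\log(9\delta m/\epsilon)$ factor is the log-cardinality of the net) --- and then merely instantiates constants, taking $M\leq(2\delta)^{2}$ and $\epsilon=9\delta$ to extract the explicit constant $C=2+18/(|\Omega|mr\log(m))^{1/4}$. Your symmetrization/contraction/spectral-norm scheme with a McDiarmid upgrade is a self-contained alternative in the Srebro--Shraibman style; it reproduces the $\delta\left(mr\log(m)/|\Omega|\right)^{1/4}$ rate and the $1-2\exp(-m)$ level (your McDiarmid fluctuation $\delta^{2}\sqrt{m/|\Omega|}$, dominated by the mean term, plays the role of the additive $2\epsilon/\sqrt{|\Omega|}$ term in the paper's computation), at the cost of unspecified absolute constants where the paper's citation-based route yields an explicit one essentially for free.

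There is, however, one concrete gap in your argument as written: you assert that ``$\widehat{X}$ is low rank,'' so that $Y=X-\widehat{X}$ has rank $O(r)$, and you then rely on $\|Y\|_{\textup{tr}}\leq\sqrt{2r}\,\|Y\|_{F}$. The lemma grants no rank control on $\widehat{X}$, and in the paper's application (Theorem 8) one takes $\widehat{X}=D=X_{0}+E$ with Gaussian noise $E$, so $\widehat{X}$ is full rank almost surely and the trace-norm step fails. The repair is straightforward: after contraction, split $\langle G,Y\rangle=\langle G,X\rangle-\langle G,\widehat{X}\rangle$; the supremum term uses $\mathrm{rank}(X)\leq r$ exactly as you argue, while $\langle G,\widehat{X}\rangle$ is a single fixed Rademacher sum with $\mathbb{E}\,|\langle G,\widehat{X}\rangle|\leq\delta\sqrt{|\Omega|}$ by Khintchine, contributing a lower-order $O(\delta^{2}/\sqrt{|\Omega|})$ term absorbed into $C$. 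Two smaller caveats: the bound $\mathbb{E}\|G\|_{2}\leq c\sqrt{|\Omega|\log(m)/n}$ holds only when $|\Omega|\gtrsim n$ (for sparser sampling $\|G\|_{2}\geq 1$ can exceed it), but that regime is vacuous since $|\hat{\mathcal{L}}(X)-\mathcal{L}(X)|\leq 4\delta$ makes the claim trivial whenever $|\Omega|\leq mr\log(m)$ and $C\geq 4$; and no tail bound on $\|G\|_{2}$ is needed at all, since only its expectation enters and McDiarmid, applied to the sampled indices alone, already supplies the $\exp(-m)$ probability level.
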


Suppose that $M=\max_{i,j}(X_{ij}-\widehat{X}_{ij})^{2}\leq(2\delta)^{2}$ and $\epsilon=9\delta$ as in~\cite{wang:mfcf}. According to Theorem 2 in~\cite{wang:mfcf}, thus we have
\begin{equation*}
\begin{split}
&\sup_{X\in S_{r}}|\hat{\mathcal{L}}(X)-\mathcal{L}(X)|\\
\leq&\frac{2\epsilon}{\sqrt{|\Omega|}}+\left(\frac{M^{2}}{2}\frac{2mr\log(9\delta m/\epsilon)}{|\Omega|}\right)^{1/4}\\
\leq& \frac{18\beta_{1}}{\sqrt{|\Omega|}}+2\delta\left(\frac{mr\log(m)}{|\Omega|}\right)^{1/4}\\
=&\left(2+\frac{18}{(|\Omega|mr\log(m))^{1/4}}\right)\delta\left(\frac{mr\log(m)}{|\Omega|}\right)^{1/4}.
\end{split}
\end{equation*}
Therefore, the constant $C$ can be set to $2+\frac{18}{(|\Omega|mr\log(m))^{1/4}}$.\\

\subsection*{Proof of Theorem 8:}
\begin{proof}
\begin{equation*}
\begin{split}
&\frac{\|D-\widehat{U}\widehat{V}^{T}\|_{F}}{\sqrt{mn}}\\
\leq&\left|\frac{\|D-\widehat{U}\widehat{V}^{T}\|_{F}}{\sqrt{mn}}-\frac{\|\mathcal{P}_{\Omega}(D-\widehat{U}\widehat{V}^{T})\widehat{V}\|_{F}}{C_{3}\sqrt{|\Omega|}}\right|+\frac{\|\mathcal{P}_{\Omega}(D-\widehat{U}\widehat{V}^{T})\widehat{V}\|_{F}}{C_{3}\sqrt{|\Omega|}}\\
=&\left|\frac{\|D-\widehat{U}\widehat{V}^{T}\|_{F}}{\sqrt{mn}}-\frac{\|\mathcal{P}_{\Omega}(D-\widehat{U}\widehat{V}^{T})\|_{F}}{\sqrt{|\Omega|}}\right|+\frac{\|\mathcal{P}_{\Omega}(D-\widehat{U}\widehat{V}^{T})\widehat{V}\|_{F}}{C_{3}\sqrt{|\Omega|}}.
\end{split}
\end{equation*}
Let $\tau(\Omega):=\left|\frac{1}{\sqrt{mn}}\|D-\widehat{U}\widehat{V}^{T}\|_{F}-\frac{1}{\sqrt{|\Omega|}}\|\mathcal{P}_{\Omega}(D-\widehat{U}\widehat{V}^{T})\|_{F}\right|$, then we need to bound $\tau(\Omega)$. It is clear that rank$(\widehat{U}\widehat{V}^{T})\leq d$, and thus $\widehat{U}\widehat{V}^{T}\in S_{d}$. According to Lemma \ref{lem10}, then with probability greater than $1-2\exp(-m)$, then there exists a fixed constant $C_{2}=2+\frac{18}{(|\Omega|mr\log(m))^{1/4}}$ such that
\begin{equation}\label{Th81}
\begin{split}
\sup_{\hat{U}\hat{V}^{T}\in S_{d}}\tau(\Omega)=&\left|\frac{\|\widehat{U}\widehat{V}^{T}-D\|_{F}}{\sqrt{mn}}-\frac{\|\mathcal{P}_{\Omega}(\widehat{U}\widehat{V}^{T})-\mathcal{P}_{\Omega}(D)\|_{F}}{\sqrt{|\Omega|}}\right|\\
\leq & C_{2} \delta\left(\frac{md\log(m)}{|\Omega|}\right)^{\frac{1}{4}}.
\end{split}
\end{equation}

We also need to bound $\|\mathcal{P}_{\Omega}(\widehat{U}\widehat{V}^{T}-D)\widehat{V}\|_{F}$. Given $\widehat{V}$, the optimization problem with respect to $U$ is formulated as follows:
\begin{equation}\label{Th82}
\min_{U} \frac{\|U\|_{\textup{tr}}}{2}+\frac{1}{2\mu}\|\mathcal{P}_{\Omega}(U\widehat{V}^{T})-\mathcal{P}_{\Omega}(D)\|^{2}_{F}.
\end{equation}

Since $(\widehat{U}, \widehat{V})$ is a KKT point of the problem (15), the first-order optimality condition for the problem \eqref{Th82} is given by
\begin{equation}\label{Th83}
\mathcal{P}_{\Omega}(D-\widehat{U}\widehat{V}^{T})\widehat{V}\in \frac{\mu}{2}\partial\|\widehat{U}\|_{\textup{tr}}.
\end{equation}

Using Lemma \ref{Conv L1}, we obtain
\begin{displaymath}
\|\mathcal{P}_{\Omega}(\widehat{U}\widehat{V}^{T}-D)\widehat{V}\|_{2}\leq \frac{\mu}{2},
\end{displaymath}
where $\|X\|_{2}$ is the spectral norm of a matrix $X$. Recall that $\textrm{rank}(\mathcal{P}_{\Omega}(\widehat{U}\widehat{V}^{T}-D)\widehat{V})\leq d$, we have
\begin{equation}\label{Th84}
\|\mathcal{P}_{\Omega}(\widehat{U}\widehat{V}^{T}-D)\widehat{V}\|_{F}\leq \sqrt{d}\|\mathcal{P}_{\Omega}(\widehat{U}\widehat{V}^{T}-D)\widehat{V}\|_{2} \leq \frac{\sqrt{d}\mu}{2}.
\end{equation}
By \eqref{Th81} and \eqref{Th84}, we have
\begin{equation*}
\begin{split}
\frac{\|X_{0}-\widehat{U}\widehat{V}^{T}\|_{F}}{\sqrt{mn}}\leq & \frac{\|E\|_{F}}{\sqrt{mn}}+\frac{\|D-\widehat{U}\widehat{V}^{T}\|_{F}}{\sqrt{mn}}\\
\leq & \frac{\|E\|_{F}}{\sqrt{mn}}+\tau(\Omega)+\frac{\|\mathcal{P}_{\Omega}(D-\widehat{U}\widehat{V}^{T})\widehat{V}\|_{F}}{C_{3}\sqrt{|\Omega|}}\\
\leq & \frac{\|E\|_{F}}{\sqrt{mn}}+C_{2}\delta\left(\frac{md\log(m)}{|\Omega|}\right)^{\frac{1}{4}}+\frac{\sqrt{d}\mu}{2C_{3}\sqrt{|\Omega|}}.
\end{split}
\end{equation*}
This completes the proof.
\end{proof}

\subsection*{Lower bound on $C_{3}$}
Finally, we also discuss the lower boundedness of $C_{3}$, that is, it is lower bounded by a positive constant. Let $Q=\mathcal{P}_{\Omega}(D-\widehat{U}\widehat{V}^{T})\widehat{V}$, and by \eqref{Th83}, we have that $Q\in \frac{\mu}{2}\partial\|\widehat{U}\|_{\textup{tr}}$. By \eqref{Th74}, we obtain
\begin{displaymath}
\left\langle \frac{2}{\mu}Q, \,\widehat{U}\right\rangle=\|\widehat{U}\|_{\textup{tr}}.
\end{displaymath}
Note that $\|A\|_{\textup{tr}}\geq \|A\|_{F}$ and $\langle A,B\rangle\leq\|A\|_{F}\|B\|_{F}$ for any matrices $A$ and $B$ of the same size.
\begin{displaymath}
\frac{2}{\mu}\|Q\|_{F}\|\widehat{U}\|_{F}\geq\left\langle \frac{2}{\mu}Q, \,\widehat{U}\right\rangle=\|\widehat{U}\|_{\textup{tr}}\geq \|\widehat{U}\|_{F}.
\end{displaymath}
Recall that $\|\widehat{U}\|_{F}>0$ and $\mu\neq0$, thus we obtain
\begin{displaymath}
\|\mathcal{P}_{\Omega}(D-\widehat{U}\widehat{V}^{T})\widehat{V}\|_{F}=\|Q\|_{F}\geq \frac{\mu}{2}.
\end{displaymath}

Since $\widehat{U}$ is the optimal solution of the problem \eqref{Th82} with given $\widehat{V}$, then
\begin{equation*}
\frac{1}{2\mu}\|\mathcal{P}_{\Omega}(D-\widehat{U}\widehat{V}^{T})\|^{2}_{F}<\frac{1}{2\mu}\|\mathcal{P}_{\Omega}(D-\widehat{U}\widehat{V}^{T})\|^{2}_{F}+\frac{1}{2}\|\widehat{U}\|_{\textup{tr}}\leq\frac{1}{2\mu}\|\mathcal{P}_{\Omega}(D)\|^{2}_{F}=\gamma,
\end{equation*}
where $\gamma>0$ is a constant. Hence,
\begin{displaymath}
C_{3}=\frac{\|\mathcal{P}_{\Omega}(D-\widehat{U}\widehat{V}^{T})\widehat{V}\|_{F}}{\|\mathcal{P}_{\Omega}(D-\widehat{U}\widehat{V}^{T})\|_{F}}>\frac{\sqrt{\mu}}{2\sqrt{2\gamma}}.
\end{displaymath}

In fact, the value of $C_{3}$ is much greater than its lower bound, $\frac{\sqrt{\mu}}{2\sqrt{2\gamma}}$, as shown in Figure \ref{fig_sim1}, where the ordinate is the average results over 100 independent runs, and the abscissa denotes the sampling ratio, which is chosen from $\{0.002, 0.005,0.01,0.05,0.1,\ldots,0.95,0.99,0.995,0.999\}$. Moreover, the regularization parameter $\mu$ is set to 5 and 100 for noisy matrices ($nf\!=\!0.1$) and noiseless matrices, respectively.

\begin{figure}[t]
\centering
\subfigure[Matrices of size $100\!\times\!100$]{\includegraphics[width=0.46\linewidth]{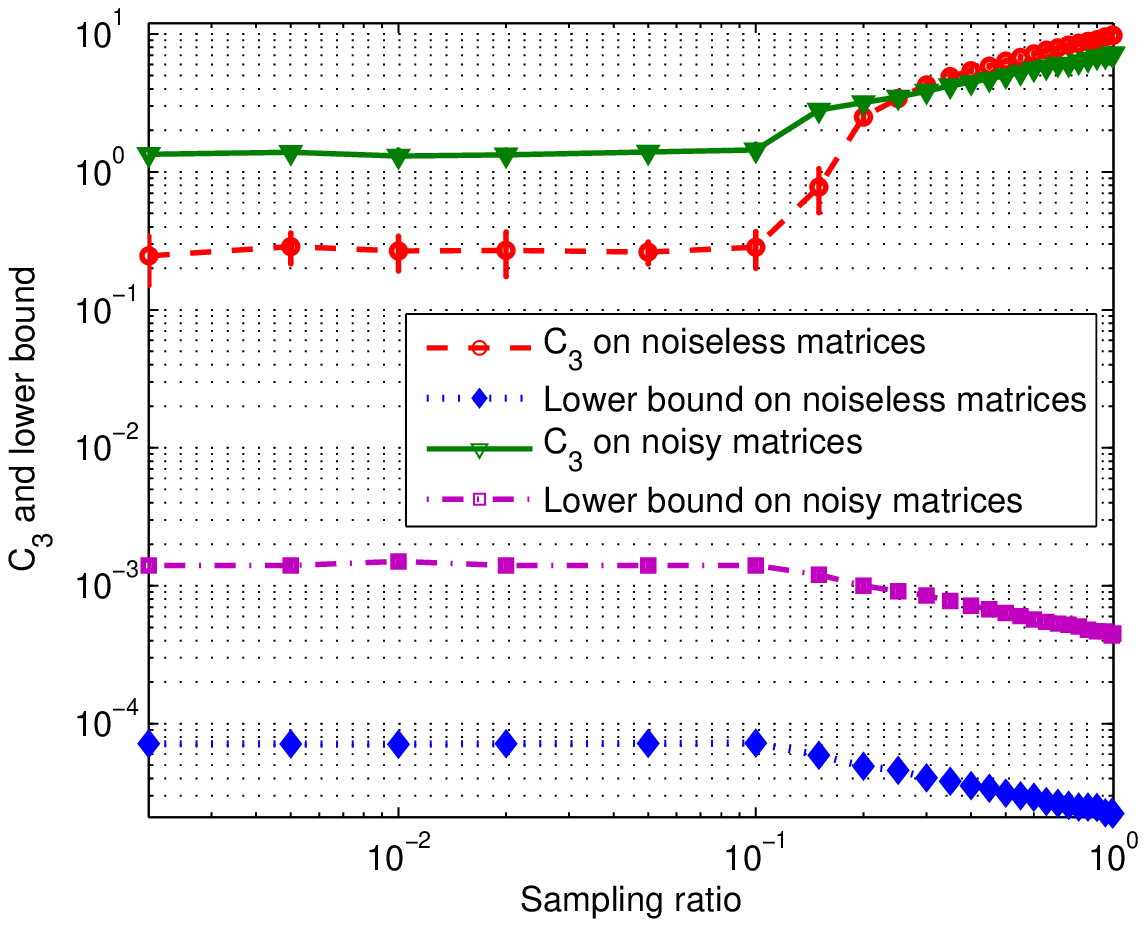}}\;\;
\subfigure[Matrices of size $200\!\times\!200$]{\includegraphics[width=0.46\linewidth]{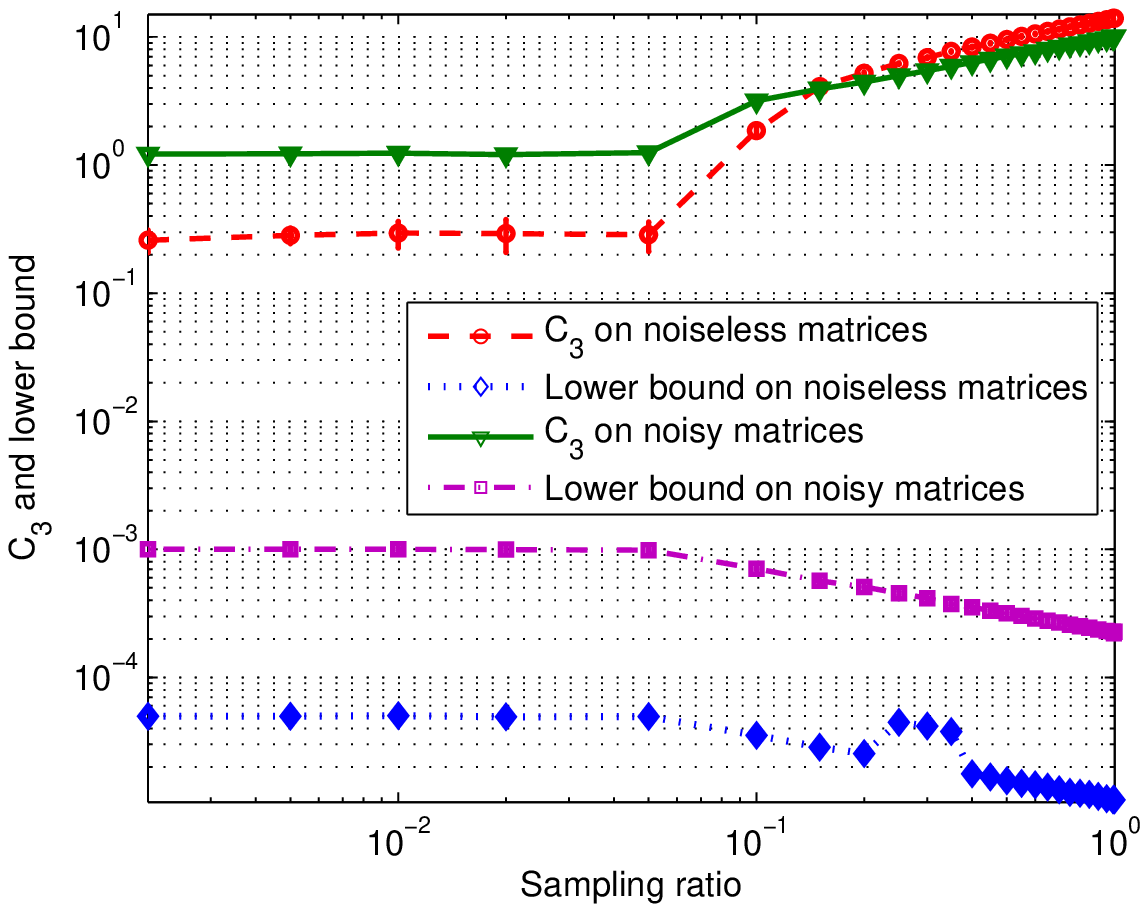}}
\caption{Average results (mean and std.) of $C_{3}$ vs.\ sampling ratio. For a fixed sampling ratio, we can observe that although $C_{3}$ is not a constant, the value of $C_{3}$ is stable with respect to the projection operator $\mathcal{P}_{\Omega}$ (Best viewed zoomed in).}
\label{fig_sim1}
\end{figure}

\section{Complexity Analysis}
For MC and RPCA problems, the running time of our PALM and LADM algorithms is mainly consumed in performing some matrix multiplications. The time complexity of some multiplications operators is $O(mnd)$. In addition, the time complexity of performing SVD on matrices of the same sizes as $U_{k}$ and $V_{k}$ is $O(md^{2}+nd^{2})$. In short, the total time complexity of our PALM and LADM algorithms is $O(nmd)$ $(d\ll m,n)$. Moreover, it is known that the parallel matrix multiplication on multicore architectures can be efficiently implemented. Thus, in practice our PALM and LADM algorithms are fast and scales well to handle large-scale problems.

\section{More Experimental Results}
For the MC problem, e.g., synthetic matrix completion and collaborative filtering, we propose an efficient proximal alternating linearized minimization (PALM) algorithm to solve (15), and then extend it to solve the Tri-tr quasi-norm regularized matrix completion problem.

In the following, we present our bi-trace quasi-norm minimization models for the RPCA problems (e.g., the text separation task):
\begin{equation}\label{exp1}
\min_{U,\,V}\, \frac{1}{2}(\|U\|_{\textup{tr}}+\|V\|_{\textup{tr}})+\frac{1}{\mu}\|\mathcal{P}_{\Omega}(D-UV^{T})\|_{1},
\end{equation}
and
\begin{equation}\label{exp2}
\min_{U,\,V}\, \frac{1}{2}(\|U\|_{\textup{tr}}+\|V\|_{\textup{tr}})+\frac{1}{\mu}\|\mathcal{P}_{\Omega}(D-UV^{T})\|^{1/2}_{1/2},
\end{equation}
where $\mathcal{P}_{\Omega}$ denotes the linear projection operator, i.e., $\mathcal{P}_{\Omega}(D)_{ij}=D_{ij}$ if $(i,j)\in\Omega$, and $\mathcal{P}_{\Omega}(D)_{ij}=0$ otherwise. Similar to \eqref{exp1}, the Tri-tr quasi-norm penalty can also be used to the RPCA problem.

To efficiently solve the RPCA problems \eqref{exp1} and \eqref{exp2}, we also need to introduce an auxiliary variable $E$ (as the same role as $e$), and can assume, without loss of generality, that the unknown entries of $D$ are simply set as zeros, i.e., $D_{\Omega^{C}}={0}$, and $E_{\Omega^{C}}$ may be any values such that $\mathcal{P}_{\Omega^{C}}(D)=\mathcal{P}_{\Omega^{C}}(UV^{T})+\mathcal{P}_{\Omega^{C}}(E)$. Therefore, the RPCA problems \eqref{exp1} and \eqref{exp2} are reformulated as follows:
\begin{equation}\label{exp3}
\min_{U,\,V,\,E}\, \frac{1}{2}(\|U\|_{\textup{tr}}+\|V\|_{\textup{tr}})+\frac{1}{\mu}\|\mathcal{P}_{\Omega}(E)\|_{1},\;\textup{s.t.},\,UV^{T}+E=D,
\end{equation}
\begin{equation}\label{exp4}
\min_{U,\,V,\,E}\, \frac{1}{2}(\|U\|_{\textup{tr}}+\|V\|_{\textup{tr}})+\frac{1}{\mu}\|\mathcal{P}_{\Omega}(E)\|^{1/2}_{1/2},\;\textup{s.t.},\,UV^{T}+E=D.
\end{equation}

In fact, we can apply directly Algorithm 1 with the soft-thresholding operator~\cite{daubechies:it} to solve \eqref{exp3}. In contrast, for solving \eqref{exp4} we can update $E$ via solving the following problem with Lagrange multiplier $Y_{k}$,
\begin{equation}\label{exp5}
\min_{E}\,\frac{1}{\mu}\|\mathcal{P}_{\Omega}(E)\|^{1/2}_{1/2}+\frac{\beta_{k}}{2}\|E-M_{k+1}\|^{2}_{F},
\end{equation}
where $M_{k+1}=D-U_{k+1}V^{T}_{k+1}-Y_{k}/\beta_{k}$. In general, the $\ell_{p}$ ($0\!<\!p\!<\!1$) quasi-norm leads to a non-convex, non-smooth, and non-Lipschitz optimization problem~\cite{bian:ipa}. Fortunately, we introduce the following half-thresholding operator in~\cite{krishnan:hlp, zeng:l12} to solve \eqref{exp5}.

\begin{lemma}\label{lem11}
Let $y=(y_{1},y_{2},\ldots,y_{n})^{T}$, and $x^{*}=(x^{*}_{1},x^{*}_{2},\ldots,x^{*}_{n})^{T}$ be an $\ell_{1/2}$ quasi-norm solution of the following minimization
\begin{equation*}
\min_{x}\,\|y-x\|^{2}_{2}+\lambda\|x\|^{1/2}_{1/2},
\end{equation*}
then the solution $x^{*}$ can be given by $x^{*}=H_{\lambda}(y)$, where the half-thresholding operator $H_{\lambda}(\cdot)$ is defined as
\begin{equation*}
H_{\lambda}(y_{i})=\left\{ \begin{array}{ll}
\frac{2}{3}y_{i}[1+\cos(\frac{2\pi}{3}-\frac{2\phi_{\lambda}(y_{i})}{3})], &|y_{i}|> \frac{\sqrt[3]{54}}{4}\lambda^{\frac{2}{3}},\\
0, &\textup{otherwise},
\end{array}\right.
\end{equation*}
where $\phi_{\lambda}(y_{i})=\arccos(\frac{\lambda}{8}({|y_{i}|}/{3})^{-{3}/{2}})$.
\end{lemma}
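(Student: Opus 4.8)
The plan is to exploit separability, reduce to a scalar problem, solve the resulting cubic by the trigonometric method, and then pin down the exact threshold by comparing objective values.

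First I would observe that both terms split coordinate-wise, since $\|y-x\|_2^2=\sum_i(y_i-x_i)^2$ and $\|x\|_{1/2}^{1/2}=\sum_i|x_i|^{1/2}$. Hence the minimization decouples into $n$ independent scalar problems $\min_t(y_i-t)^2+\lambda|t|^{1/2}$, and it suffices to solve a single one. By the obvious sign symmetry the minimizer shares the sign of $y_i$ (and equals $0$ when $y_i=0$), so I would set $y:=|y_i|>0$ and analyze $\min_{t\ge 0}(y-t)^2+\lambda\sqrt{t}$, restoring the sign at the end.

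Next, for a positive stationary point I would write the first-order condition $2(t-y)+\tfrac{\lambda}{2\sqrt{t}}=0$ and substitute $u=\sqrt{t}$ to obtain the depressed cubic $u^3-yu+\tfrac{\lambda}{4}=0$. Since the coefficient of $u$ is negative, the standard trigonometric substitution $u=2\sqrt{y/3}\,\cos\theta$ applies and, via $\cos 3\theta=4\cos^3\theta-3\cos\theta$, collapses the cubic to $\cos 3\theta=-\tfrac{3\sqrt3\,\lambda}{8y^{3/2}}=-\cos\phi_\lambda(y)$, where I recognize $\phi_\lambda(y)=\arccos\!\big(\tfrac{\lambda}{8}(y/3)^{-3/2}\big)$. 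Selecting the root $3\theta=\pi-\phi_\lambda(y)$ gives $\theta=\tfrac{\pi}{3}-\tfrac{\phi_\lambda(y)}{3}$, and back-substituting $t=u^2=\tfrac{4y}{3}\cos^2\theta=\tfrac{2y}{3}\big(1+\cos 2\theta\big)$ reproduces exactly the stated nonzero branch of $H_\lambda$. The $\arccos$ is well-defined precisely when $y\ge\tfrac34\lambda^{2/3}$, which is merely the existence threshold for a positive critical point.

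The hard part is the last step: establishing that the true threshold is the larger value $\tfrac{\sqrt[3]{54}}{4}\lambda^{2/3}$ rather than the cubic's existence threshold. Between these two values a positive local minimizer exists but is beaten by $t=0$, so I would compare objectives directly. Imposing both the stationarity relation $\lambda=4\sqrt{t}\,(y-t)$ and the break-even equation $(y-t)^2+\lambda\sqrt{t}=y^2$, then eliminating $\lambda$, yields $t(2y-3t)=0$, hence $t=\tfrac{2y}{3}$; feeding this back gives $\lambda=\tfrac{4\sqrt6}{9}y^{3/2}$, i.e.\ $y=\tfrac{\sqrt[3]{54}}{4}\lambda^{2/3}$. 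For $y$ above this value the nonzero critical point is the global minimizer, while for $y$ below it the minimizer is $0$, which is exactly the case split defining $H_\lambda$. Reassembling the scalar minimizers with their signs restored then gives $x^*=H_\lambda(y)$ coordinate-wise, completing the proof.
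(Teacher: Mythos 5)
Your derivation is correct, and I have checked the arithmetic at each stage: the coordinate-wise decoupling, the reduction to the depressed cubic $u^{3}-yu+\lambda/4=0$, the identity $\cos 3\theta=-\cos\phi_{\lambda}(y)$ with the root choice $3\theta=\pi-\phi_{\lambda}(y)$ (which correctly picks the larger positive root, i.e.\ the local minimizer rather than the local maximizer), and the break-even computation giving $t=2y/3$, $\lambda=\tfrac{4\sqrt{6}}{9}y^{3/2}$, hence $y=\tfrac{\sqrt[3]{54}}{4}\lambda^{2/3}$ since $\bigl(\tfrac{27}{32}\bigr)^{1/3}=\tfrac{3}{2^{5/3}}=\tfrac{\sqrt[3]{54}}{4}$. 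Note, however, that the paper does not prove this lemma at all: it is imported verbatim from the cited references on $\ell_{1/2}$ regularization and half-quadratic thresholding, so there is no in-paper proof to compare against; your argument is essentially the standard derivation from those sources, reconstructed from scratch. The only point where I would ask for one more line is the final ``for $y$ above this value the nonzero critical point wins, below it $0$ wins'': your elimination shows the tie can occur at exactly one value of $y$ for fixed $\lambda$, but to conclude the sign of $g(t^{*}(y))-g(0)$ on each side you should add that this difference is continuous in $y$ and check its sign at one point on each side (e.g.\ it is clearly negative as $y\to\infty$, and positive just above the existence threshold $\tfrac{3}{4}\lambda^{2/3}$ where the critical point is born degenerate). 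This is a one-sentence patch, not a gap in the approach.
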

By Lemma \ref{lem11}, the closed-form solution of \eqref{exp5} is given by
\begin{equation*}
(E_{k+1})_{i,j}=\left\{ \begin{array}{ll}
H_{2/{(\mu\beta_{k})}}((M_{k+1})_{i,j}),& (i,j)\in\Omega,\\
(M_{k+1})_{i,j}, & \textup{otherwise}.\\
\end{array}\right.
\end{equation*}

\subsection{Implementation Details for Comparison}
We present the implementation detail for all other algorithms in our comparison. The code for NNLS is downloaded from~\url{http://www.math.nus.edu.sg/~mattohkc/NNLS.html}. For ALT, the given rank is set to $d=\lfloor1.25r\rfloor$ for synthetic data and 50 for four recommendation system datasets, and the maximum number of iterations is set to 100 and 50, respectively, and its code is downloaded from~\url{http://www.cs.utexas.edu/~cjhsieh/}. For LRMF, the code is downloaded from~\url{http://ttic.uchicago.edu/~ssameer/#code}, and the IRLS code is downloaded from~\url{http://www.math.ucla.edu/~wotaoyin/papers/improved_matrix_lq.html}. The given rank of LRMF and IRLS is set to the same value as our algorithms, e.g., $d=\lfloor1.25r\rfloor$ for synthetic data. In addition, the code for IRNN is downloaded from~\url{https://sites.google.com/site/canyilu/}. For LMaFit, the code is downloaded from~\url{http://lmafit.blogs.rice.edu/}, and the $\textup{S}_{p}$+$\ell_{p}$ code is downloaded from~\url{https://sites.google.com/site/feipingnie/publications}. Note that the regularization parameter $\mu$ is generally set to $\sqrt{\max(m,n)}$ as suggested in~\cite{candes:rpca}. All the experiments were conducted on an Intel Xeon E7-4830V2 2.20GHz CPU with 64G RAM.

\begin{figure}[t]
\begin{center}
\subfigure[Noisy matrices of size $100\!\times\!100$]{\includegraphics[width=0.432\columnwidth]{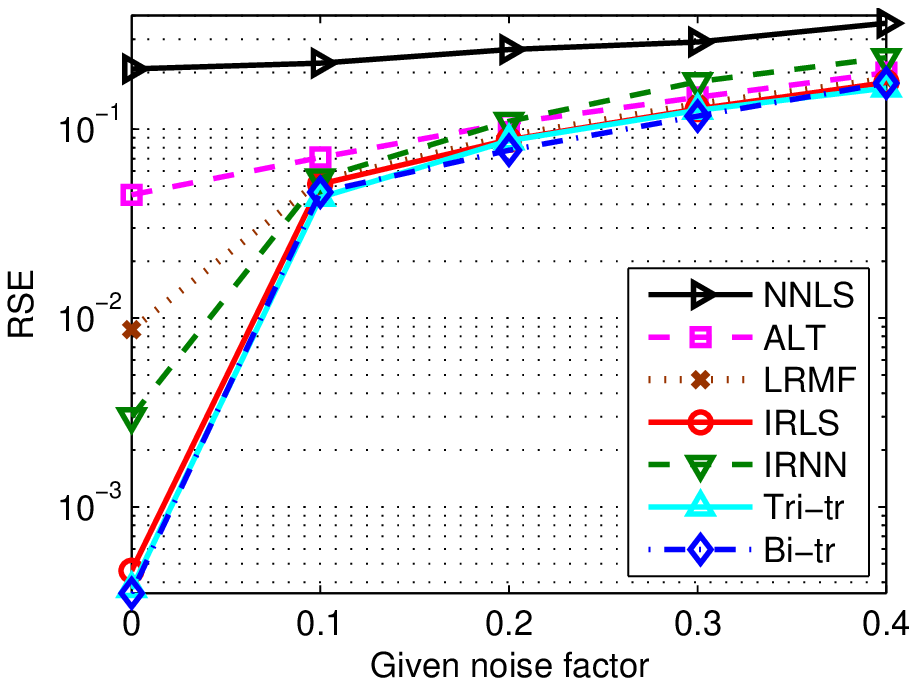}}\;\;
\subfigure[Noisy matrices of size $200\!\times\!200$]{\includegraphics[width=0.432\columnwidth]{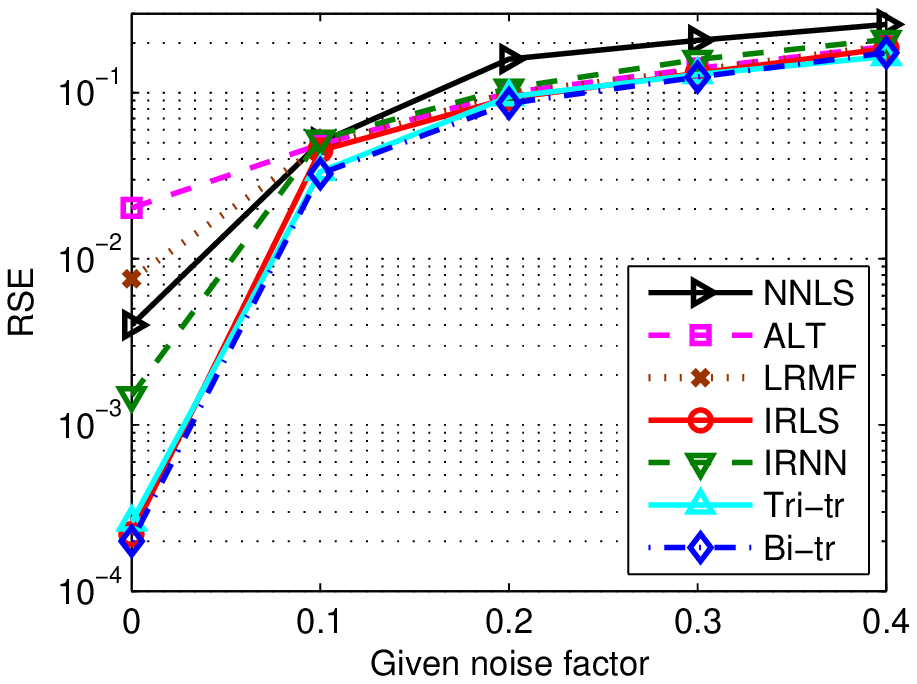}}
\caption{The recovery RSE results of NNLS, ALT, LRMF, IRLS, IRNN, and our Tri-tr and Bi-tr methods on noisy random matrices with different noise levels.}
\label{fig_sim2}
\end{center}
\end{figure}

\begin{figure}[t]
\begin{center}
{\includegraphics[width=0.50\columnwidth]{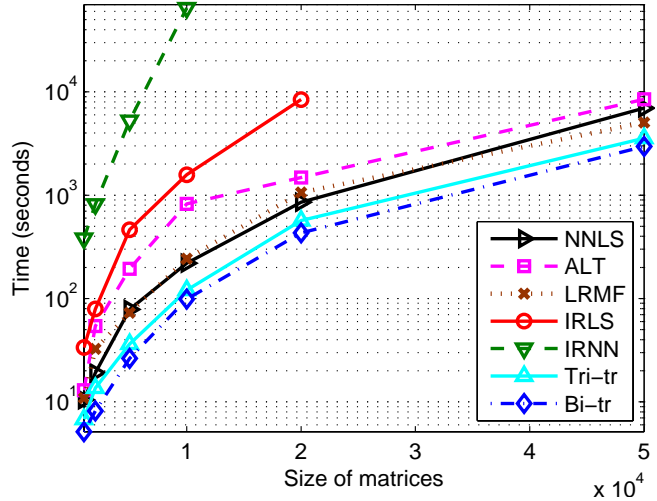}}
\caption{The running time of NNLS, ALT, LRMF, IRLS, IRNN, and our Tri-tr and Bi-tr methods as the size of noisy random matrices increases.}
\label{fig_sim3}
\end{center}
\end{figure}

\subsection{Synthetic Data}
In order to evaluate the robustness of our methods against noise, we generated the noisy input by the following
procedure~\cite{shang:snm}:
\begin{displaymath}
b=\mathcal{A}(X_{0}+nf*\Theta),
\end{displaymath}
where the elements of the noise matrix $\Theta$ are i.i.d.\ standard Gaussian random variables, and $nf$ is the given noise factor.

We also conduct some experiments on noisy matrices of size $100\times100$ or $200\times200$ with different noise factors, and report the RSE results of all algorithms with 20\% SR in Figure \ref{fig_sim2}. It is clear that ALT and LRMF have very similar performance, and usually outperform NNLS in terms of RSE. Moreover, the recovery performance of both our methods are similar to that of IRLS and IRNN, and they consistently perform better than the other methods. Moreover, we also present the running time of all those methods with 20\% SR as the size of noisy random matrices increases, as shown in Figure \ref{fig_sim3}. We can observe that the running time of IRNN and IRLS increases dramatically when the size of matrices increases, and they could not yield experimental results within 48 hours when the size of matrices is $50,000\times50,000$. On the contrary, both our methods are much faster than the other methods. This further justifies that both our methods have very good scalability and can address large-scale problems. As NNLS uses the PROPACK package~\cite{larsen:svd} to compute a partial SVD in each iteration, it usually runs slightly faster than ALT.

\begin{table}[t]
\centering
\caption{Characteristics of the recommendation datasets.}
\begin{tabular}{l|rrr}
\hline
{Dataset} & {\# row}    &{\# column} & {\# rating}\\
\hline
{MovieLens1M}   &6,040   &3,906    &1,000,209\\
{MovieLens10M}  &71,567  &10,681   &10,000,054\\
{MovieLens20M}  &138,493 &27,278   &20,000,263\\
{Netflix}       &480,189 &17,770   &100,480,507\\
\hline
\end{tabular}
\label{tab_sim1}
\end{table}

\begin{table}[t]
\centering
\caption{Regularization parameter settings for different algorithms.}
\begin{tabular}{l|cccccc}
\hline
\multirow{2}{*}{Datasets} &{NNLS}  &{ALT} &{LRMF}  &{LMaFit}  &{IRLS} &{Ours}\\
                          &{$\mu$} &{$\lambda$} &{$\lambda$} &{$\lambda$}  &{$\lambda$} &{$\mu$}\\
\hline
{MovieLens1M}     &1.70  &50   &5     &--    &1e-6     &100\\
{MovieLens10M}    &4.80  &100  &5     &--    &1e-6     &100\\
{MovieLens20M}    &6.63  &150  &5     &--    &1e-6     &100\\
{Netflix}         &16.76 &150  &5     &--    &1e-6     &100\\
\hline
\end{tabular}
\label{tab_sim2}
\end{table}

\begin{figure}[t]
\begin{center}
{\includegraphics[width=0.65\columnwidth]{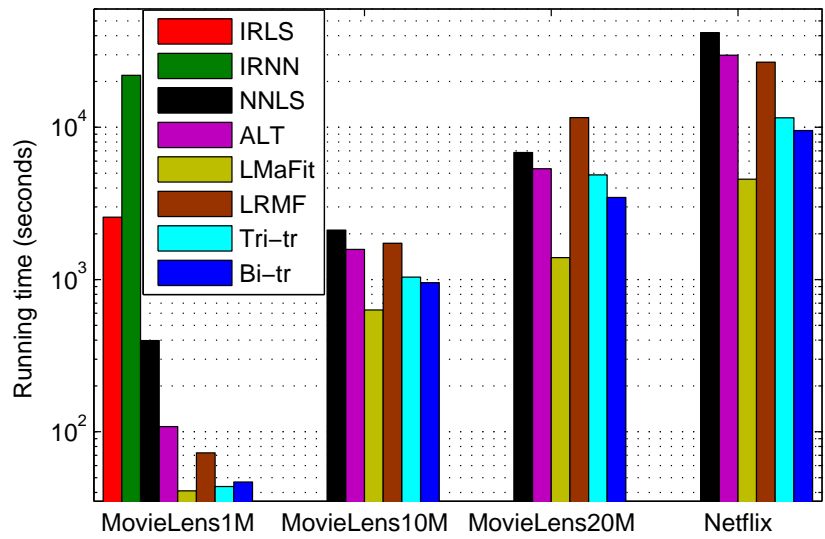}}
\caption{Running time (seconds) for comparison on the four data sets (Best viewed in color).}
\label{fig_sim4}
\end{center}
\end{figure}

\subsection{Real-World Recommendation System Data}
In this part, we present the detailed descriptions for four real-world recommendation system data sets and the detailed regularization parameter settings for different algorithms, as shown in Table \ref{tab_sim1} and Table \ref{tab_sim2}, respectively. For IRNN, the regularization parameter $\lambda$ is dynamically decreased by $\lambda_{k}\!=\!0.7\lambda_{k-1}$, where $\lambda_{0}\!=\!10\|\mathcal{P}_{\Omega}(D)\|_{\infty}$. We also report the running time of all these algorithms on the four data sets, as shown in Figure \ref{fig_sim4}, from which it is clear that both our methods are much faster than the other methods, except LMaFit. Compared with LMaFit, both our methods remain competitive in speed, but achieve much lower RMSE than LMaFit on all the data sets (as shown in Figure~2 in the main paper).


\end{document}